\theoremstyle{plain}
\newtheorem{theorem}{Theorem}[section]
\newtheorem{proposition}[theorem]{Proposition}
\theoremstyle{definition}
\theoremstyle{remark}
\title{Unknown-Aware Domain Adversarial Learning \\for Open-Set Domain Adaptation}
\author{%
  JoonHo Jang\\
   KAIST \\
   \texttt{adkto8093@kaist.ac.kr} \\
   \And
   Byeonghu Na \\
   KAIST \\
   \texttt{wp03052@kaist.ac.kr} \\
   \And
   DongHyeok Shin \\
   KAIST \\
   \texttt{tlsehdgur0@kaist.ac.kr} \\
   \And
   Mingi Ji \thanks{now at Google (mingiji@google.com)} \\
   KAIST \\
   \texttt{qwertgfdcvb@kaist.ac.kr} \\
   \And
   Kyungwoo Song \\
   University of Seoul \\
   \texttt{kyungwoo.song@uos.ac.kr} \\
   \And
   Il-Chul Moon \\
   KAIST, Summary.AI \\
   \texttt{icmoon@kaist.ac.kr} \\
}
\begin{document}

\maketitle

\renewcommand{\thefootnote}{\fnsymbol{footnote}}

\begin{abstract}
Open-Set Domain Adaptation (OSDA) assumes that a target domain contains unknown classes, which are not discovered in a source domain. Existing domain adversarial learning methods are not suitable for OSDA because distribution matching with \textit{unknown} classes leads to negative transfer. Previous OSDA methods have focused on matching the source and the target distribution by only utilizing \textit{known} classes. However, this \textit{known}-only matching may fail to learn the target-\textit{unknown} feature space. Therefore, we propose Unknown-Aware Domain Adversarial Learning (UADAL), which \textit{aligns} the source and the target-\textit{known} distribution while simultaneously \textit{segregating} the target-\textit{unknown} distribution in the feature alignment procedure. We provide theoretical analyses on the optimized state of the proposed \textit{unknown-aware} feature alignment, so we can guarantee both \textit{alignment} and \textit{segregation} theoretically. Empirically, we evaluate UADAL on the benchmark datasets, which shows that UADAL outperforms other methods with better feature alignments by reporting state-of-the-art performances\footnote{The code will be publicly available on \url{https://github.com/JoonHo-Jang/UADAL}.}.
\end{abstract}

\section{Introduction}

\textit{Unsupervised Domain Adaptation} (UDA) means leveraging knowledge from a labeled source domain to an unlabeled target domain \cite{borgwardt2006integrating, ben2010theory, baktashmotlagh2014domain, long2015learning,ganin2016domain}. 
This adaptation implicitly assumes the source and the target data distributions, where each distribution is likely to be drawn from different distributions, i.e., \textit{domain shift} (see Figure \ref{fig:preli_sourceonly}). 
Researchers have approached the modeling of two distributions by statistical matching \cite{pan2010domain,tzeng2014deep,long2015trans, long2016unsupervised}, or domain adversarial learning \cite{ganin2015unsupervised,tzeng2017adversarial,long2017deep}. Among them, \textit{domain adversarial learning} has been successful in matching between the source and the target distributions via feature alignment, so the model can accomplish the domain-invariant representations.

There is another dynamic aspect of the source and the target distribution. In a realistic scenario, these distributions may expand a class set, which is called \textit{unknown} classes. This expansion creates a field of Open-Set Domain Adaptation (OSDA) \cite{saito2018open,liu2019separate}. 
Existing adversarial learning methods of UDA have limitations to solve OSDA because matching the source and the target distribution with the \textit{unknown} classes may lead to the negative transfer \cite{fang2020open} due to the class set mismatch (see Figure \ref{fig:preli_dann}).

Previous OSDA methods focused on matching between the source and the target domain only embedded in the \textit{known} class set via domain adversarial learning \cite{saito2018open,liu2019separate}. 
However, this \textit{known-only} feature alignment may fail to learn the target-unknown feature space because of no alignment signal from the target-unknown instances. 
Therefore, a classifier is not able to learn a clear decision boundary for \textit{unknown} classes because the target-unknown instances are not segregated enough in the aligned feature space (see Figure \ref{fig:preli_sta}).
On the other hand, some OSDA methods propose to learn intrinsic target structures by utilizing self-supervised learning without distribution matching \cite{DANCE2020NIPS,li2021domain}. However, this weakens the model performance under large domain shifts. 
Therefore, in order to robustly solve OSDA, distribution matching via the domain adversarial learning is required, and the class set mismatch should be resolved to prevent the negative transfer, simultaneously.

    \begin{figure*}
        \centering
        \begin{subfigure}[h]{0.232\textwidth}
            \centering
            \includegraphics[width=\textwidth]{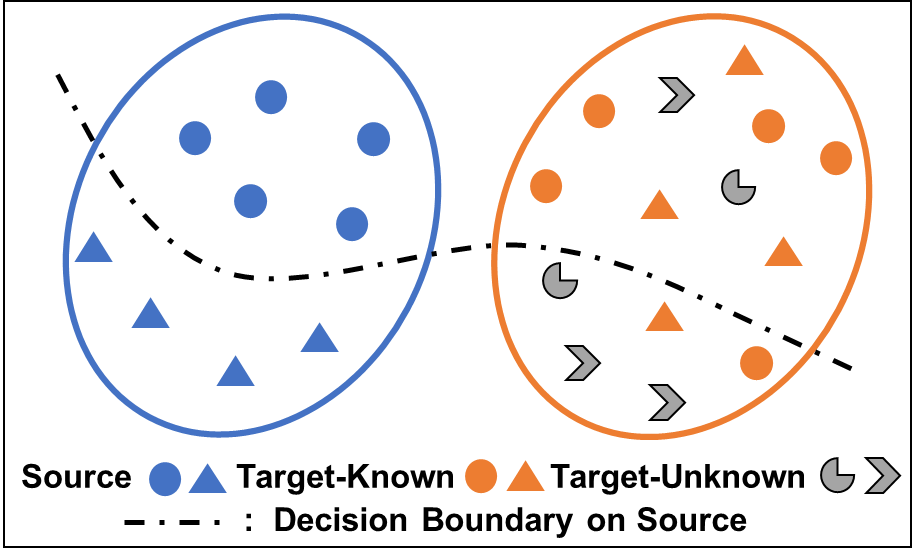}
            \caption{{\small Domain Shift}}    
            \label{fig:preli_sourceonly}
        \end{subfigure}
\hspace{0.01\textwidth}
        \begin{subfigure}[h]{0.232\textwidth}  
            \centering 
            \includegraphics[width=\textwidth]{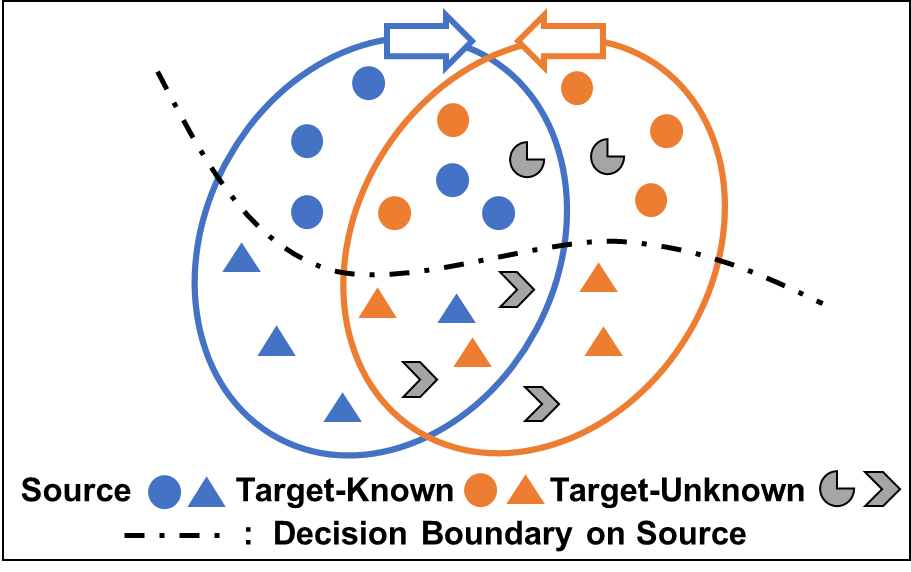}
            \caption{{\small DANN \cite{ganin2016domain}}}   
            \label{fig:preli_dann}
        \end{subfigure}
\hspace{0.01\textwidth}
        \begin{subfigure}[h]{0.232\textwidth}   
            \centering 
            \includegraphics[width=\textwidth]{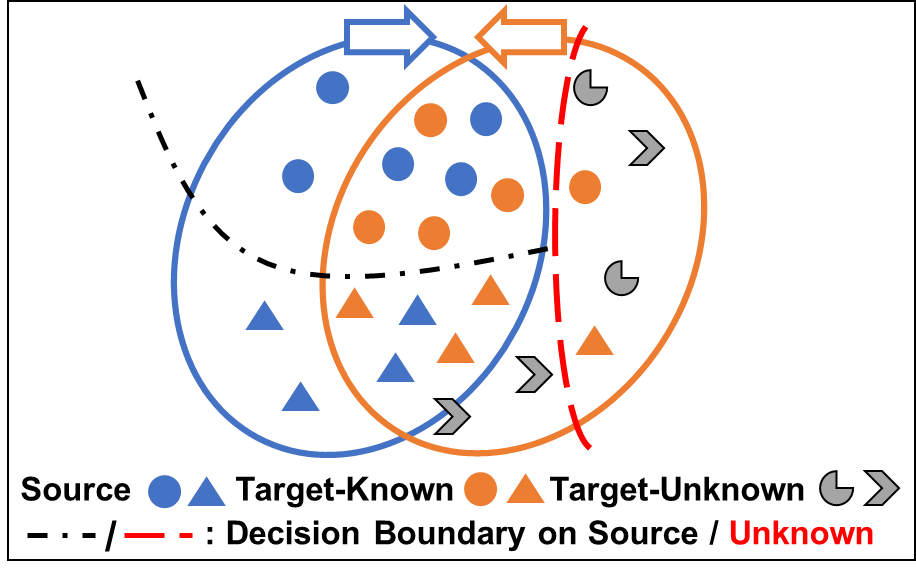}
            \caption{{\small STA \cite{liu2019separate}}}   
            \label{fig:preli_sta}
        \end{subfigure}
\hspace{0.01\textwidth}
        \begin{subfigure}[h]{0.232\textwidth}   
            \centering 
            \includegraphics[width=\textwidth]{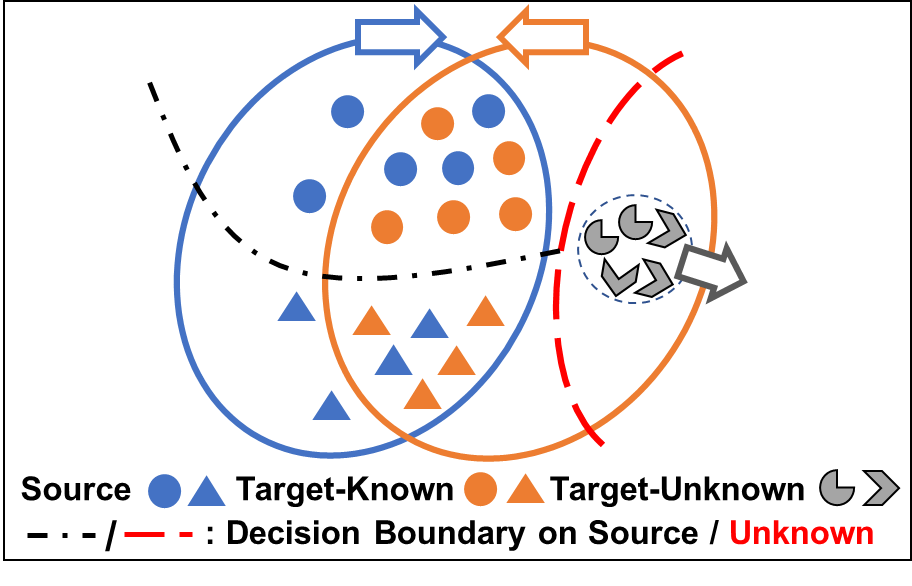}
            \caption{{\small UADAL (ours)}}   
            \label{fig:preli_mymodel}
        \end{subfigure}
  \caption{\small{The feature distributions from the source and the target domain, with the decision boundaries. The blue/orange/gray arrows represent the alignment signal on the source/target-known/target-unknown domain.}}   
        \label{fig:preli_flow}
    \end{figure*}
This paper proposes a new domain adversarial learning for OSDA, called Unknown-Aware Domain Adversarial Learning (UADAL). 
Specifically, we aim at enforcing the target-unknown features to \textit{move apart} from the source and the target-known features, while aligning the source and the target-known features. We call these alignments as the \textit{segregation} of the target-unknown features (gray-colored arrow in Figure \ref{fig:preli_mymodel}). 
Although this distribution \textit{segregation} is an essential part of OSDA, the segregation has been only implicitly modeled because of its identification on the target-unknown instances. 
UADAL is the first explicit mechanism to simultaneously align and segregate the three sets (source, target-known, and target-unknown instances) via the domain adversarial learning. 
Therefore, the proposed \textit{unknown-aware} feature alignment enables a classifier to learn a clear decision boundary on both \textit{known} and \textit{unknown} class in the feature space.

UADAL consists of three novel mechanisms. 
First, we propose a new domain discrimination loss to engage the target-unknown information. 
Second, we formulate a sequential optimization to enable the unknown-aware feature alignment, which is suitable for OSDA. 
We demonstrate that the optimized state of the proposed alignment is theoretically guaranteed. 
Third, we also suggest a posterior inference to effectively recognize target-\textit{known}/\textit{unknown} information without any thresholding.

\section{Preliminary}
\subsection{Open-Set Domain Adaptation}
This section provides a formal definition of OSDA. 
The fundamental properties are two folds: 1) the different data distributions of a source domain, $p_{s}(x, y)$, and a target domain, $p_{t}(x, y)$; and 2) the additional classes of the target domain, which were not observed in the source domain. 
Specifically, we define the source and the target datasets as $\chi_{s}=\{(x_{s}^{i}, y_{s}^{i})\}_{i=1}^{n_{s}}$ and $\chi_{t}=\{(x_{t}^{i}, y_{t}^{i})\}_{i=1}^{n_{t}}$, respectively. $y_{t}^{i}$ is not available at training under the UDA setting.
Additionally, we designate $\mathcal{C}_{s}$ to be the source class set (a.k.a. a shared-\textit{known} class), and $\mathcal{C}_{t}$ to be the target class set, i.e., $y_{t}^{i}\in\mathcal{C}_{t}$. 
OSDA dictates $\mathcal{C}_{s} \subset \mathcal{C}_{t}$ \cite{saito2018open}. $\mathcal{C}_{t}\setminus\mathcal{C}_{s}$ is called \textit{unknown} classes. 
In spite that there can be multiple \textit{unknown} classes, we consolidate $\mathcal{C}_{t}\setminus\mathcal{C}_{s}$ as $y_{unk}$ to be a single \textit{unknown} class, due to no knowledge on $\mathcal{C}_{t}\setminus\mathcal{C}_{s}$.

The learning objectives for OSDA become both 1) the optimal class classification in $\chi_{t}$ if a target instance belongs to $\mathcal{C}_{s}$, and 2) the optimal \textit{unknown} classification if a target instance belongs to $\mathcal{C}_{t}\setminus\mathcal{C}_{s}$. This objective is formulated as follows, with a classifier $f$ and the cross-entropy loss function $\mathcal{L}_{ce}$,
\begin{align}
\min_{f} \mathbb E_{p_{t}(x,y)} & [\textbf{1}_{y_t\in\mathcal{C}_{s}}  \mathcal{L}_{ce}(f(x_t), y_t) + \textbf{1}_{y_t\in \{\mathcal{C}_{t}\setminus\mathcal{C}_{s}\}}  \mathcal{L}_{ce}(f(x_t), y_{unk})].
\end{align}

\subsection{Adversarial Domain Adaptation}   
We first step back from OSDA to Closed-set DA (CDA) with $\mathcal{C}_{t}\setminus\mathcal{C}_{s}=\phi$, where the domain adversarial learning is widely used.
Domain-Adversarial Neural Network (DANN) \cite{ganin2015unsupervised} proposes, 
\begin{equation}
\min_{f, G}\max_{D}\{\mathbb E_{p_{s}(x,y)} [\mathcal{L}_{ce} (f(G(x_s)),y_s)]-\mathcal{L}_{d}\}.
\end{equation}
This objective assumes $G(x)$ to be a feature extractor that learns the domain-invariant features, enabled by the minimax game with a domain discriminator $D$, with respect to $\mathcal{L}_{d}$, as below.
\begin{equation}
\mathcal{L}_{d} = \mathbb E_{p_{s}(x)}\left[ -\log D_{s}(G(x))\right]+\mathbb E_{p_{t}(x)}[-\log D_{t}(G(x))]
\end{equation}
The adversarial framework adapts $G(x)$ toward the indistinguishable feature distributions between the source and the target domain by the minimax game on $D(G(x))=[D_{s}(G(x)), D_{t}(G(x))]$. Here, $D$ is set up to have a two-dimensional output to indicate either source or target domain, denoted as $D_s$ and $D_t$, respectively. 
Given these binarized outputs of $D$, this formulation is not appropriate to differentiate the target-known and the target-unknown features in OSDA. 
DANN enforces to include the target-\textit{unknown} features in the distribution matching, which leads to performance degradation by negative transfer \cite{fang2020open}. 
Figure \ref{fig:preli_dann} represents this undesired alignment of \textit{unknown} classes. 

\textbf{Separate To Adapt (STA)} \cite{liu2019separate} proposes a weighted domain adversarial learning, to resolve the negative transfer by utilizing a weighting scheme. Therefore, STA modifies $\mathcal{L}_{d}$ as follows, 
\begin{equation}\label{preli:sta}
\mathcal{L}_{d}^{\text{STA}} =\mathbb E_{p_{s}(x)}\left[ -\log D_{s}(G(x))\right]+\mathbb E_{p_{t}(x)}[-w_{x}\log D_{t}(G(x))], 
\end{equation}
where $w_{x}$ represents a probability of an instance $x$ belonging to the shared-known classes. 
$\mathcal{L}_{d}^{\text{STA}}$ enables the domain adversarial learning to align the features from the source and the most likely target-known instances with the high weights. 
However, the feature extractor, $G$, is not able to move apart the target-unknown instances due to lower weights, i.e., no alignment signals (see Figure \ref{fig:preli_sta}). 
\textcolor{black}{STA modeled this known-only feature alignment by assuming the unknown feature would be implicitly learned through the estimation on $w_x$. Besides learning $w_x$, the target-unknown instances cannot contribute to the training of STA because their loss contribution will be limited by lower $w_x$, in the second term of Eq. (\ref{preli:sta}). 
Therefore, STA does not take information from the target-unknowns, and later Section \ref{sec_ablation} empirically shows that the target-unknown segregation of STA is limited.}

\subsection{Comparison to Recent OSDA Research}
In terms of the domain adversarial learning, in addition to STA, 
OSBP \cite{saito2018open} utilizes a classifier to predict the target instances to the pre-determined threshold, and trains the feature extractor to deceive the classifier for aligning to \textit{known} classes or rejecting as \textit{unknown} class. 
However, their recognition on \textit{unknown} class only depends on the threshold value, without considering the data instances.
PGL \cite{luo2020progressive} introduces progressive graph-based learning to regularize the class-specific manifold, while jointly optimizing the feature alignment via domain adversarial learning. However, their adversarial loss includes all instances of the target domain, which is critically weakened by the negative transfer.

On the other hand, self-supervised learning approaches have been proposed recently, in order to exploit the intrinsic structure of the target domain \cite{ROS2020ECCV,DANCE2020NIPS, li2021domain}. However, these approaches do not have any feature alignments between the source and the target domain, which leads to performance degradation under significant domain shifts. There is also a notable work, OSLPP \cite{wang2021progressively} optimizing projection matrix toward a common subspace to class-wisely align the source and the target domain. 
However, the optimization requires pair-wise distance calculation, which results in a growing complexity of $O(n^2)$ by the $n$ data instances. It could be limited to the large-scale domain.
We provide detailed comparisons and comprehensive literature reviews in Appendix \ref{sup-literature_review}.

\begin{figure*}
    \centering
    \begin{subfigure}[t!]{0.37\textwidth} 
        \centering
        \includegraphics[width=\textwidth]{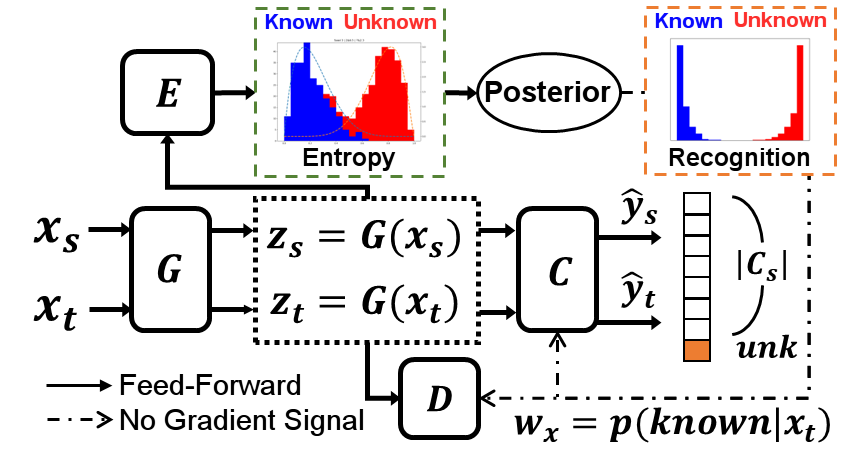}
        \caption{The overall structure of UADAL}
        \label{fig:method_overview_a}
    \end{subfigure}
\hspace{0.2em}
    \begin{subfigure}[t!]{0.61\textwidth} 
        \centering
        \includegraphics[width=\textwidth]{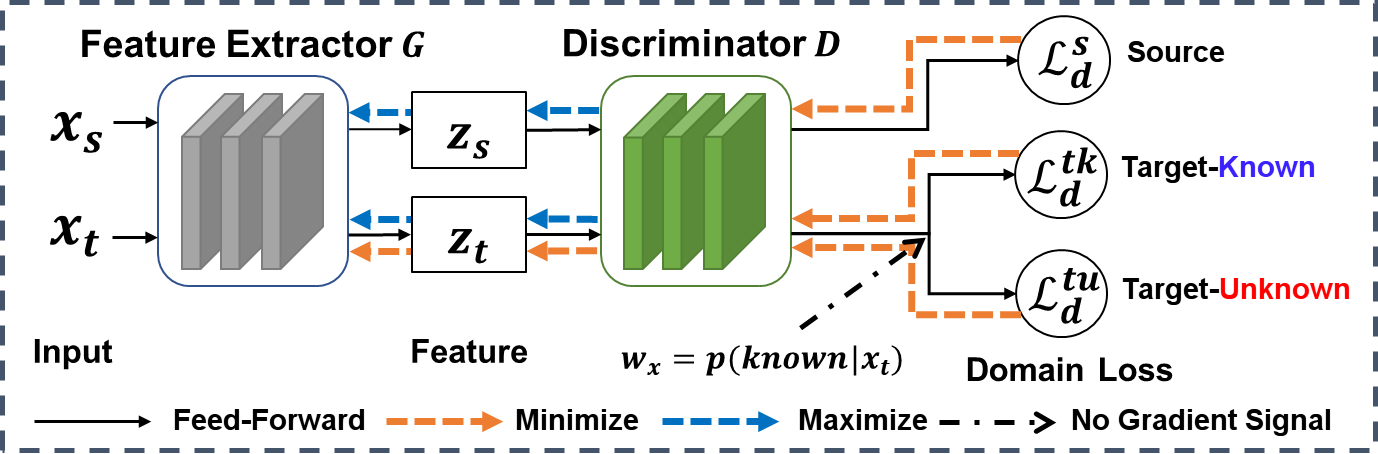}
        \caption{The optimization procedure of UADAL}
        \label{fig:method_overview_b}
    \end{subfigure}
\caption{\small{Overview of the proposed Unknown-Aware Domain Adversarial Learning (UADAL) approach.}} %
    \label{fig:method_overview}
\end{figure*}

\section{Methodology}
\subsection{Overview of UADAL}
Figure \ref{fig:method_overview_a} illustrates the neural network compositions and the information flow. Our model consists of four networks: 1) a feature extractor, $G$; 2) a domain discriminator, $D$; 3) an open-set recognizer, $E$; and 4) a class classifier network, $C$. 
The networks $G$, $D$, $E$, and $C$ are parameterized by $\theta_{g}, \theta_{d}, \theta_{e}$, and $\theta_{c}$, respectively.  
First, we propose a new domain discrimination loss, and we formulate the sequential optimization problem, followed by theoretical analyses. 
In order to recognize target-\textit{unknown} information, we introduce a posterior inference method, followed by open-set classification. 

\subsection{Sequential Optimization Problem for Unknown-Aware Feature Alignment}
\textbf{Model Structure} We assume that we identify three domain types, which are 1) the source (\textit{s}), 2) the target-known (\textit{tk}), and 3) and target-unknown (\textit{tu}). 
As the previous works such as STA and DANN utilize a two-way domain discriminator with ($D_s$, $D_t$), they can treat only \textit{s} and \textit{tk} (or \textit{t}).
However, a domain discriminator in OSDA should be able to handle \textit{tu} information for \textit{segregation}. Therefore, we designed the discriminator dimension to be three ($D_s$, $D_{tk}$, $D_{tu}$), as follows,
\begin{align}\label{d_output}
D(G(x)) = [D_{s}(G(x)), D_{tk}(G(x)), D_{tu}(G(x))].
\end{align}
Given these three dimensions of $D$, we propose new domain discrimination losses, $\mathcal{L}_{d}^{s}$ and $\mathcal{L}_{d}^{t}$, as
\begin{align}
\begin{split}\label{eqn:loss_d_s}
    \mathcal{L}_{d}^{s}(\theta_{g}, \theta_{d}) ={}& \mathbb E_{p_{s}(x)}\left[ - \log D_{s}(G(x))\right],
\end{split}\\
\begin{split}\label{eqn:loss_d_t}
    \mathcal{L}_{d}^{t}(\theta_{g}, \theta_{d}) ={}& \mathbb E_{p_{t}(x)}[\ -w_{x} \log D_{tk}(G(x)) - (1-w_{x}) \log D_{tu}(G(x))],
\end{split}
\end{align}
where $w_{x}:=p(\textit{known}|x)$, named as \textit{open-set recognition}, is the probability of a target instance, $x$, belonging to a shared-\textit{known} class. 
We introduce the estimation of this probability, $w_{x}$, in the section \ref{section_bmm} later. 
From $\mathcal{L}_{d}^{t}$ in Eq. (\ref{eqn:loss_d_t}), our modified $D$ becomes to be able to discriminate \textit{tk} and \textit{tu}, explicitly.

Our new unknown awareness of $D$ allows us to segregate the target-unknown features via the domain adversarial learning framework. 
Firstly, we decompose $\mathcal{L}_{d}^{t}(\theta_{g}, \theta_{d})$ into $\mathcal{L}_{d}^{tk}(\theta_{g}, \theta_{d})$ and $\mathcal{L}_{d}^{tu}(\theta_{g}, \theta_{d})$,
\begin{equation} \label{eqn:dt_decompose}
\mathcal{L}_{d}^{t}(\theta_{g}, \theta_{d}) = \mathcal{L}_{d}^{tk}(\theta_{g}, \theta_{d})  +  \mathcal{L}_{d}^{tu}(\theta_{g}, \theta_{d}),
\end{equation}
\begin{align}
\begin{split} \label{eqn:dt_decompose_tk}
\mathcal{L}_{d}^{tk}(\theta_{g}, \theta_{d}) := {}& \lambda_{tk}\mathbb E_{p_{tk}(x)}\left[ - \log D_{tk}(G(x))\right],
\end{split}\\
\begin{split} \label{eqn:dt_decompose_tu}
\mathcal{L}_{d}^{tu}(\theta_{g}, \theta_{d}) :={}& \lambda_{tu}\mathbb E_{p_{tu}(x)}\left[ - \log D_{tu}(G(x))\right],
\end{split}
\end{align}
where $ p_{tk}(x):=p_{t}(x|\textit{known})$, $p_{tu}(x):=p_{t}(x|\textit{unknown})$, $\lambda_{tk}:=p(\textit{known})$, and $\lambda_{tu}:=p(\textit{unknown})$. 
The derivation of Eq. (\ref{eqn:dt_decompose}) is based on $p_t(x)=\lambda_{tk} p_{tk}(x)+\lambda_{tu} p_{tu}(x)$, which comes from the law of total probability (Details in Appendix \ref{sup-lossDecompose}). 
Therefore, this decomposition of $\mathcal{L}_{d}^{t}$ into $\mathcal{L}_{d}^{tu}$ and $\mathcal{L}_{d}^{tk}$ enables the different treatments on \textit{tk} and \textit{tu} feasible. 
The three-way discriminator and its utilization in Eq. (\ref{eqn:dt_decompose_tk})-(\ref{eqn:dt_decompose_tu}) becomes the unique contribution from UADAL. 

\textbf{Optimization} Our goal of this unknown-aware domain discrimination is to {align} the features from the source and the target-known instances while simultaneously \textit{segregating} the target-unknown features. 
To achieve this goal, we propose a new sequential optimization problem w.r.t. $G$ and $D$. Based on the losses, $\mathcal{L}_{d}^{s}(\theta_{g}, \theta_{d})$, $\mathcal{L}_{d}^{tk}(\theta_{g}, \theta_{d})$, and $\mathcal{L}_{d}^{tu}(\theta_{g}, \theta_{d})$, we formulate the optimization problem as,
\begin{align}
\begin{split}\label{eqn:max_D}
\min_{\theta_{d}}\ \mathcal{L}_{D}(\theta_{g}, \theta_{d})&=\mathcal{L}_{d}^{s}(\theta_{g}, \theta_{d}) +\ \mathcal{L}_{d}^{tk}(\theta_{g}, \theta_{d}) + \mathcal{L}_{d}^{tu}(\theta_{g}, \theta_{d}),
\end{split} \\
\begin{split}
\label{eqn:max_G}
\max_{\theta_{g}}\ \mathcal{L}_{G}(\theta_{g}, \theta_{d})&= \mathcal{L}_{d}^{s}(\theta_{g}, \theta_{d})  + \ \mathcal{L}_{d}^{tk}(\theta_{g}, \theta_{d}) - \mathcal{L}_{d}^{tu}(\theta_{g}, \theta_{d}).
\end{split}
\end{align}
This alternating objective by Eq. (\ref{eqn:max_D}) and Eq. (\ref{eqn:max_G}) is equivalent to the adversarial domain adaptation models \cite{ganin2015unsupervised, liu2019separate} to learn the domain-invariant features. 
Unlike the previous works, however, Eq. (\ref{eqn:max_D}) represents that we train the domain discriminator, $D$, to classify an instance as either source, target-known, or target-unknown domain. 
In terms of the feature extractor, $G$, we propose Eq. (\ref{eqn:max_G}) to maximize the domain discrimination loss for the source and the target-known domain while minimizing the loss for the target-unknown domain. From the different signals on $\mathcal{L}_{d}^{tk}$ and $\mathcal{L}_{d}^{tu}$, we only treat the adversarial effects on $\mathcal{L}_{d}^{s}$ and $\mathcal{L}_{d}^{tk}$. Meanwhile, minimizing $\mathcal{L}_{d}^{tu}$ provides for the network $G$ to learn the discriminative features on $tu$.
Eventually, $G$ and $D$ align the source and target-known features and segregate the target-unknown features from the source and the target-known features (the optimization details in Figure \ref{fig:method_overview_b}). 
We provide the theoretic analysis of the optimized state of the proposed feature alignment in the next subsection, which is unexplored in the OSDA field.

\subsection{Theoretic Analysis of Sequential Optimization}
\label{sec:theoretic}
This section provides a theoretic discussion on the proposed feature alignment by our sequential optimization, w.r.t. Eq. (\ref{eqn:max_D})-(\ref{eqn:max_G}). Since our ultimate goal is training $G$ to learn unknown-aware feature alignments, 
we treat $G$ as a \textit{leader} and $D$ as a \textit{follower} in the game theory. We first optimize Eq. (\ref{eqn:max_D}) to find $D^{*}$ with a fixed $G$. The optimal $D^{*}$ given the fixed $G$ results in the below output :
\begin{align}
\label{eqn:optimal_D}
D^{*}(z)= \Big[\frac{p_{s}(z)}{2p_{avg}(z)}, \frac{\lambda_{tk} p_{tk}(z)}{2p_{avg}(z)}, \frac{\lambda_{tu} p_{tu}(z)}{2p_{avg}(z)} \Big], 
\end{align}
where $p_{avg}(z)=(p_{s}(z) + \lambda_{tk} p_{tk}(z) + \lambda_{tu} p_{tu}(z))/2$ (Details in Appendix \ref{sup-appendidx_optim_D}). Here, $z\in\mathcal{Z}$ stands for the feature space from $G$, i.e., $p_{d}(z) = \{G(x;\theta_{g})|x \sim p_{d}(x)\}$
where $d$ is $s$, $tk$, or $tu$.  
Given $D^{*}$ with the optimal parameter $\theta^{*}_{d}$, we optimize $G$ in Eq. (\ref{eqn:max_G}). 
Here, we show that the optimization w.r.t. $G$ is equivalent to the weighted summation of KL Divergences, by Theorem \ref{theorem1_optim_G}.
\begin{theorem}
(Proof in Appendix \ref{sup-appendix_optim_G}) Let $\theta^{*}_{d}$ be the optimal parameter of $D$ by optimizing Eq. (\ref{eqn:max_D}). Then, $- \mathcal{L}_{G}(\theta_{g}, \theta^{*}_{d})$ can be expressed as, with a constant $C_{0}$,
\begin{equation}
\begin{split}
 -\mathcal{L}_{G}(\theta_{g},\theta^{*}_{d})=D_{KL}( p_{s} \Vert p_{avg})+\lambda_{tk}D_{KL}(p_{tk}\Vert p_{avg}) - \lambda_{tu} D_{KL}( p_{tu} \Vert p_{avg} )+C_{0}.
\end{split}\label{eqn:optimal_G}
\end{equation}\label{theorem1_optim_G}
\end{theorem}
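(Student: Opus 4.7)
The plan is to substitute the closed-form optimum $D^{*}$ from Eq.~(\ref{eqn:optimal_D}) directly into the three loss terms composing $\mathcal{L}_{G}$, then recognize that each resulting expectation is (up to an additive constant) a KL divergence between one of $p_{s}, p_{tk}, p_{tu}$ and $p_{avg}$. Throughout I will work in the feature space $\mathcal{Z}$ pushed forward by $G$, so $\mathbb{E}_{p_{d}(x)}[\,\cdot\,(G(x))\,]=\mathbb{E}_{p_{d}(z)}[\,\cdot\,(z)\,]$ for $d\in\{s,tk,tu\}$. This is the same change of variables used to derive $D^{*}$ in Appendix~\ref{sup-appendidx_optim_D}, so no additional justification is needed beyond what the previous step gives.

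First, I would compute each of the three plug-in terms separately. Using Eq.~(\ref{eqn:optimal_D}),
\begin{align*}
-\log D_{s}^{*}(z) &= \log\frac{2 p_{avg}(z)}{p_{s}(z)}, \\
-\log D_{tk}^{*}(z) &= \log\frac{2 p_{avg}(z)}{\lambda_{tk}\, p_{tk}(z)}, \\
-\log D_{tu}^{*}(z) &= \log\frac{2 p_{avg}(z)}{\lambda_{tu}\, p_{tu}(z)}.
\end{align*}
Taking expectations against $p_{s}$, $p_{tk}$, and $p_{tu}$ respectively gives $\mathcal{L}_{d}^{s}(\theta_{g},\theta_{d}^{*}) = -D_{KL}(p_{s}\Vert p_{avg})+\log 2$, $\mathcal{L}_{d}^{tk}(\theta_{g},\theta_{d}^{*}) = \lambda_{tk}\bigl[-D_{KL}(p_{tk}\Vert p_{avg})+\log 2 - \log\lambda_{tk}\bigr]$, and a parallel expression for $\mathcal{L}_{d}^{tu}$ with the roles of $tk$ and $tu$ swapped. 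The negative sign in front of $D_{KL}$ appears because the log ratio inside the expectation is $p_{avg}/p_{d}$ rather than $p_{d}/p_{avg}$, which I would flag explicitly to avoid a sign slip.

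Next, I would assemble $\mathcal{L}_{G}(\theta_{g},\theta_{d}^{*}) = \mathcal{L}_{d}^{s}+\mathcal{L}_{d}^{tk}-\mathcal{L}_{d}^{tu}$ by adding these three pieces with the signs dictated by Eq.~(\ref{eqn:max_G}). Negating the whole expression yields
\begin{equation*}
-\mathcal{L}_{G}(\theta_{g},\theta_{d}^{*}) = D_{KL}(p_{s}\Vert p_{avg}) + \lambda_{tk} D_{KL}(p_{tk}\Vert p_{avg}) - \lambda_{tu} D_{KL}(p_{tu}\Vert p_{avg}) + C_{0},
\end{equation*}
where $C_{0} := -\log 2 - \lambda_{tk}(\log 2 - \log\lambda_{tk}) + \lambda_{tu}(\log 2 - \log\lambda_{tu})$ collects all the terms independent of $\theta_{g}$. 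Because $\lambda_{tk}$ and $\lambda_{tu}$ are prior mixture weights that do not depend on $G$, $C_{0}$ is indeed constant in $\theta_{g}$, which is all that the statement requires.

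Essentially no step is technically hard; the only place where a careless reader could slip is in the bookkeeping of signs and of the $\log 2$, $\log\lambda_{tk}$, $\log\lambda_{tu}$ constants that fall out when one separates $\log(2 p_{avg}/(\lambda\,p))$ into $\log(p_{avg}/p)+\log 2 - \log\lambda$. I would therefore present the three substitutions in parallel and defer the constant consolidation into a single display at the end, so the asymmetric sign on the $p_{tu}$ term (inherited from the $-\mathcal{L}_{d}^{tu}$ in Eq.~(\ref{eqn:max_G})) is visually unambiguous. No result outside Eq.~(\ref{eqn:max_G}) and Eq.~(\ref{eqn:optimal_D}) is needed.
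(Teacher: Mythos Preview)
Your proposal is correct and follows essentially the same approach as the paper: substitute the closed-form $D^{*}$ from Eq.~(\ref{eqn:optimal_D}) into each of $\mathcal{L}_{d}^{s},\mathcal{L}_{d}^{tk},\mathcal{L}_{d}^{tu}$, recognize each expectation as a KL divergence plus additive constants, and assemble with the signs from Eq.~(\ref{eqn:max_G}). Your constant $C_{0}=-\log 2-\lambda_{tk}(\log 2-\log\lambda_{tk})+\lambda_{tu}(\log 2-\log\lambda_{tu})$ agrees (after using $\lambda_{tk}+\lambda_{tu}=1$) with the paper's $C_{0}=-2\lambda_{tk}\log 2+\lambda_{tk}\log\lambda_{tk}-\lambda_{tu}\log\lambda_{tu}$.
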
\vspace{-1.5em}
We note that $p_{s}, p_{tk}$, and $p_{tu}$ are the feature distribution of each domain, mapped by $G$, respectively. Therefore, we need to minimize Eq. (\ref{eqn:optimal_G}) to find the optimal $G^{*}$. Having observed Theorem \ref{theorem1_optim_G},
Eq. (\ref{eqn:optimal_G}) requires $p_{tu}$ to have a higher deviation from the average distribution, $p_{avg}$, while matching towards $p_{s}\approx p_{avg}$ and $p_{tk}\approx p_{avg}$. From this optimization procedure, we obtain the domain-invariant features over the source and the target-known domain, while segregating the target-unknowns, which is formalized as $p_{s}\approx p_{tk}$ and $p_{tu}\leftrightarrow \{ p_{tk}, p_{s} \}$.
We show that minimizing Eq. (\ref{eqn:optimal_G}) does not lead to the negative infinity by the third term, regardless of the other KL Divergences, by Proposition \ref{thm:noninf2}.
\begin{proposition}\label{thm:noninf2}  
(Proof in Appendix \ref{sup-boundness}) $D_{KL}( p_{tu} \Vert p_{avg} )$ is bounded to\ $\log2-\log{\lambda_{tu}}$. 
\end{proposition}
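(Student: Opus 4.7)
The plan is to exploit the explicit convex form of $p_{avg}$ from Eq.~(\ref{eqn:optimal_D}) to get a pointwise lower bound on $p_{avg}(z)$ in terms of $p_{tu}(z)$, then push that bound through the KL integral. Concretely, since $p_s$ and $\lambda_{tk} p_{tk}$ are non-negative densities, we immediately have
\begin{equation*}
p_{avg}(z) \;=\; \tfrac{1}{2}\bigl(p_s(z) + \lambda_{tk} p_{tk}(z) + \lambda_{tu} p_{tu}(z)\bigr) \;\geq\; \tfrac{\lambda_{tu}}{2}\, p_{tu}(z).
\end{equation*}
Inverting and using monotonicity of $\log$, this gives the pointwise estimate
\begin{equation*}
\log \frac{p_{tu}(z)}{p_{avg}(z)} \;\leq\; \log \frac{2}{\lambda_{tu}} \;=\; \log 2 - \log \lambda_{tu},
\end{equation*}
valid wherever $p_{tu}(z) > 0$ (elsewhere the integrand vanishes by the usual convention).

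Next I would integrate against $p_{tu}$: since $\int p_{tu}(z)\,dz = 1$, multiplying the pointwise bound by $p_{tu}(z)$ and integrating yields
\begin{equation*}
D_{KL}(p_{tu} \Vert p_{avg}) \;=\; \int p_{tu}(z)\,\log\frac{p_{tu}(z)}{p_{avg}(z)}\,dz \;\leq\; \log 2 - \log \lambda_{tu},
\end{equation*}
which is exactly the claimed bound.

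There is essentially no hard step here; the argument is a one-line pointwise inequality followed by integration. The only point worth checking carefully is that $p_{avg}$, as defined in Eq.~(\ref{eqn:optimal_D}), is a genuine probability density (so that the expression $D_{KL}(p_{tu}\Vert p_{avg})$ makes sense), which follows from $\lambda_{tk} + \lambda_{tu} = 1$ and $\int p_s = \int p_{tk} = \int p_{tu} = 1$, giving $\int p_{avg} = (1+1)/2 = 1$. A secondary sanity check is that the bound degrades gracefully: as $\lambda_{tu} \to 0$ (i.e.\ vanishingly rare target-unknowns), the bound blows up like $-\log \lambda_{tu}$, consistent with the intuition that a very small unknown mass can be arbitrarily concentrated relative to $p_{avg}$, but never unboundedly so for fixed $\lambda_{tu} > 0$. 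This is what the authors want in order to guarantee that the third term in Eq.~(\ref{eqn:optimal_G}) cannot drive $\mathcal{L}_G$ to $-\infty$ during the $G$-step.
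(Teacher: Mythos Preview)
Your proof is correct and is essentially identical to the paper's own argument: both use the pointwise lower bound $p_{avg}(z)\geq \tfrac{\lambda_{tu}}{2}p_{tu}(z)$ obtained by dropping the nonnegative terms $p_s(z)+\lambda_{tk}p_{tk}(z)$, and then integrate against $p_{tu}$. The only cosmetic difference is that the paper first pulls out the $\log 2$ before applying the inequality, whereas you keep the factor of $2$ inside until the end.
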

This boundness guarantees that the first two KL Divergence terms maintain their influence on optimizing the parameter $\theta_{g}$ of $G$ stably while \textit{segregating} the target-unknown features. 
Furthermore, we show Proposition \ref{thm:f_div} to characterize the feature alignment between $p_s$ and $p_{tk}$ under Eq. (\ref{eqn:optimal_G}). 
\begin{proposition}\label{thm:f_div}
(Proof in Appendix \ref{sup-appendix_fdivergence}) Assume that $\text{supp}(p_s) \cap \text{supp}(p_{tu})=\emptyset$ and $\text{supp}(p_{tk})\cap \text{supp}(p_{tu})=\emptyset$, where $\text{supp}(p):=\{ z \in \mathcal{Z} | p(z) > 0 \}$ is the support set of $p$. Then, the minimization w.r.t. $G$, by Eq. (\ref{eqn:optimal_G}), is equivalent to the minimization of the summation on two $f$-divergences: 
\begin{equation*}
\begin{aligned}
D_{f_1}(p_{s}||p_{tk}) + \lambda_{tk} D_{f_2}(p_{tk}||p_{s}),
\end{aligned} \label{eq_f_divergence}
\end{equation*}
where $f_{1}(u)=u \log \frac{u}{(1-\alpha)u+\alpha}$ \text{and} $f_{2}(u)=u \log \frac{u}{\alpha u+(1-\alpha)}$, with $\alpha=\frac{\lambda_{tk}}{1+\lambda_{tk}}$.
Therefore, the minimum of Eq. (\ref{eqn:optimal_G}) is achieved if and only if $p_{s}=p_{tk}$.
\end{proposition}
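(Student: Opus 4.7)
The plan is to start from Theorem \ref{theorem1_optim_G}'s identity for $-\mathcal{L}_G(\theta_g, \theta^{*}_d)$ and reduce it, under the disjoint-support hypothesis, to the claimed $f$-divergence sum up to additive constants (which do not affect the minimization in $\theta_g$). First I would partition the feature space $\mathcal{Z}$ into $A := \text{supp}(p_{tu})$ and its complement $A^c$. By the two disjointness hypotheses, $p_s$ and $p_{tk}$ vanish on $A$, so $p_{avg} = \lambda_{tu} p_{tu}/2$ there; conversely, $p_{tu}$ vanishes on $A^c$, so $p_{avg} = (p_s + \lambda_{tk} p_{tk})/2$ on $A^c$.

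Next I would evaluate each KL term on its natural region. The third term $\lambda_{tu} D_{KL}(p_{tu} \Vert p_{avg})$ collapses on $A$ to $\lambda_{tu}(\log 2 - \log \lambda_{tu})$, matching the Proposition \ref{thm:noninf2} bound and independent of $\theta_g$. For the first two terms, restricted to $A^c$, I would factor the denominator using $\alpha = \lambda_{tk}/(1+\lambda_{tk})$ via the identity $p_s + \lambda_{tk} p_{tk} = (1+\lambda_{tk})[(1-\alpha) p_s + \alpha p_{tk}]$. After pulling the resulting $\log \frac{2}{1+\lambda_{tk}}$ out as a further constant, the remaining integrals $\int p_s \log \frac{p_s}{(1-\alpha) p_s + \alpha p_{tk}} dz$ and $\int p_{tk} \log \frac{p_{tk}}{(1-\alpha) p_s + \alpha p_{tk}} dz$ coincide with $D_{f_1}(p_s \Vert p_{tk})$ and $D_{f_2}(p_{tk} \Vert p_s)$ by direct substitution $u = p_s/p_{tk}$ and $u = p_{tk}/p_s$, respectively, using the $f$-divergence definition $D_f(p\Vert q)=\int q\, f(p/q)\, dz$.

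Finally I would establish uniqueness of the minimizer. The cleanest route is to observe that $D_{f_1}(p_s \Vert p_{tk}) = D_{KL}(p_s \Vert (1-\alpha) p_s + \alpha p_{tk})$ and $D_{f_2}(p_{tk} \Vert p_s) = D_{KL}(p_{tk} \Vert \alpha p_{tk} + (1-\alpha) p_s)$, both being genuine KL divergences to a mixture. Each is nonnegative by Gibbs' inequality, and since $\alpha \in (0,1)$, vanishing of either forces the two arguments to coincide almost everywhere, i.e. $p_s = p_{tk}$. Because $\lambda_{tk} > 0$, the weighted sum vanishes if and only if $p_s = p_{tk}$, giving the asserted unique minimizer.

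The main obstacle I anticipate is purely bookkeeping: justifying the partition of the integrals under the disjoint-support assumption (in particular that the KL integrands are well-defined and zero where the reference measure vanishes) and carefully tracking which additive constants may be absorbed into $C_0$. No subtle convex-analysis step is needed once $f_1$ and $f_2$ are reframed as KL-to-a-mixture, which sidesteps any direct convexity verification for the $f$-divergence formalism.
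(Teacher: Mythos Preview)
Your proposal is correct and follows essentially the same route as the paper's proof: partition $\mathcal{Z}$ by $\text{supp}(p_{tu})$, reduce $\lambda_{tu} D_{KL}(p_{tu}\Vert p_{avg})$ to the constant $\lambda_{tu}(\log 2-\log\lambda_{tu})$, factor the remaining denominators via $p_s+\lambda_{tk}p_{tk}=(1+\lambda_{tk})[(1-\alpha)p_s+\alpha p_{tk}]$, and identify the two residual integrals as the stated $f$-divergences. The paper phrases these as skewed $\alpha$-KL divergences (citing Yamano) before recasting them as $f$-divergences, which is exactly your KL-to-a-mixture observation used to justify the ``if and only if'' clause; your argument via Gibbs' inequality for this final step is slightly more explicit than the paper's, but the substance is the same.
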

Therefore, under the assumption, Proposition \ref{thm:f_div} theoretically guarantees that the source and the target-known feature distributions are aligned, which represents $p_s = p_{tk}$. From these analyses, we can theoretically guarantee both \textit{alignment} and \textit{segregation} from the proposed optimization.

\subsection{Open-Set Recognition via Posterior Inference} \label{section_bmm}
This section starts from estimating $w_{x}=p(\textit{known}|x)$ for a target instance, $x$. 
First, we utilize the labeled source dataset, $\chi_s$, to train the feature extractor $G$ and the open-set recognizer $E$, as below,
\begin{equation} \label{eqn:e_source}
\mathcal{L}_{e}^{s}(\theta_{g}, \theta_{e}) =\frac{1}{n_{s}}\sum_{(x_{s}, y_{s})\in \chi_{s}}\mathcal{L}_{ce}(E(G(x_{s})), y_{s}),
\end{equation}
where $E(G(x)) \in \mathbb{R}^{|\mathcal{C}_{s}|}$ contains the softmax activation. We note that the open-set recognizer, $E$, is a classifier to learn the decision boundary over $\mathcal{C}_{s}$ by the source domain.

Given the decision boundary by $E$, our assumption for open-set recognition includes two statements: for the target instances, 1) the higher entropy is caused by an uncertain classification case, and 2) the \textit{unknown} class may induce the higher entropy because there are no prior training instances in the \textit{unknown} classes. 
Based on two assumptions, the open-set recognizer, $E$, will provide higher entropy for the target-unknown instances than the target-known instances. 
Therefore, we consider the entropy value as the open-set recognition indicator. 
We verified our assumptions empirically in Figure \ref{fig:histo_warm-up}.

While we notice the entropy value from a single instance as the indicator between \textit{known} and \textit{unknown}, the open-set recognition should be holistically modeled over the target domain. Therefore, we model the mixture of two Beta distributions on the normalized target entropy values as below,
\begin{equation} \label{eqn:mixturemodel}
p(\ell_{x})=\lambda_{tk} p(\ell_{x}|\textit{known}) + \lambda_{tu} p( \ell_{x}|\textit{unknown}),
\end{equation} 
where the definition of $\lambda_{tk}$ and $\lambda_{tu}$ is the same as in the previous subsection. $\ell_{x}$ is the entropy value for the target instance, $x$, i.e., $\ell_{x}=H(E(G(x)))$ with the entropy function $H$. The Beta distribution is a perfect fit for this case of the closed interval support of $[0,1]$, which is the normalized range of the entropy. 
Thus, we introduce the estimator $\hat{w}_{x}$ of $w_x$ by \textit{Posterior Inference}, from fitting the Beta mixture model through the Expectation-Maximization (EM) algorithm (Details in Appendix \ref{sup-BMMem}),
\begin{align}\label{eqn:posterior}
\hat{w}_{x}:=p(\textit{known}|\ell_{x})=\frac{\lambda_{tk}p(\ell_{x}|known)}{\lambda_{tk} p(\ell_{x}|\textit{known}) + \lambda_{tu} p( \ell_{x}|\textit{unknown})},
\end{align}
where the denominator is from Eq. (\ref{eqn:mixturemodel}). Since the fitting process incorporates all target information, the entire dataset is utilized to set a more informative weighting without any thresholds. 
Here, $\lambda_{tk}$ and $\lambda_{tu}$ are also explicitly estimated by the EM algorithm since they are not given (see Eq. (\ref{sup-eqn:lambda_tk_tu}) in Appendix \ref{sup-BMMem}), whereas a threshold hyperparameter is used by the previous researches \cite{luo2020progressive, wang2021progressively}.

\subsection{Open-Set Classification}
Given the proposed feature alignments, we train a classifier, $C$, to correctly classify the target instance over $\mathcal{C}_s$ or to reject as \textit{unknown}. 
Note that the classifier, $C$, is the extended classifier with the dimension of $|\mathcal{C}_{s}|+1$ to include the \textit{unknown} class.
Firstly, we construct the source classification loss on $\chi_{s}$. Secondly, for the target domain, we need to learn the decision boundary for the \textit{unknown} class. Based on the proposed open-set recognition, $\hat{w}_{x}$, we train the classifier, $C$, as follows:
\begin{equation} \label{eqn:c_source}
\mathcal{L}_{cls}(\theta_{g}, \theta_{c}) =\frac{1}{n_{s}} \sum_{(x_{s}, y_{s})\in \chi_{s}}\mathcal{L}_{ce}(C(G(x_{s})), y_{s})+\frac{1}{{n_{t}}'} \sum_{x_{t}\in \chi_{t}} (1-\hat{w}_{x}) \mathcal{L}_{ce}(C(G(x_{t})), y_{unk}),
\end{equation}
where ${n_{t}}'$ is a normalizing constant, and $y_{unk}$ is the one-hot vector for the \textit{unknown} class. 
Furthermore, we incorporate entropy minimization loss for enforcing the unlabeled target instances to be recognized effectively \cite{grandvalet2005semi, liu2019separate, ROS2020ECCV}, with the entropy loss function, $\mathcal{L}_{H}$,
\begin{align} \label{eqn:c_target}
\mathcal{L}_{ent}^{t}(\theta_{g}, \theta_{c}) =  &\frac{1}{n_{t}} \sum_{x_{t}\in \chi_{t}} \mathcal{L}_{H}(C(G(x_{t}))).
\end{align}
\subsection{Conditional UADAL}
Domain adversarial learning may fail to capture a class-wise discriminative pattern in the features when the data distributions are complex \cite{long2017conditional}. Therefore, we also experimented with the conditional UADAL (cUADAL) with the discriminative information between classes. Inspired by \cite{long2017conditional}, we replace the input information of the domain discriminator, $D$, from $G(x)$ to $[G(x),C(x)]$ in Eq. (\ref{eqn:loss_d_s})-(\ref{eqn:loss_d_t}). cUADAL has the conditions on the prediction of the extended classifier, $C$, which is including the \textit{unknown} dimension. Therefore, cUADAL provides more discriminative information than UADAL. 

\subsection{Training Procedure}
We introduce how to update the networks, $E$, $G$, $D$, and $C$, based on the proposed losses. 
Before we start the training, we briefly fit the parameters of the posterior inference to obtain $\hat{w}_{x}$. This starting stage trains $E$ and $G$ on the source domain with a few iterations to catch the two modalities of the target entropy values, inspired by the previous work \cite{arazo2019unsupervised}, as below, 
\begin{equation} \label{eqn:e_update}
(\hat{\theta}_{e}, \hat{\theta}_{g}) = \operatorname{argmin}_{\theta_{g}, \theta_{e}} \mathcal{L}_{e}^{s}(\theta_{g}, \theta_{e}).
\end{equation}
After the initial setup of $\hat{w}_x$, we train $D$, $C$, $E$, and $G$, alternatively. This alternation is composed of two phases: (i) learning $D$; and (ii) learning $G$, $C$, and $E$ networks.
The first phase of the alternative iterations starts from learning the domain discriminator, $D$, by the optimization on $\mathcal{L}_{D}$ in Eq. (\ref{eqn:max_D}),
\begin{equation} \label{eqn:d_update}
\hat{\theta}_{d} = \operatorname{argmin}_{\theta_{d}} \mathcal{L}_{D}(\theta_{g}, \theta_{d}).
\end{equation}
Based on the updated $\hat{\theta}_{d}$, the parameter, $\theta_{g}$, is associated with both optimization loss and classification loss. Therefore, for the second phase, the parameters, $\theta_{g}$ and $\theta_{c}$, are optimized as below,
\begin{equation}\label{eqn:g_update}
   (\hat{\theta}_{g}, \hat{\theta}_{c}) ={} \operatorname{argmin}_{\theta_{g}, \theta_{c}} \mathcal{L}_{cls}(\theta_{g}, \theta_{c}) +\mathcal{L}_{ent}^{t}(\theta_{g}, \theta_{c})- \mathcal{L}_{G}(\theta_{g},\hat{\theta}_{d}).
\end{equation}
We also keep updating $\theta_{e}$ by Eq. (\ref{eqn:e_update}), which leads to a better fitting of the posterior inference on the aligned features (empirically shown in Figure \ref{fig:entropy_trhesholding}).
Algorithm \ref{sup-alg:algorithm1} in Appendix \ref{sup-appendix:trainig_algorithm_detail} enumerates the detailed training procedures of UADAL. 
\textbf{Computational Complexity} is asymptotically the same as existing domain adversarial learning methods, which linearly grow by the number of instances $O(n)$. The additional complexity may come from the posterior inference requiring $O(nk)$; $k$ is a constant number of EM iterations (the detailed analysis on the computations in Appendix \ref{sup-appendix:computational_complexity} and \ref{sup-appendix:sampling_bmm}). 

\section{Experiments}
\subsection{Experimental Settings} \label{main_exp_setting}
\textbf{Datasets} We utilized several benchmark datasets. \textbf{Office-31} \cite{office31} consists of three domains: Amazon (A), Webcam (W), and DSLR (D) with 31 classes. \textbf{Office-Home} \cite{officehome} is a more challenging dataset with four different domains: Artistic (A), Clipart (C), Product (P), and Real-World (R), containing 65 classes. \textbf{VisDA} \cite{peng2017visda} is a large-scale dataset from synthetic images to real one, with 12 classes. In terms of the class settings, we follow the experimental protocols by \cite{saito2018open}.

\textbf{Baselines} We compared UADAL and cUADAL with several baselines by choosing the recent works in CDA, OSDA, and Universal DA (UniDA). For CDA, we choose DANN \cite{ganin2015unsupervised} and CDAN \cite{long2017conditional} to show that the existing domain adversarial learning for CDA is not suitable for OSDA. For OSDA, we compare UADAL to OSBP \cite{saito2018open}, STA \cite{liu2019separate}, PGL \cite{luo2020progressive}, ROS \cite{ROS2020ECCV}, and OSLPP \cite{wang2021progressively}. For UniDA, we conduct the OSDA setting in DANCE \cite{DANCE2020NIPS} and DCC \cite{li2021domain}. 

\textbf{Implementation} We commonly utilize three alternative pre-trained backbones: 1) ResNet-50 \cite{He_2016_CVPR}, 2) DenseNet-121 \cite{huang2017densely}, and 3) EfficientNet-B0 \cite{tan2019efficientnet} to show the robustness on 
backbone network choices, while using VGGNet \cite{simonyan2014very} on VisDA for a fair comparison. 
(Details in Appendix \ref{sup-appendix_imple_detail}).

\textbf{Metric} We utilize HOS, which is a harmonic mean of OS$^{*}$, and UNK \cite{ROS2020ECCV}. OS$^{*}$ is a class-wise averaged accuracy over \textit{known} classes, and UNK is an accuracy only for the \textit{unknown} class. 
HOS is suitable for OSDA because it is higher when performing well in both \textit{known} and \textit{unknown} classifications. 
\begin{table*}[t!]
\centering
\resizebox{\textwidth}{!}{%
\begin{tabular}{c|c|cccccc|c||cccccccccccc|c}
\hline \hline 
\multicolumn{2}{l|}{\textbf{Backbone (\#)/}}  & \multicolumn{7}{c||}{\textbf{Office-31}} & \multicolumn{13}{c}{\textbf{Office-Home}} \\ \cline{3-22}
 \multicolumn{2}{r|}{\textbf{Model}}  & A-W & A-D & D-W & W-D & D-A & W-A & \textbf{Avg.} & P-R & P-C & P-A & A-P & A-R & A-C & R-A & R-P & R-C & C-R & C-A & C-P & \textbf{Avg.} \\ \hline
 {\multirow{9}{*}{\rotatebox[origin=c]{90}{\textbf{EfficientNet-B0 (5.3M)}}}} & DANN & 63.2 & 72.7 & 92.6 & 94.8 & 63.7 & 57.2 & 74.0$\pm$0.3 & 35.7 & 16.5 & 18.2 & 34.1 & 46.3 & 22.9 & 40.7 & 47.8 & 28.2 & 12.4 & 7.5 & 13.4 & 27.0$\pm$0.3 \\
 & CDAN & 65.5 & 73.6 & 92.4 & 94.6 & 64.8 & 57.9 & 74.8$\pm$0.2 & 37.9 & 18.1 & 20.4 & 35.6 & 47.0 & 24.6 & 44.1 & 49.8 & 30.1 & 13.5 & 8.9 & 15.0 & 28.8$\pm$0.6 \\
 & STA& 58.3 & 62.2 & 81.6 & 79.6 & {69.8} & 67.4 & 69.8$\pm$1.2 & 59.4 & 43.6 & 51.9 & 53.8 & 60.6 & 49.5 & 58.8 & 53.5 & 49.9 & 53.4 & 49.5 & 49.4 & 52.8$\pm$0.2 \\
 & OSBP & 82.9 & 87.0 & 33.8 & 96.7 & 27.3 & {69.9} & 66.3$\pm$2.1 & 65.0 & {46.0} & {58.6} & 64.2 & 71.0 & \underline{54.0} & 58.3 & 62.5 & 50.3 & \underline{63.7} & 50.7 & 55.6 & 58.3$\pm$1.6 \\
 & ROS & 69.7 & 80.1 & 94.7 & 99.6 & \underline{73.0} & 59.2 & 79.4$\pm$0.3 & 66.9 & 44.9 & 53.7 & 62.5 & 69.5 & 50.0 & 62.0 & 67.0 & {52.0} & 61.2 & 50.5 & 54.7 & 57.9$\pm$0.1 \\
 & DANCE & 68.1 & 68.8 & 91.3 & 85.0 & 68.5 & 63.3 & 74.2$\pm$4.0 & 17.2 & {47.5} & 7.2 & 26.6 & 19.6 & 36.6 & 2.2 & 19.8 & 10.9 & 6.4 & 4.3 & 19.0 & 18.1$\pm$2.9 \\
 & DCC & \underline{87.2} & 69.1 & 89.4 & 94.4 & 63.5 & \textbf{76.1} & 79.9$\pm$2.9 & {72.2} & 41.0 & 56.5 & \textbf{66.4} & \textbf{75.7} & 52.8 & 55.9 & {71.5} & 49.9 & 60.4 & 48.1 & {60.8} & 59.3$\pm$1.5 \\ \cline{2-22}
 & \textbf{UADAL} & \textbf{87.5} & \underline{88.3} & \textbf{97.4} & \textbf{96.9} & \textbf{74.1} & 68.9 & \underline{85.5$\pm$0.5} & \textbf{75.0} & \underline{50.0} & \underline{62.9} &  \textbf{66.4} & \underline{74.1} &\underline{52.7} &  \textbf{71.5} & \underline{72.6} &  \textbf{53.6} &  \textbf{65.3} & \underline{60.8} &  \textbf{63.7} &  \underline{64.1$\pm$0.1} \\
 & \textbf{cUADAL} & 86.5 & \textbf{89.1} & \underline{97.3} & \underline{98.0} & 72.5 & \underline{71.0} & \textbf{85.7$\pm$0.7} & \underline{74.7} & \textbf{ 54.4} &  \textbf{64.2} & \underline{66.3} & 73.9 & 50.8 & \underline{71.4} &  \textbf{73.0} & \underline{52.4} &  \textbf{65.3} &  \textbf{61.0} & \underline{63.3} & \textbf{64.2$\pm$0.1} \\\hline \hline

 {\multirow{9}{*}{\rotatebox[origin=c]{90}{\textbf{DenseNet-121 (7.9M)}}}}&  DANN & 71.9 & 72.0 & 90.2 & 85.3 & 73.8 & 72.3 & 77.6$\pm$0.5 & 68.8 & 35.4 & 48.7 & 62.6 & 71.9 & 45.3 & 62.8 & 68.7 & 45.9 & 62.2 & 47.0 & 54.7 & 56.2$\pm$0.3 \\
 & CDAN & 69.5 & 69.8 & 86.8 & 84.5 & 73.8 & 72.5 & 76.2$\pm$0.2 & 68.9 & 39.2 & 51.9 & 62.6 & 71.8 & 47.1 & 63.6 & 68.0 & 48.7 & 62.8 & 49.3 & 55.2 & 57.4$\pm$0.3 \\
 & STA & 77.0 & 68.6 & 84.0 & 77.2 & 76.6 & 75.1 & 76.4$\pm$1.5 & 65.6 & 46.1 & 58.4 & 55.8 & 64.3 & 50.4 & 62.6 & 58.6 & 51.1 & 61.0 & 56.0 & 55.9 & 57.1$\pm$0.1 \\
 & OSBP & 81.9 & \underline{83.0} & 88.9 & {96.6} & 73.1 & 74.9 & 83.1$\pm$2.2 & 71.9 & 46.0 & {60.3} & 67.1 & 72.3 & 54.5 & 65.9 & 71.7 & 53.7 & 66.8 & \underline{59.3} & 64.1 & 62.8$\pm$0.1 \\
 & ROS & 67.0 & 67.8 & \textbf{97.4} & \underline{99.4} & 77.1 & 71.8 & 80.1$\pm$1.3 & 73.0 & \underline{49.6} & 59.2 & 67.8 & {75.5} & 52.8 & 66.4 & {74.6} & 54.3 & 64.8 & 53.0 & 57.8 & 62.4$\pm$0.1 \\
 & DANCE & 69.9 & 67.8 & 84.0 & 82.8 & \textbf{79.9} & \textbf{81.1} & 77.6$\pm$0.3 & 51.8 & \textbf{51.0} & 59.7 & 63.9 & 58.2 & \textbf{58.2} & 43.4 & 48.9 & {55.0} & 41.3 & 54.6 & 60.6 & 53.9$\pm$0.5 \\
 & DCC & {83.9} & 80.8 & 88.4 & 93.1 & \underline{79.7} & \underline{80.4} & 84.4$\pm$1.3 & {75.1} & 46.6 & 58.0 & \textbf{70.8} & \textbf{78.6} & {56.6} & 63.4 & {75.5} & {55.8} & \textbf{71.3} & 55.0 & 63.3 & {64.2$\pm$0.2} \\ \cline{2-22}
&\textbf{UADAL} & \textbf{86.0} & 82.3 & \underline{96.7} & 99.2 & {77.9} & 74.2 & \underline{86.0$\pm$0.6} & \textbf{75.7} & 45.5 & \underline{61.5} & \underline{70.0} & \underline{76.9} & 57.3 &\underline{71.5} & \underline{76.1} & \textbf{60.4} & \underline{70.0} & \textbf{60.1} & \underline{67.2} & \underline{66.0 $\pm$0.2} \\
&\textbf{cUADAL} & \underline{85.1} & \textbf{83.6} &{96.4} & \textbf{99.6} & 77.5 & 75.9 & \textbf{86.4$\pm$0.6} &\underline{75.6} & 48.9 & \textbf{61.7} & \underline{70.0} & 76.7 & \underline{57.8} & \textbf{71.9} & \textbf{76.7} & \underline{59.1} & 69.6 & \textbf{60.1} & \textbf{67.5} & \textbf{66.3$\pm$0.3}\\\hline \hline

 {\multirow{9}{*}{\rotatebox[origin=c]{90}{\textbf{ResNet-50 (25.5M)}}}} & DANN & 68.1 & 71.5 & 86.7 & 82.5 & 73.7 & 72.6 & 75.9$\pm$0.5 & 69.8 & 44.6 & 56.3 & 65.2 & 71.0 & 51.2 & 65.4 & 68.4 & 50.9 & 66.7 & 57.6 & 60.9 & 60.7$\pm$0.2 \\
 & CDAN & 64.9 & 66.8 & 84.3 & 80.5 & 72.7 & 71.0 & 73.4$\pm$1.3 & 69.7 & 47.2 & 58.6 & 65.1 & 70.7 & 52.9 & 66.0 & 67.6 & 52.7 & 67.1 & 58.2 & 61.7 & 61.4$\pm$0.3 \\
 & STA$^{*}$ & 75.9 & 75.0 & 69.8 & 75.2 & 73.2 & 66.1 & 72.5$\pm$0.8 & 69.5 & 53.2 & 61.9 & 54.0 & 68.3 & 55.8 & 67.1 & 64.5 & 54.5 & 66.8 & 57.4 & 60.4 & 61.1$\pm$0.3 \\
 & OSBP$^{*}$& 82.7 &82.4 &97.2 &91.1 & 75.1 & 73.7 & 83.7$\pm$0.4 & 73.9 & 53.2 & {63.2} & 65.2 & 72.9 & 55.1 & 66.7 & 72.3 & 54.5 & {70.6} & \underline{64.3} & 64.7 & 64.7$\pm$0.2 \\
 & PGL$^{*}$ &74.6   &72.8 & 76.5  & 72.2 & 69.5  & 70.1  & 72.6$\pm$1.5  &   41.6&46.6   &  47.2& 45.6  & 55.8  & 29.3 & 11.4 & 52.5  &  0.0 & 45.6  &  10.0 &  36.8&  35.2 \\
 & ROS$^{*}$ & 82.1 & 82.4 & {96.0} & \textbf{99.7} & 77.9 & 77.2 & 85.9$\pm$0.2 & 74.4 & {56.3} & 60.6 & 69.3 & 76.5 & 60.1&{68.8} & 75.7 & {60.4} & 68.6 & 58.9 & 65.2 & {66.2$\pm$0.3} \\
 & DANCE & 66.9 & 70.7 & 80.0 & 84.8 &65.8 & 70.2 & 73.1$\pm$1.0 & 41.2 & 55.7 & 54.2 & 49.8 & 39.4 & 53.1 & 27.5 & 44.0 & 48.3 & 30.2 & 40.9 & 45.9 & 44.2$\pm$0.6 \\
 & DCC$^{*}$ &87.1 & 85.5 & 91.2 & 87.1 & \textbf{85.5} & \textbf{84.4} & 86.8 & 64.0 & 52.8 & 59.5 & 67.4 & \textbf{80.6} & 52.9 & 56.0 & 62.7 & \textbf{76.9} & 67.0 & 49.8 & 66.6 & 64.2 \\
  & OSLPP$^{*}$ & {89.0} & \textbf{91.5} & 92.3 & 93.6 & 79.3 & \underline{78.7} & {87.4} & 74.0 & \textbf{59.3} & \textbf{63.6} & \textbf{72.8} & 74.3 & 61.0 & 67.2 & {74.4} & {59.0} & 70.4 & {60.9} & {66.9} & {67.0} \\ \cline{2-22}
 & \textbf{UADAL} & \underline{89.1} &86.0 &\underline{97.8}&\underline{99.5} &79.7 & 76.5 & \underline{88.1$\pm$0.2}& \textbf{76.9} & \underline{56.6} & \underline{63.0} & 70.8 & {77.4} & \underline{63.2} & \underline{72.1} & \textbf{76.8} & \underline{60.6} & \textbf{73.4} & {64.2} &\textbf{69.5} & \textbf{68.7$\pm$0.2}  \\
 & \textbf{cUADAL} & \textbf{90.1} & \underline{87.9} & \textbf{98.2} &99.4 & \underline{80.5} & 75.1 & \textbf{88.5$\pm$0.3} &\underline{76.8} & 54.6 & 62.9 & \underline{71.6} & \underline{77.5} & \textbf{63.6} & \textbf{72.6} & \underline{76.7} & 59.9 & \underline{72.6} & \textbf{65.0} & \underline{68.3} & \underline{68.5$\pm$0.1} \\ \hline \hline
\end{tabular}%
}
\caption{\small{HOS score (\%) on Office-31 \& Office-Home using EfficientNet-B0 (5.3M), DenseNet-121 (7.9M), and ResNet-50 (25.5M) as backbone network, where (\#) represents the number of the parameters.
A-W in a column means that A is the source domain and W is the target domain. \textbf{Avg.} is the averaged HOS over all tasks in each dataset.
(bold: best performer, underline: second-best performer, $^{*}$: officially reported performances.) The detailed experimental results including the other metrics such as OS$^{*}$ and UNK are in Appendix \ref{sup-appendix:full_table}} } \label{tab:hos_office}
\end{table*}

  \begin{minipage}[t!]{\columnwidth}
  \begin{minipage}[b]{0.70\columnwidth} 
\scalebox{0.63}{
\begin{tabular}{|c|ccc|ccc|ccc|ccc|}
 \hline
   \multirow{2}{*}{\textbf{Method}}    & \multicolumn{3}{|c|}{\textbf{EfficientNet-B0}} & \multicolumn{3}{c|}{\textbf{DenseNet-121}} & \multicolumn{3}{c|}{\textbf{ResNet-50}} &\multicolumn{3}{c|}{\textbf{VGGNet}} \\   \cline{2-13}
       & OS*  & UNK  & \textbf{HOS}& OS*  & UNK& \textbf{HOS} & OS*  & UNK& \textbf{HOS}& OS*  & UNK& \textbf{HOS} \\ \hline
 STA &  49.3 & 56.4 & 52.5 & 53.1 & 76.7 & 62.7& 56.9 & 75.8 & 65.0&63.9$^{*}$ &84.2$^{*}$ &72.7$^{*}$ \\
 OSBP &48.8 & 70.4 & 57.6 & 48.7 & 65.7 & 55.9 & 50.0 & 77.2 & 60.7&59.2$^{*}$ &85.1$^{*}$ &69.8$^{*}$ \\
 PGL & 56.9 & 26.1 & 35.8 & - & - & -  & 70.3 & 33.4 & 45.3&82.8$^{*}$ &68.1$^{*}$ &74.7$^{*}$ \\
 ROS & 36.7 & 72.0 & 48.7 & 42.7 & 81.1 & 55.9& 45.8 & 64.8 & 53.7 &-&-&- \\
 DANCE& 38.9 & 63.2 & 48.1 & 59.8 & 67.3 & 62.3 & 61.3 & 72.9 & 66.5&-&-&- \\
 DCC &  38.3 & 54.2 & 44.8& 16.7 & 76.9 & 27.2& 13.4 & 88.0 & 23.3&68.0$^{*}$ &73.6$^{*}$ &70.7$^{*}$ \\
 OSLPP & - & - & - & - & - & -  & - & - & - &-&-&-\\ \hline
\textbf{UADAL} & 47.0 & 76.8 & \underline{58.3} & 56.2 & 81.5 & \underline{66.5} & 58.0 & 86.2 & \underline{69.4} &63.1&93.3&\underline{75.3}\\ 
 \textbf{cUADAL} & 47.2 & 76.5 & \textbf{58.4} & 57.9 & 84.1 & \textbf{68.6} & 58.5 & 87.6 & \textbf{70.1}&64.3&92.6&\textbf{75.9}\\ \hline 
\end{tabular}
}
\captionof{table}{\small{Results on VisDA dataset. (bold/underline/$^{*}$: Refer to the Table \ref{tab:hos_office})}}
\label{tab:res:visda}
  \end{minipage}
  \hfill
  \begin{minipage}[b]{0.28\columnwidth}
    \centering
\scalebox{0.71}{
\begin{tabular}{|c|c|c|}
\hline
\multicolumn{3}{|c|}{\textbf{Office-31 with ResNet-50}} \\
 \hline
\textbf{Weight} &  \textbf{Core} &  \textbf{HOS Avg.} \\ \hline
\multirow{2}{*}{STA} & STA    & 72.5\small{$\pm$0.8}          \\
                     &  \textbf{UADAL}  &\underline{85.5\small{$\pm$0.9}}         \\ \hline
\multirow{2}{*}{ROS} &  ROS    & 85.9\small{$\pm$0.2}          \\
                     &  \textbf{UADAL}  & \underline{86.7\small{$\pm$0.2}}          \\ \hline
\textbf{UADAL}                & \textbf{UADAL} & \textbf{88.1\small{$\pm$0.2}}   \\ \hline
        \end{tabular} 
} 
      \captionof{table}{\small{Ablation Studies for Different Weighting / Core Schemes on STA and ROS.}}
\label{tab:different_weight}
    \end{minipage}
  \end{minipage}

\subsection{Experimental Results}

\textbf{Quantitative Analysis}
Table \ref{tab:hos_office} reports the quantitative performances of UADAL, cUADAL, and baselines applied to Office-31 and Office-Home, with three backbone networks. 
UADAL and cUADAL show statistically significant improvements compared to \textit{all} existing baselines in both Office-31 and Office-Home. 
Moreover, there is a large improvement by UADAL from baselines in EfficientNet-B0 and DenseNet-121, even with fewer parameters. 
In terms of the table on ResNet-50, for a fair comparison, UADAL outperforms the most competitive model, OSLPP (HOS=67.0). Compared to the domain adversarial learning methods, such as STA and OSBP, UADAL significantly outperforms in all cases. UADAL also performs better than other self-supervised learning methods such as ROS and DCC.
These performance increments are commonly observed in all three backbone structures and in two benchmark datasets, which represents the effectiveness of the proposed model.

Table \ref{tab:res:visda} shows the experimental results on VisDA, which is known to have a large domain shift from a synthetic to a real domain. 
It demonstrates that UADAL and cUADAL outperform the other baselines \footnote{OSLPP is infeasible to a large dataset, such as VisDA dataset (Detailed discussions in Appendix \ref{sup-appendix_oslpp}).} under the significant domain shift. 
Moreover, the baselines with the feature alignments by the domain adversarial learning, such as STA, OSBP, and PGL, outperform the self-supervised learning models, ROS, DANCE, and DCC, in most cases. 
Therefore, these results support that OSDA essentially needs the feature alignments when there are significant domain shifts. 
The detailed discussions about the low accuracies of some baselines are in Appendix \ref{sup-appendix:low_accuracy_results}.
In addition, we observe that cUADAL mostly performs better than UADAL, which shows that cUADAL provides more discriminative information both on \textit{known} and \textit{unknown} by the prediction of the classifier $C$.

\begin{figure}[h!]
\vspace{-1em}
\begin{minipage}[t]{0.51\textwidth}%
\vspace{0pt}
\begin{subfigure}[t]{0.33\textwidth}
    \includegraphics[width=\textwidth]{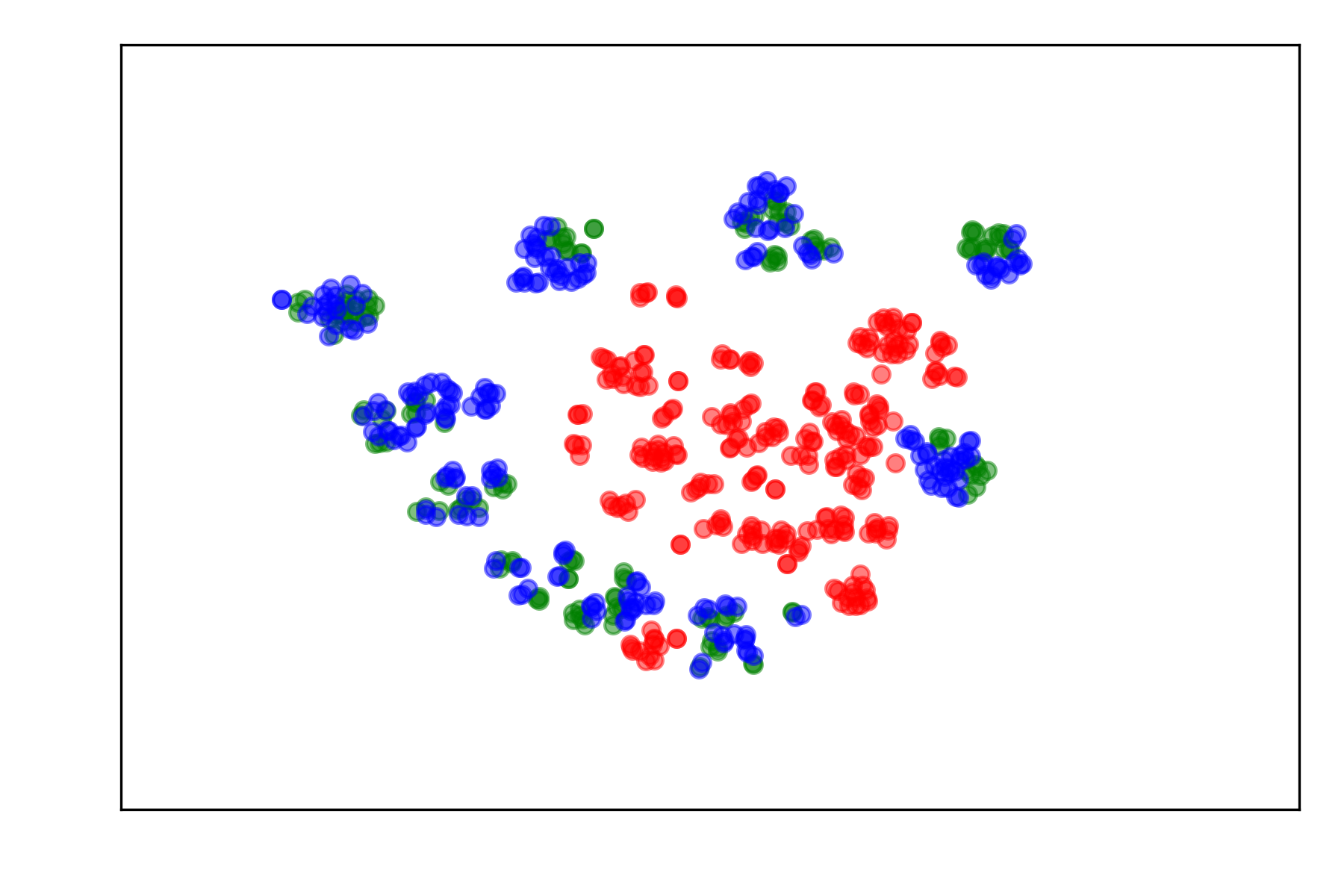}
    \caption{\small{DANN}}
    \label{fig:tsne_dann}
\end{subfigure} \hspace{-0.2em}%
\begin{subfigure}[t]{0.33\textwidth}
    \includegraphics[width=\textwidth]{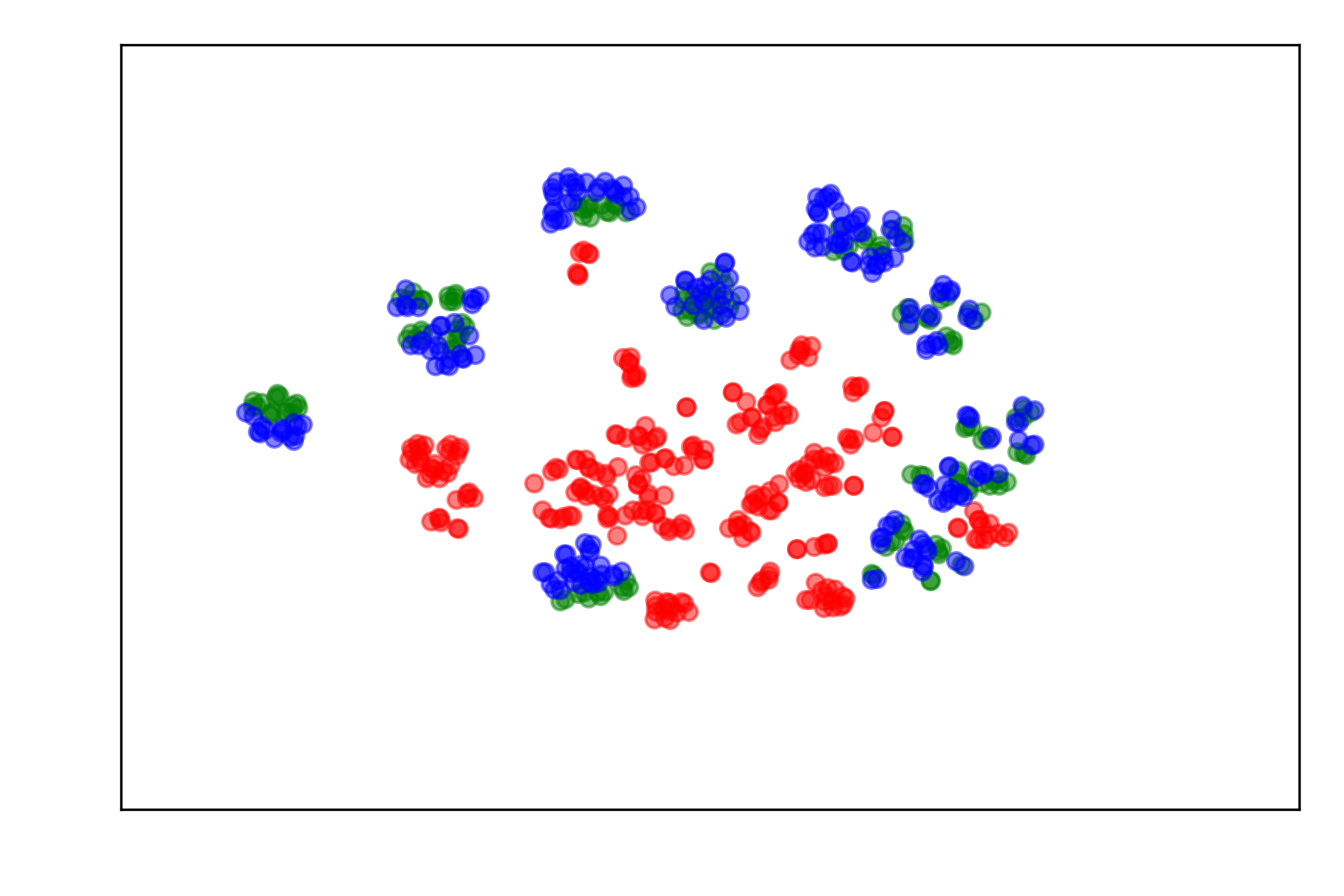}
    \caption{\small{STA}}
    \label{fig:tsne_sta}
\end{subfigure} \hspace{-0.2em}%
\begin{subfigure}[t]{0.33\textwidth}
    \includegraphics[width=\textwidth]{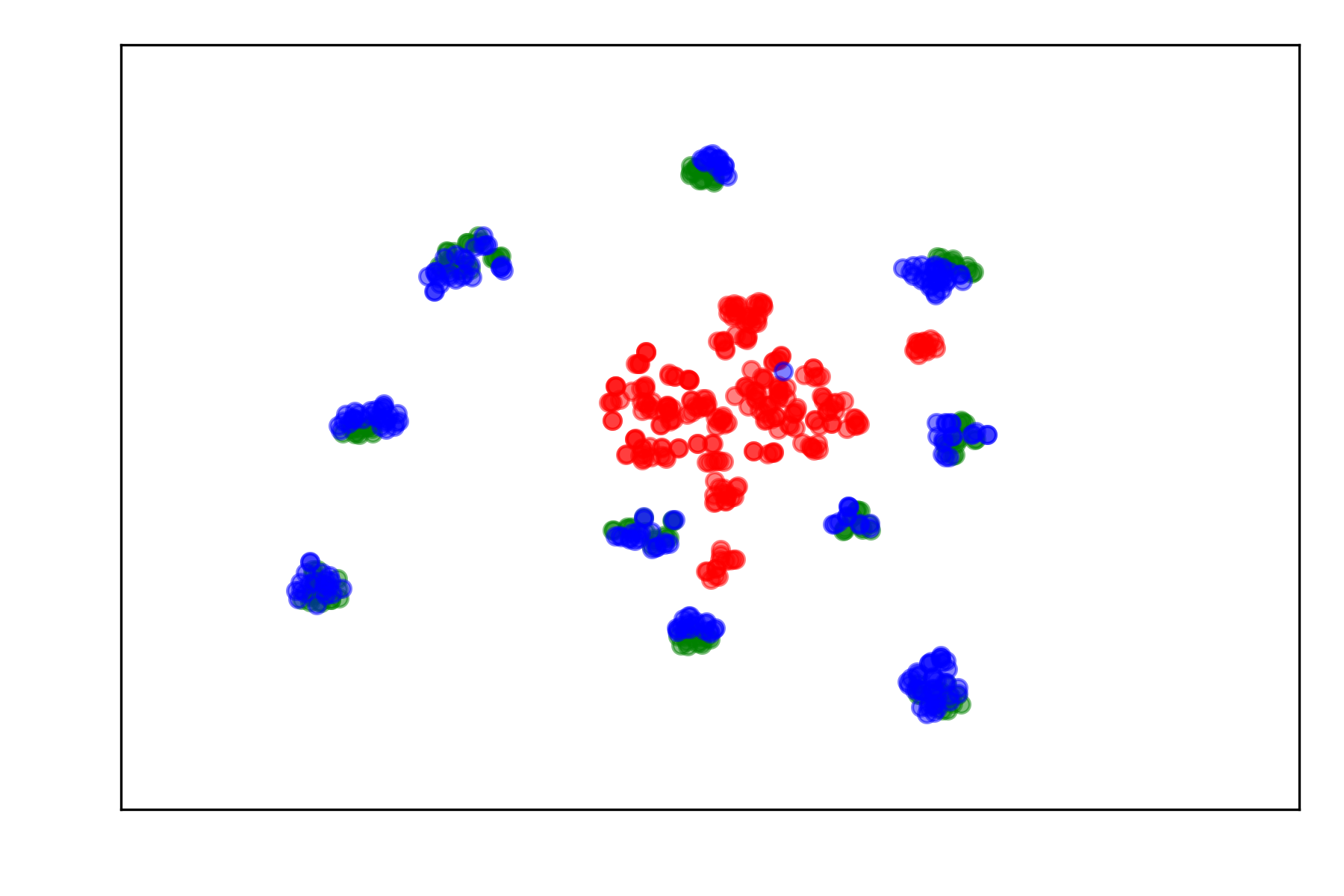}
    \caption{\small{cUADAL}}
    \label{fig:tsne_uadal}
\end{subfigure} 
\caption{\small{t-SNE Visualization on D $\rightarrow$ W of Office-31. (Blue/Red/Green: Target-Known/-Unknown/Source)}} \label{fig:tsne}
\end{minipage}%
\hfill
\begin{minipage}[t]{0.23\textwidth}%
\vspace{0pt}
\includegraphics[width=\linewidth]{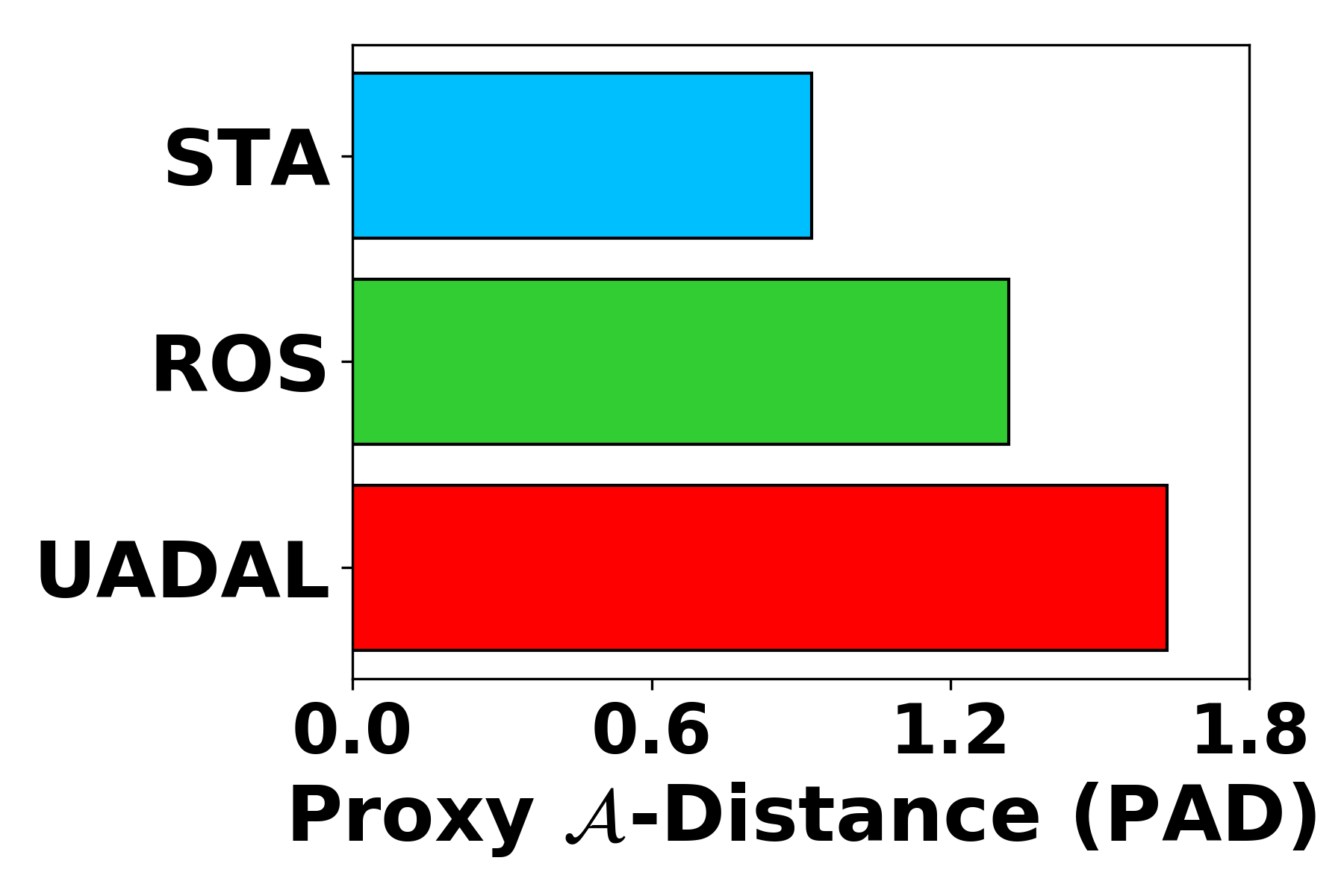}
\caption{\small{PAD value on \textit{tk}/\textit{tu} feature distributions.}}      \label{fig:exp_pad_value}
\end{minipage}
\hfill
\begin{minipage}[t]{0.23\textwidth}%
\vspace{0pt}
\includegraphics[width=\textwidth]{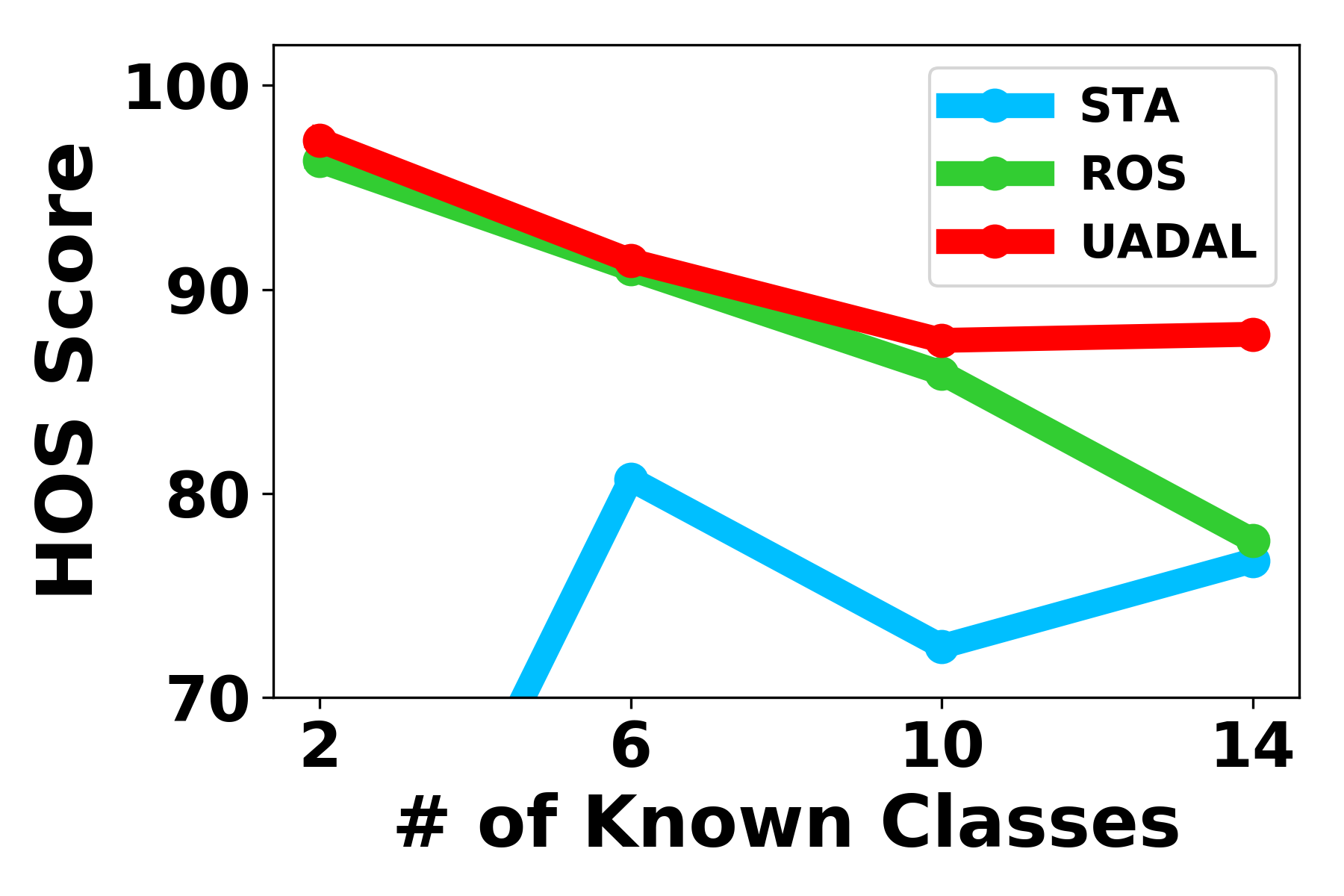}%
\caption{\small{Openness on Office-31 (ResNet-50).}}         \label{fig:exp_openness}  
\end{minipage}

\end{figure}

\textbf{Qualitative Analysis}
Figure \ref{fig:tsne} represents the t-SNE visualizations \cite{van2008visualizing} of the learned features by ResNet-50 (the details in Appendix \ref{sup-appendix_tsne}). We observe that the target-unknown (red) features from the baselines are not segregated from the source (green) and the target-known (blue) features. On the contrary, UADAL aligns the features from the source and the target-known instances accurately with clear \textit{segregation} of the target-unknown features. 
In order to investigate the learned feature distributions quantitatively, 
Figure \ref{fig:exp_pad_value} provides Proxy $\mathcal{A}$-Distance (PAD) between the feature distributions from STA, ROS, and UADAL. 
PAD is an empirical measure of distance between domain distributions \cite{ganin2016domain} (Details in Appendix \ref{sup-appendix_proxy}), and a higher PAD means clear discrimination between two distributions. Thus, we calculate PAD between the target-known ($tk$) and the target-unknown ($tu$) feature distributions from the feature extractor, $G$. We found that PAD from UADAL is much higher than STA and ROS. 
It shows that UADAL explicitly segregates the unknown features.

\begin{figure*}[h]
\centering
\begin{subfigure}[h]{0.38\textwidth}
    \includegraphics[width=\textwidth]{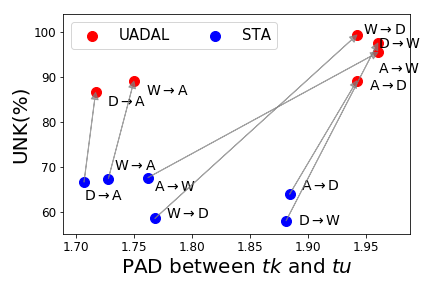}
    \label{fig:abl_sample_31}
\end{subfigure}  \vspace{-1em}
\begin{subfigure}[h]{0.38\textwidth}
    \includegraphics[width=\textwidth]{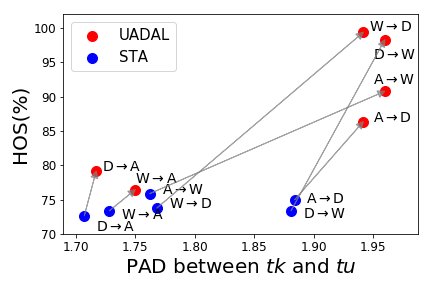}
    \label{fig:abl_sample_home}
\end{subfigure} \vspace{-1em}
\caption{\small{Correlation analysis between the PAD value and each of UNK (left) and HOS (right), in Office-31 dataset (with ResNet-50). The arrow represents the same task, such as A$\rightarrow$W, for UADAL and STA.}}
\label{main_fig:sta_ablation}
\end{figure*}
In order to provide more insights and explanations compared to STA, we provide an analysis of the correlation between the evaluation metric and the distance measure of the feature distributions. 
We utilize UNK, and HOS as evaluation metrics, and PAD between $tk$ and $tu$ as the distance measure. 
Figure \ref{main_fig:sta_ablation} shows the scatter plots of (UNK \& PAD) and (HOS \& PAD). 
The grey arrows mean the corresponding tasks by UADAL and STA. 
As shown in the figure, HOS and UNK have a positive correlation with the distance between the target-unknown ($tu$) and the target-known ($tk$). 
In simple words, better segregation enables better HOS and UNK. Specifically, from STA to UADAL on the same task, the distances (PAD on $tk$/$tu$) are increased, and the corresponding performances, UNK and HOS, are also increased. 
It means that UADAL effectively segregates the target feature distributions, and then leads to better performance for OSDA. 
In other words, the proposed unknown-aware feature alignment is important to solve the OSDA problem. 
From this analysis, our explicit segregation of $tu$ is a key difference from the previous methods.
Therefore, we claimed that this explicit segregation of UADAL is the essential part of OSDA, and leads to better performances.  

\textbf{Robust on Openness} 
We conduct the openness experiment to show the robustness of the varying number of the known classes. 
Figure \ref{fig:exp_openness} represents that UADAL performs better than the baselines over all cases. 
Our \textit{open-set recognition} does not need to set any thresholds since we utilize the loss information of the target domain, holistically. Therefore, UADAL is robust to the degree of openness.

\subsection{Ablation Studies} \label{sec_ablation}
\textbf{Effectiveness of Unknown-Aware Feature Alignment} We conducted ablation studies to investigate whether the unknown-aware feature alignment is essential for the OSDA problem. 
\begin{wrapfigure}{r}{0.25\linewidth}\vspace{-0.5em}
        \includegraphics[width=0.25\textwidth]{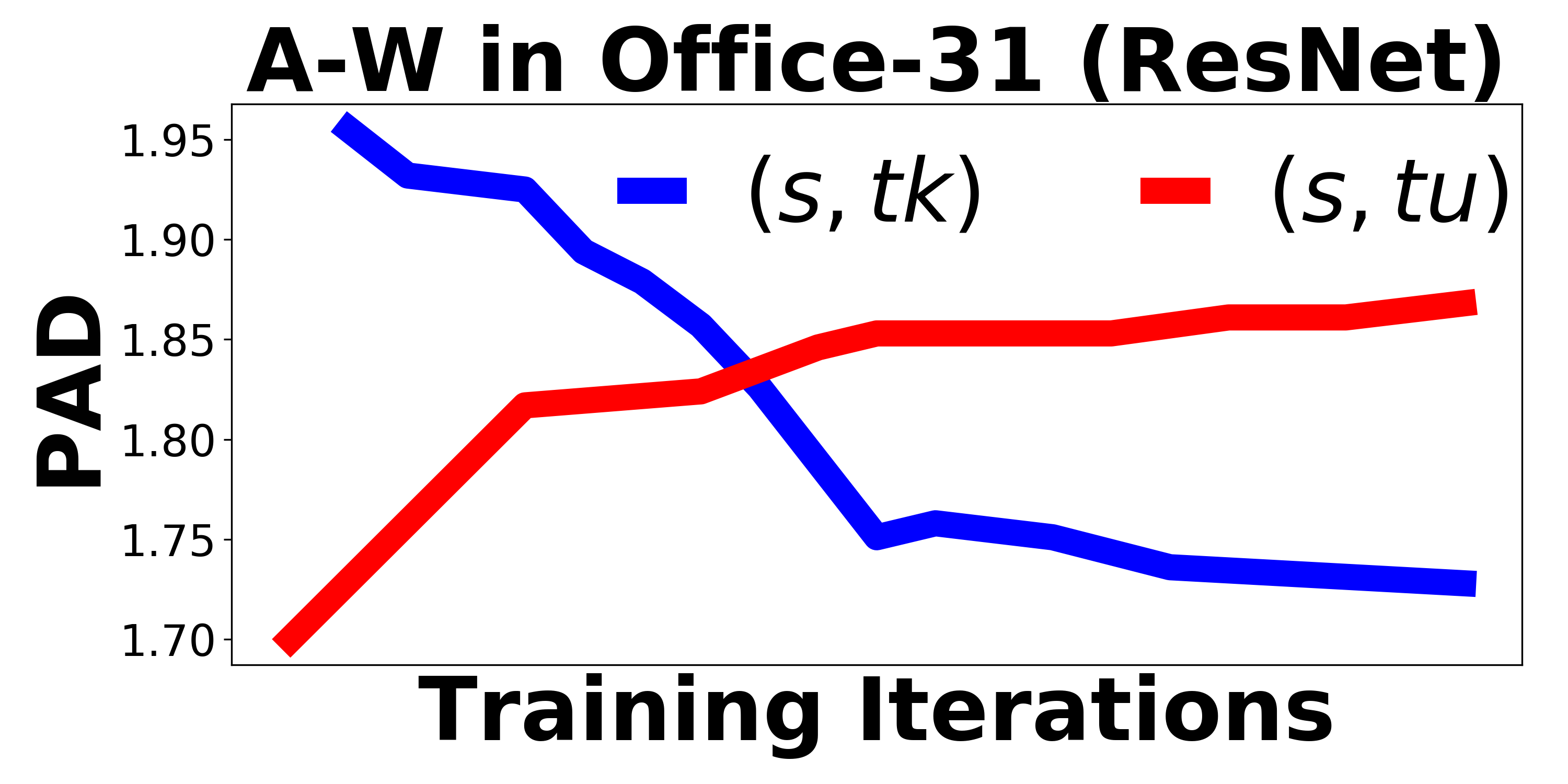}
    \caption{\small{Convergence of Sequential Optimization.}}   \label{converge_pad}
\vspace{-1em}
\end{wrapfigure}
Table \ref{tab:different_weight} represents the experimental results where the feature alignment procedure of STA or ROS is replaced by UADAL, maintaining their weighting schemes, in Office-31 with ResNet-50. 
In terms of each weighting, the UADAL core leads to better performances in both STA and ROS cases. 
Surprisingly, when we replace STA's known-only matching with the unknown-aware feature alignment of UADAL, the performance increases dramatically. 
It means that the implicit separation of STA at the classification level is limited to segregating the target-unknown features. Therefore, explicit \textit{segregation} by UADAL is an essential part to solve the OSDA problem. 
Moreover, Figure \ref{converge_pad} shows the empirical convergence of the proposed sequential optimization as PAD value between $p_{s}$ and $p_{tk}$ and between $p_{s}$ and $p_{tu}$ over training, i.e., $p_{s}\approx p_{tk}$ and $p_{tu}\leftrightarrow \{ p_{tk}, p_{s} \}$.
\vspace{-0.6em}

\begin{figure}[h]
\begin{minipage}[h]{0.62\textwidth}%
\begin{minipage}[h]{0.52\textwidth}%
\includegraphics[width=\textwidth]{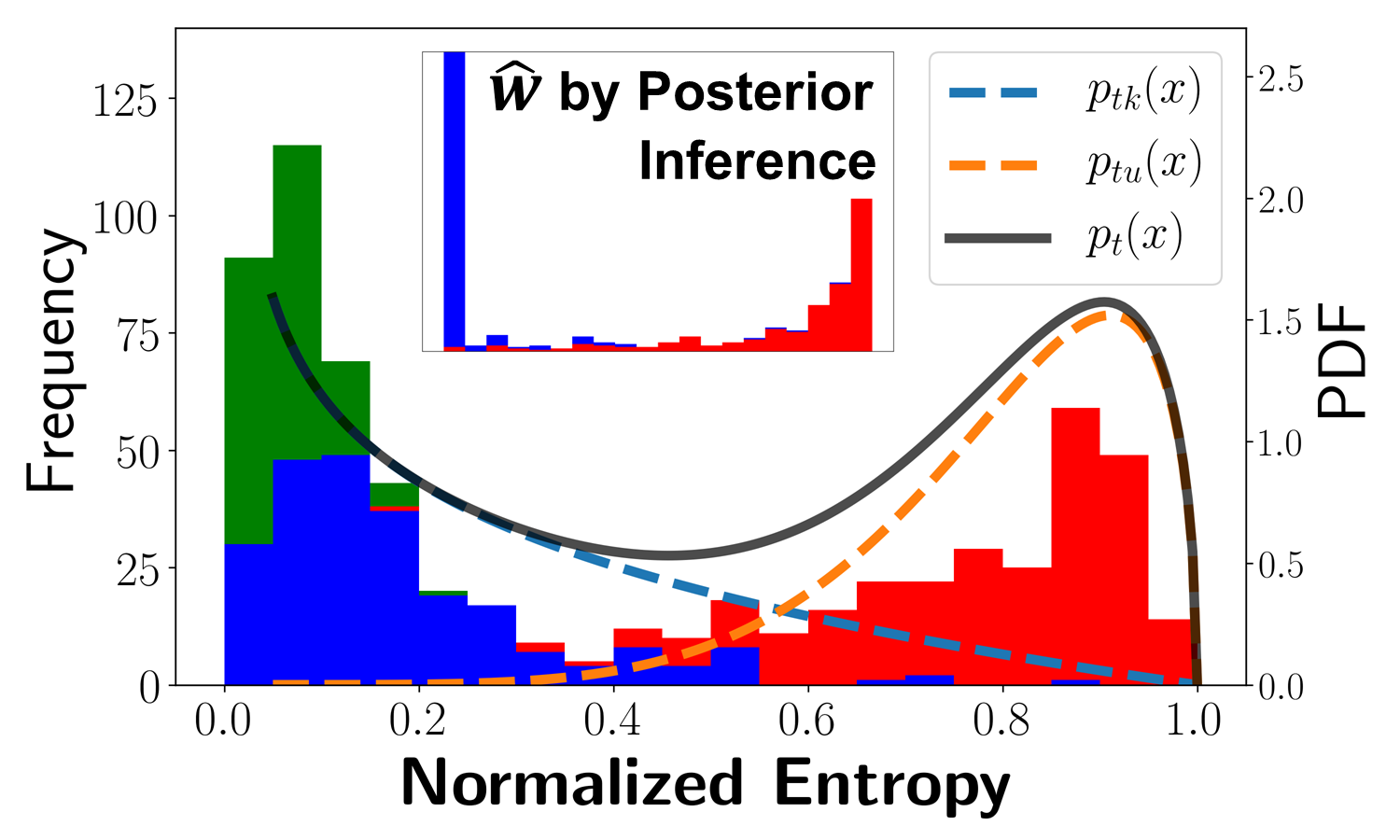}%
\subcaption{\small{Early stage of training}}     \label{fig:histo_warm-up}
\end{minipage}%
\hfill
\begin{minipage}[h]{0.48\textwidth}%
\includegraphics[width=\linewidth]{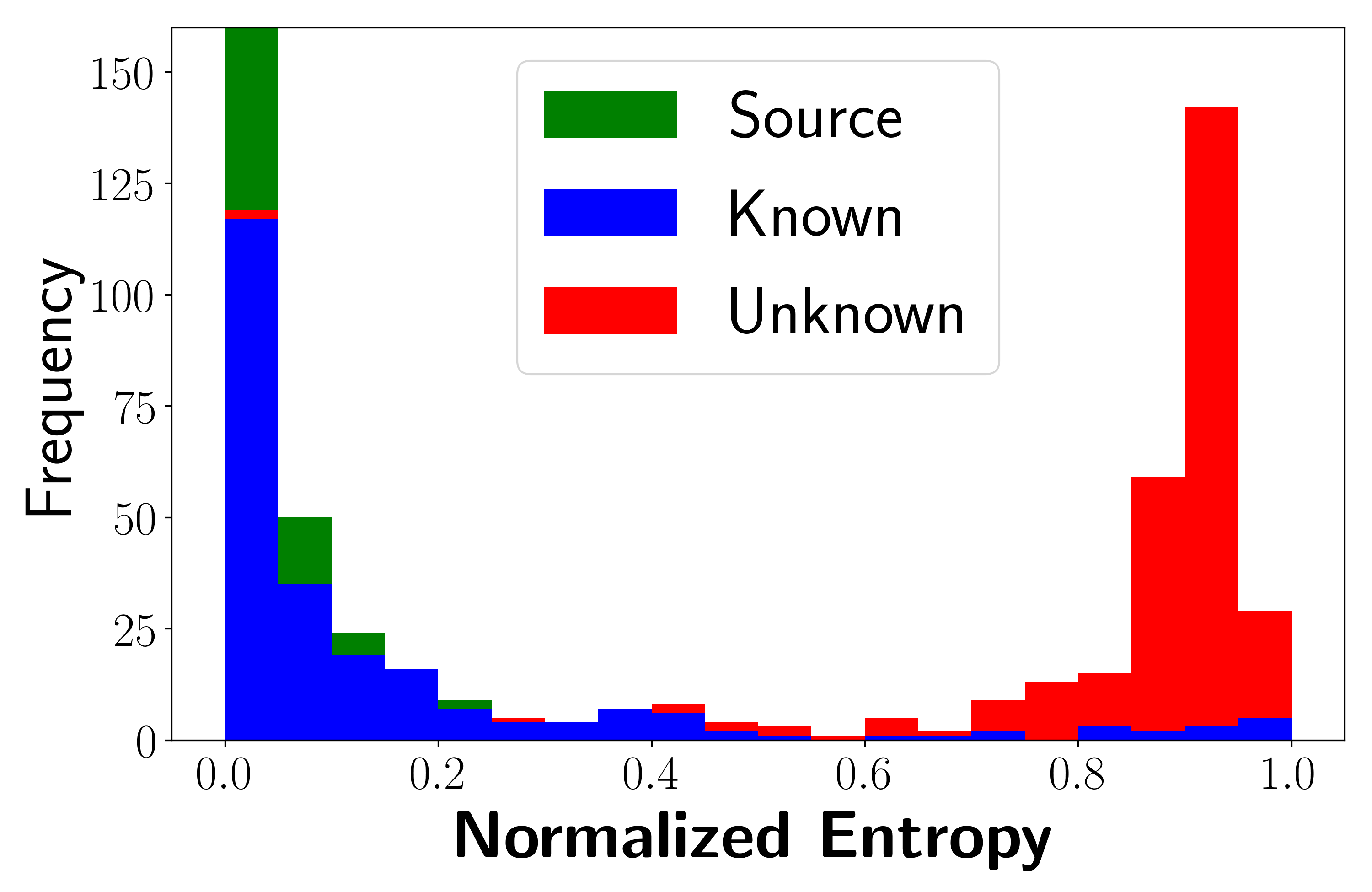}
\subcaption{\small{End stage of training}}     \label{fig:histo_main}
\end{minipage}
\caption{\small{(a) The histogram of the source and target entropy with the fitted posterior inference model, at the Early / End stage of training, where the subfigure in (a) shows the histogram of $\hat{w}_{x}$ at this stage.}}
\end{minipage}%
\hfill
\begin{minipage}[h]{0.36\textwidth}%
\begin{minipage}[h]{0.39\textwidth}%
\includegraphics[width=\textwidth]{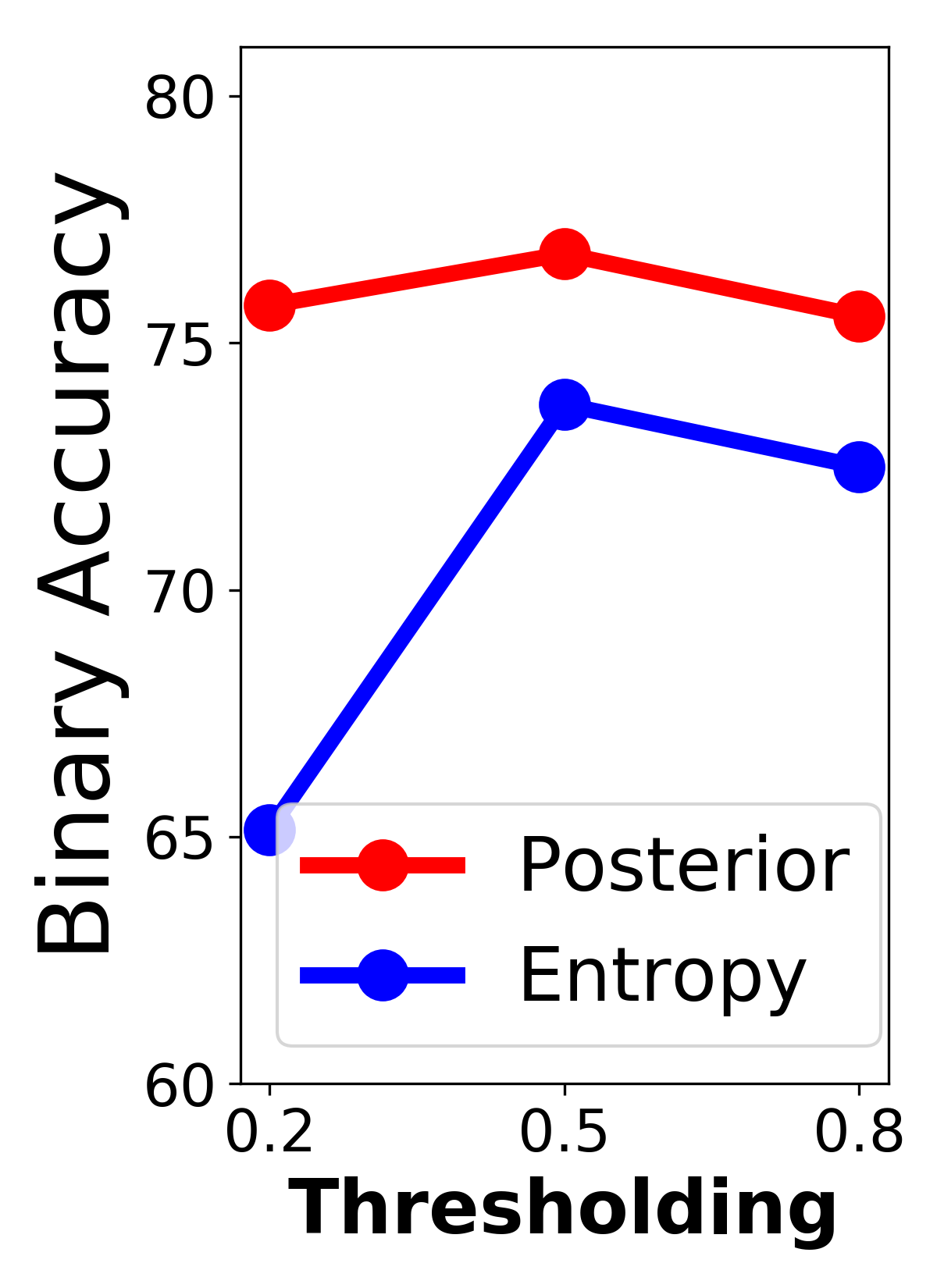}%
\subcaption{\small{Binary Acc.}}         \label{fig:bin_class}
\end{minipage}%
\hfill
\begin{minipage}[h]{0.59\textwidth}%
\includegraphics[width=\linewidth]{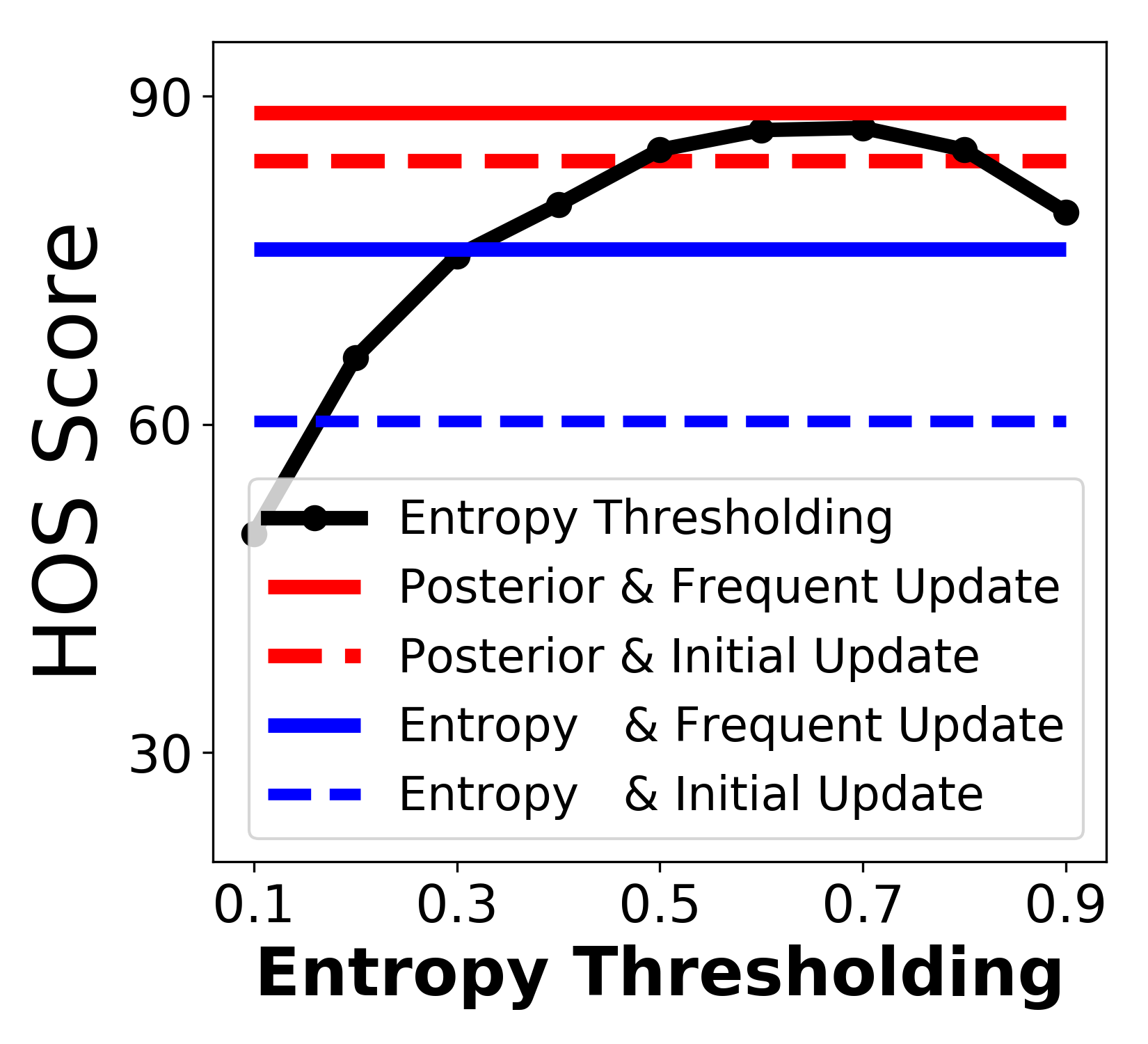}
\subcaption{\small{Ablation Studies }}      \label{fig:entropy_trhesholding}
\end{minipage}
\caption{\small{(a) Binary accuracy on \textit{tk}/\textit{tu} by thresholding $\hat{w}_{x}$. (b) Ablations for posterior inference with thresholding.}}
\end{minipage}%
\end{figure}

\textbf{Superiorities of Open-set Recognition:} 
\textbf{\textit{(Effective Recognition)}} Figure \ref{fig:histo_warm-up} represents the histogram of the source/target entropy values from $E$ after the early stage of training by Eq. (\ref{eqn:e_update}) (sensitivity analysis on the varying number of iterations for the early stage in Appendix \ref{sup-appendix:robust_ws}). Therefore, it demonstrates that our assumption on two modalities of the entropy values holds, robustly. 
Furthermore, the subfigure in Figure \ref{fig:histo_warm-up} shows the histogram of the $\hat{w}_{x}$ by the posterior inference. 
It shows that our \textit{open-set recognition}, $\hat{w}_x$, provides a more clear separation for the target-known (blue) and the target-unknown (red) instances than the entropy values. 
Figure \ref{fig:histo_main} represents the histogram of the entropy at the end of the training, which is further split than Figure \ref{fig:histo_warm-up}. 
This clear separation implicitly supports that the proposed feature alignment accomplishes both \textit{alignment} and \textit{segregation}.

\textbf{\textit{(Informativeness)}} 
Furthermore, we conduct a binary classification task by thresholding the weights, $\hat{w}_{x}$, whether it predicts $tk$ or $tu$ correctly. 
Figure \ref{fig:bin_class} represents the accuracies over the threshold values when applying the posterior inference or the normalized entropy values themselves, in Office-31. 
It shows that the posterior inference discriminates $tk$ and $tu$ more correctly than using the entropy value itself. 
Additionally, the posterior inference has similar accuracies over the threshold values (0.2, 0.5, and 0.8), which also indicates a clear separation. 
Therefore, we conclude that the posterior inference provides a more informative weighting procedure than the entropy value itself. 

\textbf{\textit{(Comparison to Thresholding)}} Figure \ref{fig:entropy_trhesholding} shows that UADAL with the posterior inference outperforms all cases of thresholding the entropy values. 
It means that the posterior inference is free to set thresholds. Moreover, the results demonstrate that UADAL with the posterior inference outperforms the case of using the normalized entropy value, and its frequent updating (solid line) leads to a better performance than only initial updating (dashed line). It is consistent with the analysis from Figure \ref{fig:histo_main}.

\begin{figure}[h]
\begin{minipage}[h]{0.63\textwidth}%
\begin{minipage}[h]{0.5\textwidth}%
\includegraphics[width=\textwidth]{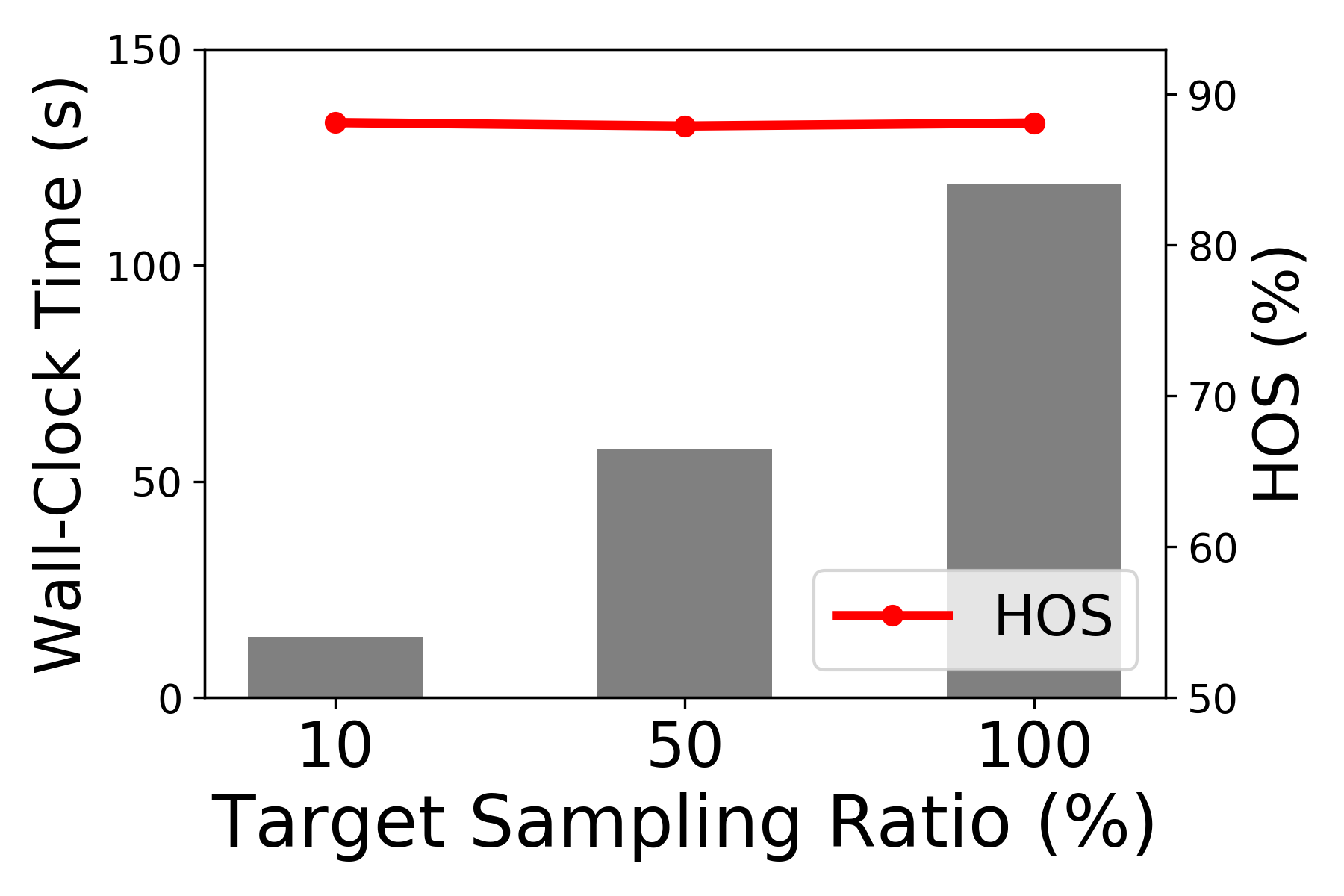}%
\end{minipage}%
\hfill
\begin{minipage}[h]{0.5\textwidth}%
\includegraphics[width=\linewidth]{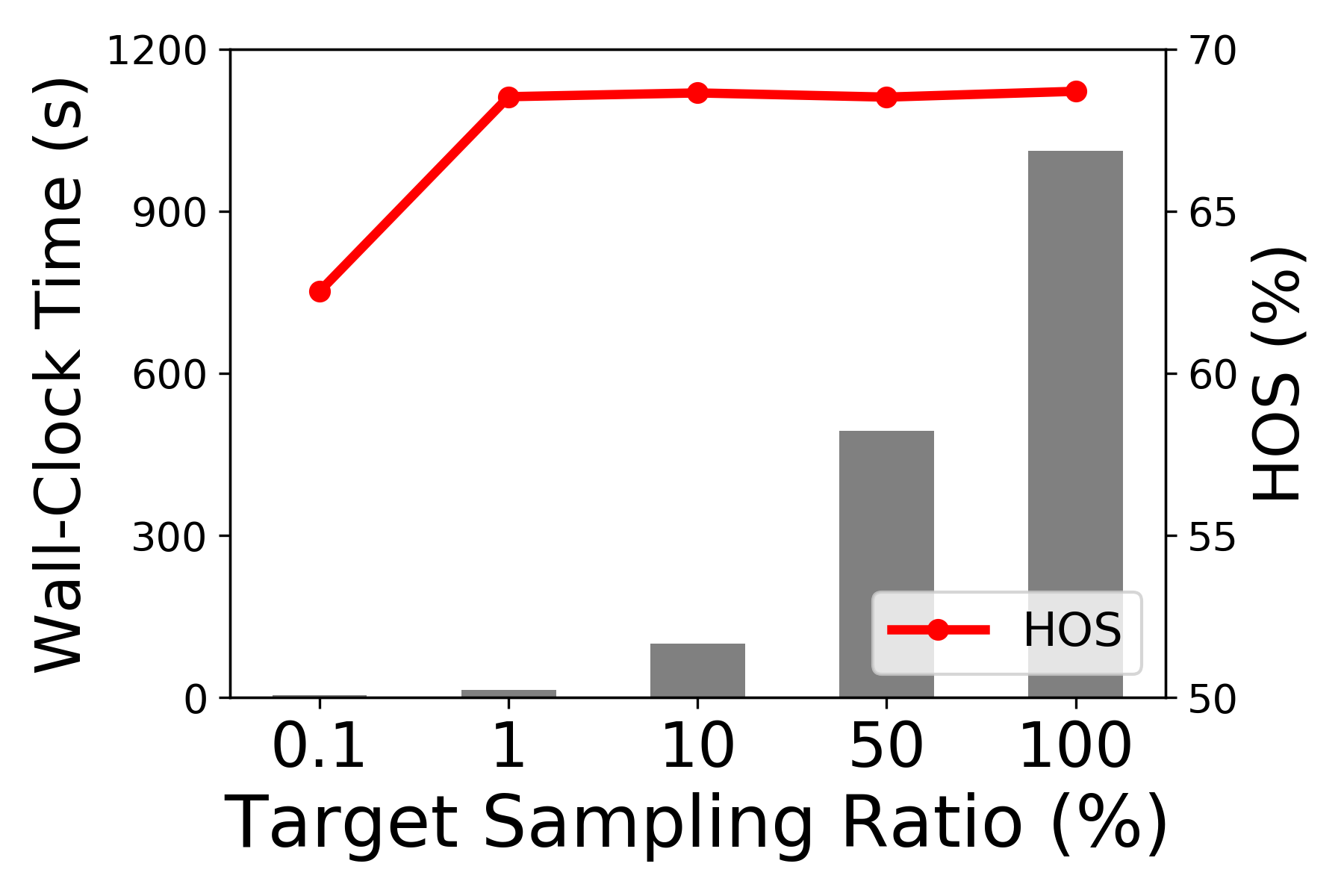}
\end{minipage} 
\captionof{figure}{\small{Ablation study of applying sampling on the target domain with Office-31 (left) and Office-Home (right) w.r.t. the averaged scores. }} \label{main:target_sample} 
\end{minipage}%
\hfill
\begin{minipage}[h]{0.36\textwidth}%
\scalebox{0.75}{
\begin{tabular}{|c|c|c|c|c|}
 \hline
\textbf{Dataset} &  \textbf{Ent.} &  \textbf{OS*}&  \textbf{UNK} &  \textbf{HOS}  \\ \hline
\multirow{2}{*}{\textbf{Office-31}} & X    & 85.4     & 90.0 &87.5   \\
                     &  {O}  &84.8     & 82.1 &\textbf{88.1}     \\ \hline
\multirow{2}{*}{\textbf{Office-Home}} &  X    & 61.9     & 75.2 &67.6     \\
                     &  {O}  & 62.6  & 78.0 & \textbf{68.7} \\ \hline
        \end{tabular} 
}
      \captionof{table}{\small{Ablation on Entropy Minimization (Ent.) Loss from Eq. (\ref{eqn:c_target}) on the Office-31 and Office-Home. The values in the table are the averaged score over the tasks in each dataset. The detailed results are in Table \ref{sup-tab:ablation_entropy_loss} in Appendix \ref{sup-appendix:entropymin}.}} \label{main:ablation_ent}
\end{minipage}%
\end{figure}

\textbf{\textit{(Efficiency)}} In terms of computation, the posterior inference increases the complexity because we need to fit the mixture model, where the computational complexity is $O(nk)$ with the number of samples, $n$, and the number of fitting iterations, $k$ ($n\gg k$).  
As an alternative, we fit the mixture model only by sampling the target instance. Figure \ref{main:target_sample} represents the wall-clock time increased by the fitting process and the performances over the sampling ratio (\%) for the target domain. Since the computational complexity is linearly increased by the number of samples, the wall-clock time is also linearly increased by increasing the sampling ratio. Interestingly, we observed that even when the sampling ratio is small, i.e. 10\%, the performances do not decrease, on both Office-31 and Office-Home datasets. The qualitative analysis of this sampling is provided in Appendix \ref{sup-appendix:sampling_bmm}.

\textbf{Entropy Minimization Loss} 
In order to show the effect of the entropy minimization by Eq. (\ref{eqn:c_target}), we conduct the ablation study on the Office-31 and Office-Home. 
Table \ref{main:ablation_ent} shows that the entropy minimization leads to better performance. However, UADAL without the loss still performs better than the other baselines (compared with Table \ref{tab:hos_office}). 
It represents that UADAL learns the feature space appropriately as we intended to suit Open-Set Domain Adaptation. The properly learned feature space leads to effectively classifying the target instances without entropy minimization.

\section{Conclusion}
We propose Unknown-Aware Domain Adversarial Learning (UADAL), which is the first approach to explicitly design the \textit{segregation} of the unknown features in the domain adversarial learning for OSDA. 
We design a new domain discrimination loss and formulate the sequential optimization for the unknown-aware feature alignment. 
Empirically, UADAL outperforms all baselines on all benchmark datasets with various backbone networks, by reporting state-of-the-art performances.

\begin{ack}
This research was supported by Research and Development on Key Supply Network Identification, Threat Analyses, Alternative Recommendation from Natural Language Data (NRF) funded by the Ministry of Education (2021R1A2C200981612).
\end{ack}

{
\small
\bibliography{egbib}
\bibliographystyle{abbrvnat}
}


\section*{Checklist}

\begin{enumerate}

\item For all authors...
\begin{enumerate}
  \item Do the main claims made in the abstract and introduction accurately reflect the paper's contributions and scope?
    \answerYes{}
  \item Did you describe the limitations of your work?
    \answerYes{Details in Appendix \ref{sup-sec:limitation}.}
  \item Did you discuss any potential negative societal impacts of your work?
    \answerYes{Details in Appendix \ref{sup-sec:limitation}.}
  \item Have you read the ethics review guidelines and ensured that your paper conforms to them?
    \answerYes{}
\end{enumerate}

\item If you are including theoretical results...
\begin{enumerate}
  \item Did you state the full set of assumptions of all theoretical results?
    \answerYes{}
        \item Did you include complete proofs of all theoretical results?
    \answerYes{}
\end{enumerate}

\item If you ran experiments...
\begin{enumerate}
  \item Did you include the code, data, and instructions needed to reproduce the main experimental results (either in the supplemental material or as a URL)?
    \answerYes{We provide code in supplemental material, and URL of data and instructions are provided in Appendix \ref{sup-appendix_dataset}.}
  \item Did you specify all the training details (e.g., data splits, hyperparameters, how they were chosen)?
    \answerYes{}
        \item Did you report error bars (e.g., with respect to the random seed after running experiments multiple times)?
    \answerYes{}
        \item Did you include the total amount of compute and the type of resources used (e.g., type of GPUs, internal cluster, or cloud provider)?
    \answerYes{The details in Appendix \ref{sup-appendix_optim_detail}.}
\end{enumerate}

\item If you are using existing assets (e.g., code, data, models) or curating/releasing new assets...
\begin{enumerate}
  \item If your work uses existing assets, did you cite the creators?
    \answerYes{}
  \item Did you mention the license of the assets?
    \answerYes{}
  \item Did you include any new assets either in the supplemental material or as a URL?
    \answerYes{See Appendix \ref{sup-appendix_baseline} and \ref{sup-appendix_dataset}.}
  \item Did you discuss whether and how consent was obtained from people whose data you're using/curating?
    \answerYes{We mentioned that we use the public benchmark dataset, in Section \ref{main_exp_setting}.}
  \item Did you discuss whether the data you are using/curating contains personally identifiable information or offensive content?
    \answerYes{We use image object recognition datasets, so there are no personaly identifiable information or offensive content.}
\end{enumerate}

\item If you used crowdsourcing or conducted research with human subjects...
\begin{enumerate}
  \item Did you include the full text of instructions given to participants and screenshots, if applicable?
    \answerNA{}
  \item Did you describe any potential participant risks, with links to Institutional Review Board (IRB) approvals, if applicable?
    \answerNA{}
  \item Did you include the estimated hourly wage paid to participants and the total amount spent on participant compensation?
    \answerNA{}
\end{enumerate}

\end{enumerate}

\newpage
\appendix
This is an Appendix for "Unknown-Aware Domain Adversarial Learning for Open-Set Domain Adaptation". 
First, Section \ref{sup-literature_review} provides comprehensive literature reviews.
Second, Section \ref{sup-sec1_algo} provides the details for the proposed model, UADAL. It consists of three parts; 1) Sequential Optimization Problem (Section \ref{sup-appendidx_sequential_optimization}),  2) Posterior Inference (Section \ref{sup-bmm}), and 3) Training Details (Section \ref{sup-appendix:trainig_algorithm}). 
Third, Section \ref{sup-exp_section} provides the experimental details, including the implementation details (Section \ref{sup-appendix_imple_detail}) and the detailed experimental results (Section \ref{sup-appendix_result_analysis}). Finally, Section \ref{sup-sec:limitation} shows `Limitations and Potential Negative Societal Impacts' of this work.

Note that all references and the equation numbers are independent of the main paper. We utilize the expression of "Eq. (XX) in the main paper", especially when referring the equations from the main paper. 

\section{Literature Reviews}\label{sup-literature_review}

\textbf{(Closed-Set) Domain Adaptation (DA)} is the task of leveraging the knowledge of the labeled source domain to the target domain \cite{borgwardt2006integrating, ben2010theory, baktashmotlagh2014domain, long2015learning,ganin2016domain}. Here, DA assumes that the class sets from the source and the target domain are identical. The typical approaches to solve DA have focused on minimizing the discrepancy between the source and the target domain since they are assumed to be drawn from the different distributions \cite{pan2010domain,tzeng2014deep,long2015trans, long2016unsupervised}. The discrepancy-based approaches have been proposed to define the metric to measure the distance between the source and the target domain in the feature space \cite{long2015learning,xu2019larger}. Other works are based on adversarial methods \cite{ganin2015unsupervised,tzeng2017adversarial,long2017deep}. These approaches introduce the domain discriminator and a feature generator in order to learn the feature space to be domain-invariant. Self-training methods are also proposed to mitigate the DA problems by pseudo-labeling the target instances \cite{liu2021cycle,kumar2020understanding,mey2016soft,prabhu2021sentry}, originally tailored to semi-supervised learning \cite{grandvalet2004semi,sohn2020fixmatch}.

\textbf{Open-Set Recognition (OSR)} is the task to classify an instance correctly if it belongs to the known classes or to reject outliers otherwise, during the testing phase \cite{scheirer2012toward}. 
Here, the outliers are called `open-set' which is not available in the training phase. 
Many approaches have been proposed to solve the OSR problem \cite{Bendale_2016_CVPR,neal2018open,sun2020conditional,oza2019c2ae,shu2020p,chen2020learning,chen2021adversarial}. OpenMax \cite{Bendale_2016_CVPR} is the first deep learning approach for OSR, introducing a new model layer to estimate the probability of an instance being \textit{unknown} class, based on the Extreme Value Theory (EVT). 
OSRCI \cite{neal2018open}, another stream of the approaches utilizing GANs, generates the virtual images which are similar to the training instances but do not belong to the known classes. 
Other approaches contain the reconstruction-based methods \cite{sun2020conditional,oza2019c2ae} and prototype-based methods \cite{shu2020p,chen2020learning,chen2021adversarial}. 
Also, \cite{vaze2021open} claims that the performance of OSR is highly correlated with its accuracy on the closed-set classes. This claim is associated with our open-set recognizer.

\textbf{Open-Set Domain Adaptation} is a more realistic and challenging task of Domain Adaptation, where the target domain includes the open-set instances which are not discovered by the source domain.
In terms of the domain adversarial learning, in addition to STA, 
OSBP \cite{saito2018open} utilizes a classifier to predict the target instances to the pre-determined threshold, and trains the feature extractor to deceive the classifier for aligning to \textit{known} classes or rejecting as \textit{unknown} class. 
However, their recognition on \textit{unknown} class only depends on the threshold value, without considering the data instances.
PGL \cite{luo2020progressive} introduces progressive graph-based learning to regularize the class-specific manifold, while jointly optimizing with domain adversarial learning. However, their adversarial loss includes all instances of the target domain, which is critically weakened by the negative transfer.


In terms of the self-supervised learning, 
ROS \cite{ROS2020ECCV} is rotation-based self-supervised learning to compute the normality score for separating target known/unknown information. 
DANCE \cite{DANCE2020NIPS} is based on a self-supervised clustering to move a target instance either to shared-known prototypes in the source domain or to its neighbor in the target domain. DCC \cite{li2021domain} is a domain consensus clustering to exploit the intrinsic structure of the target domain. However, these approaches do not have any feature alignments between the source and the target domain, which leads to performance degradation under the significant domain shifts. 

There is also a notable work, OSLPP \cite{wang2021progressively} optimizing projection matrix toward a common subspace to class-wisely align the source and the target domain. 
This class-wise matching depends on the pseudo label for the target instances. However, inaccurate pseudo labels such as early mistakes can result in error accumulation and domain misalignment \cite{prabhu2021sentry}. 
Moreover, the optimization requires the pair-wise distance calculation, which results in a growing complexity of $O(n^2)$ by the $n$ data instances. It could be limited to the large-scaled domain.


\section{Algorithm and Optimization Details} \label{sup-sec1_algo}
\subsection{Sequential Optimization Problem} \label{sup-appendidx_sequential_optimization}
\subsubsection{Decomposition of Domain Discrimination Loss} \label{sup-lossDecompose}
The domain discrimination loss for the target domain is as below, (Eq. (\ref{eqn:loss_d_t}) in the main paper)
\begin{equation*}\label{sup-eqn:dom_t}
\mathcal{L}_{d}^{t}(\theta_{g}, \theta_{d})=\mathbb E_{p_{t}(x)}[\ -w_{x} \log D_{tk}(G(x))-(1-w_{x}) \log D_{tu}(G(x))],
\end{equation*}
where $w_{x}=p(known|x)$ is the probability of a target instance, $x$, belonging to a target-known class. Then, we decompose $\mathcal{L}_{d}^{t}(\theta_{g}, \theta_{d})$ into the two terms, $\mathcal{L}_{d}^{tk}(\theta_{g}, \theta_{d})$ and $\mathcal{L}_{d}^{tu}(\theta_{g}, \theta_{d})$, as follows,
\begin{align*} 
\mathcal{L}_{d}^{t}(\theta_{g}, \theta_{d}) &= \mathcal{L}_{d}^{tk}(\theta_{g}, \theta_{d})  +  \mathcal{L}_{d}^{tu}(\theta_{g}, \theta_{d}), \\
\mathcal{L}_{d}^{tk}(\theta_{g}, \theta_{d}) &:=\lambda_{tk}\cdot\mathbb E_{p_{tk}(x)}\left[ - \log D_{tk}(G(x))\right], \\
\mathcal{L}_{d}^{tu}(\theta_{g}, \theta_{d}) &:=\lambda_{tu}\cdot\mathbb E_{p_{tu}(x)}\left[ - \log D_{tu}(G(x))\right],
\end{align*}
where $ p_{tk}(x):=p_{t}(x|known)$ and $p_{tu}(x):=p_{t}(x|unknown)$; $\lambda_{tk}=p(known)$; and $\lambda_{tu}=p(unknown)$. 
\begin{proof}
We start this proof from Eq. (\ref{eqn:loss_d_t}) in the main paper,
\begin{align*}
\mathcal{L}_{d}^{t}(\theta_{g}, \theta_{d})=&\ \mathbb E_{p_{t}(x)}[\ -w_{x} \log D_{tk}(G(x))-(1-w_{x}) \log D_{tu}(G(x))].
\end{align*}
For the convenience of the derivation, we replace $w_{x}$ as $p(known|x)$.
  \begin{align*}
\mathcal{L}_{d}^{t} & (\theta_{g}, \theta_{d})  = \mathbb E_{x \sim p_{t}(x)}\left[\ - p_{t}(known |x)\log D_{tk}(G(x))- p_{t}(unknown |x) \log D_{tu}(G(x))\right]
\end{align*}
\begin{align*}
&=  \int\limits_{x \sim p_{t}(x)} \Big(-p_{t}(known \vert x)\log D_{tk}(G(x)) - p_{t}(unknown \vert x)\log D_{tu}(G(x))\Big) \, dx \\
&=  \int\limits_{x}\Big( -p_{t}(x)(p_{t}(known \vert x)\log D_{tk}(G(x)) - p_{t}(x)p_{t}(unknown \vert x)\log D_{tu}(G(x))\Big) \, dx \\
&=  \int\limits_{x} \Big(-p_{t}(known, x)\log D_{tk}(G(x)) - p_{t}(unknown, x)\log D_{tu}(G(x))\Big) \, dx \\
&=  \int\limits_{x} -p_{t}(x\vert known)p_{t}(known)\log D_{tk}(G(x)) \, dx  + \int\limits_{x}- p_{t}(x\vert unknown)p_{t}(unknown)\log D_{tu}(G(x)) \, dx \\
&=  p_{t}(known)\hspace{-0.1em}\int\limits_{x}-p_{t}(x \vert known)\log D_{tk}(G(x)) \, dx + p_{t}(unknown)\hspace{-0.1em}\int\limits_{x} - p_{t}(x |unknown)\log D_{tu}(G(x)) \, dx \\
&= p_{t}(known)\hspace{-0.1em} \int\limits_{x \sim p_{tk}(x)} - \log D_{tk}(G(x)) \,dx + p_{t}(unknown)\hspace{-0.1em} \int\limits_{x \sim p_{tu}(x)} - \log D_{tu}(G(x)) \,dx \\
& = p_{t}(known)\mathbb E_{x \sim p_{tk}(x)}\left[ - \log D_{tk}(G(x))\right] +p_{t}(unknown) \mathbb E_{x \sim p_{tu}(x)}\left[ - \log D_{tu}(G(x))\right] \\
& =\lambda_{tk}\mathbb E_{x \sim p_{tk}(x)}\left[ - \log D_{tk}(G(x))\right] +\lambda_{tu} \mathbb E_{x \sim p_{tu}(x)}\left[ - \log D_{tu}(G(x))\right].
\end{align*}
Thus, we define new terms with respect to $p_{tk}(x)$ and $p_{tu}(x)$ as follow.
\begin{align*}
\mathcal{L}_{d}^{tk}(\theta_{g}, \theta_{d}) & :=  \lambda_{tk}\mathbb E_{x \sim p_{tk}(x)}\left[ - \log D_{tk}(G(x))\right]  \\
\mathcal{L}_{d}^{tu}(\theta_{g}, \theta_{d}) & := \lambda_{tu}\mathbb E_{x \sim p_{tu}(x)}\left[ - \log D_{tu}(G(x))\right]
\end{align*}
Therefore, by the above derivation, we decompose $\mathcal{L}_{d}^{t}(\theta_{g}, \theta_{d})$ as follow, 
\begin{align*}
\mathcal{L}_{d}^{t}(\theta_{g}, \theta_{d}) = \mathcal{L}_{d}^{tk}(\theta_{g}, \theta_{d})  +  \mathcal{L}_{d}^{tu}(\theta_{g}, \theta_{d}).
\end{align*}

\end{proof}

\subsubsection{Optimal point of the domain discriminator $D$} \label{sup-appendidx_optim_D}
The optimal $D^{*}$ given the fixed $G$ is as follow (Eq. (\ref{eqn:optimal_D}) in the main paper), 
\begin{align*}
\label{sup-appendix_eqn:optimal_D}
D^{*}(G(x;\theta_{g})) =D^{*}(z)=\Big[ \frac{p_{s}(z)}{2 p_{avg}(z)}, \frac{\lambda_{tk} p_{tk}(z)}{2 p_{avg}(z)}, \frac{\lambda_{tu} p_{tu}(z)}{2 p_{avg}(z)} \Big], 
\end{align*}
where $p_{avg}(z)=(p_{s}(z) + \lambda_{tk} p_{tk}(z) + \lambda_{tu} p_{tu}(z))/2$. Note that $z\in\mathcal{Z}$ stands for the feature space from $G$. In other words, $p_{d}(z) = \{G(x;\theta_{g})|x \sim p_{d}(x)\}$ where $d$ is $s$, $tk$, or  $tu$. 
\begin{proof}
First, we fix $G$, and optimize the problem with respect to $D$.
  \begin{align*}
 \min_{\theta_{d}}\ & \mathcal{L}_{D}(\theta_{g}, \theta_{d}) =\mathcal{L}_{d}^{s}(\theta_{g}, \theta_{d}) +\ \mathcal{L}_{d}^{tk}(\theta_{g}, \theta_{d}) + \mathcal{L}_{d}^{tu}(\theta_{g}, \theta_{d})\\
&=  -\int\limits_{x \sim p_{s}(x)}\hspace{-1em} \log D_{s}(G(x)) \,dx - \lambda_{tk}  \hspace{-1em}  \int\limits_{x \sim p_{tk}(x)}\hspace{-1em}\log D_{tk}(G(x)) \,dx -  \lambda_{tu}\hspace{-1em} \int\limits_{x \sim p_{tu}(x)}\hspace{-1em} \log D_{tu}(G(x)) \,dx\\
&=  -\int\limits_{z \sim p_{s}(z)}\hspace{-1em} \log D_{s}(z) \,dz - \lambda_{tk} \hspace{-1em}  \int\limits_{z \sim p_{tk}(z)}\hspace{-1em} \log D_{tk}(z) \,dz -  \lambda_{tu}\hspace{-1em} \int\limits_{x \sim p_{tu}(z)}\hspace{-1em} \log D_{tu}(z) \,dz\\
    &= \int_{z} \Big(-p_{s}(z)\log D_{s}(z) - \lambda_{tk} p_{tk}(z)\log D_{tk}(z) - \lambda_{tu} p_{tu}(z)\log D_{tu}(z)\Big)\,dz
  \end{align*}
Also, note that $D_{s}(z)+D_{tk}(z)+D_{tu}(z)=1$ for all $z$. Therefore, we transform the optimization problem as follow \cite{goodfellow2014generative}:
  \begin{align*}
\min_{\theta_{d}}& \quad -p_{s}(z)\log D_{s}(z) - \lambda_{tk} p_{tk}(z)\log D_{tk}(z) - \lambda_{tu} p_{tu}(z)\log D_{tu}(z) \\
 \text{s.t.}& \quad  D_{s}(z)+D_{tk}(z)+D_{tu}(z)=1
  \end{align*}
for all $z$. We introduce the Lagrange variable $v$ to use Lagrange multiplier method.  
\begin{align*}
\begin{split}
\min_{\theta_{d}} \mathcal{\tilde{L}}_D :=& -p_{s}(z)\log D_{s}(z) - \lambda_{tk} p_{tk}(z)\log D_{tk}(z) - \lambda_{tu} p_{tu}(z)\log D_{tu}(z) \\
	&\quad +  v  (D_{s}(z)+D_{tk}(z)+D_{tu}(z) -1) \\
\end{split}  
\end{align*}
To find optimal $D^{*}$, we find the derivative of $\mathcal{\tilde{L}}_D$ with respect to $D$ and $v$.
\begin{align*}
&\frac{\partial \mathcal{\tilde{L}}_D}{\partial D_{s}(z)} = \frac{-p_{s}(z)}{D_{s}(z)}+v=0\quad \Leftrightarrow \quad D_{s}(z) = \frac{p_{s}(z)}{v} \\
&\frac{\partial \mathcal{\tilde{L}}_D}{\partial D_{tk}(z)} = \frac{-\lambda_{tk} p_{tk}(z)}{D_{tk}(z)}+v=0\quad \Leftrightarrow \quad D_{tk}(z) = \frac{\lambda_{tk} p_{tk}(z)}{v} \\
&\frac{\partial \mathcal{\tilde{L}}_D}{\partial D_{tu}(z)} = \frac{-\lambda_{tu} p_{tu}(z)}{D_{tu}(z)}+v=0\quad \Leftrightarrow \quad D_{tu}(z) = \frac{\lambda_{tu} p_{tu}(z)}{v} \\
&\frac{\partial \mathcal{\tilde{L}}_D}{\partial v} =D_{s}(z)+D_{tk}(z)+D_{tu}(z) -1 =0 \quad \Leftrightarrow \quad D_{s}(z)+D_{tk}(z)+D_{tu}(z)=1
\end{align*}
From the above equations, we have 
\begin{align*}
D_{s}(z)+D_{tk}(z)+D_{tu}(z) = \frac{p_{s}(z)}{v}  + \frac{\lambda_{tk} p_{tk}(z)}{v} +  \frac{\lambda_{tu} p_{tu}(z)}{v} = 1,
\end{align*}
then,
\begin{align*}
 v=p_{s}(z) + \lambda_{tk} p_{tk}(z) + \lambda_{tu} p_{tu}(z) = 2 p_{avg}(z).
\end{align*}
Thus, we get optimal $D^{*}$ as
\begin{align*}
D^{*}(z)=[D_{s}^{*}(z), D_{tk}^{*}(z), D_{tu}^{*}(z) ]= \Big[ \frac{p_{s}(z)}{2 p_{avg}(z)}, \frac{\lambda_{tk} p_{tk}(z)}{2 p_{avg}(z)}, \frac{\lambda_{tu} p_{tu}(z)}{2 p_{avg}(z)} \Big].
\end{align*}
\end{proof}

\subsubsection{Proof of Theorem \ref{theorem1_optim_G}} \label{sup-appendix_optim_G}
\begin{theorem}
Let $\theta^{*}_{d}$ be the optimal parameter of $D$ by optimizing Eq. (\ref{eqn:max_D}) in the main paper. Then, $- \mathcal{L}_{G}(\theta_{g}, \theta^{*}_{d})$ can be expressed as, with a constant $C_{0}$,
\begin{align*}
 -\mathcal{L}_{G}(\theta_{g},& \theta^{*}_{d}) = D_{KL}( p_{s} \Vert p_{avg} ) + \lambda_{tk} D_{KL}( p_{tk} \Vert p_{avg} ) - \lambda_{tu} D_{KL}( p_{tu} \Vert p_{avg} )+C_{0}.
\end{align*}
\end{theorem}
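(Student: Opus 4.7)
The plan is to substitute the closed-form optimal discriminator $D^{*}$ (already derived in Appendix \ref{sup-appendidx_optim_D}) directly into the three components of $\mathcal{L}_{G}$ and then regroup the resulting integrals so that each one matches the definition of a KL divergence against $p_{avg}$. Since $\mathcal{L}_{G}(\theta_g,\theta_d)=\mathcal{L}_{d}^{s}+\mathcal{L}_{d}^{tk}-\mathcal{L}_{d}^{tu}$, the quantity $-\mathcal{L}_{G}(\theta_g,\theta_d^{*})$ is just $-\mathcal{L}_{d}^{s}-\mathcal{L}_{d}^{tk}+\mathcal{L}_{d}^{tu}$ evaluated at $D^{*}$, so I would process the three expectations separately and then add them back.

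First I would take the source term. Using $D_{s}^{*}(z)=p_{s}(z)/(2p_{avg}(z))$, I rewrite $-\mathcal{L}_{d}^{s}=\mathbb{E}_{p_{s}(z)}[\log D_{s}^{*}(z)]$ as
\begin{equation*}
\mathbb{E}_{p_{s}(z)}\!\left[\log\frac{p_{s}(z)}{p_{avg}(z)}\right]-\log 2 \;=\; D_{KL}(p_{s}\Vert p_{avg})-\log 2.
\end{equation*}
For the target-known term I use $D_{tk}^{*}(z)=\lambda_{tk}p_{tk}(z)/(2p_{avg}(z))$ inside $-\mathcal{L}_{d}^{tk}=\lambda_{tk}\mathbb{E}_{p_{tk}(z)}[\log D_{tk}^{*}(z)]$, splitting the logarithm into $\log(p_{tk}/p_{avg})+\log\lambda_{tk}-\log 2$, yielding $\lambda_{tk}D_{KL}(p_{tk}\Vert p_{avg})+\lambda_{tk}(\log\lambda_{tk}-\log 2)$. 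The target-unknown term contributes with the opposite sign through $+\mathcal{L}_{d}^{tu}$, and the identical expansion with $D_{tu}^{*}$ gives $-\lambda_{tu}D_{KL}(p_{tu}\Vert p_{avg})-\lambda_{tu}(\log\lambda_{tu}-\log 2)$.

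Finally I would collect the three KL-divergence pieces to match the statement and absorb the remaining constants $-\log 2+\lambda_{tk}(\log\lambda_{tk}-\log 2)-\lambda_{tu}(\log\lambda_{tu}-\log 2)$ into the constant $C_{0}$. A small bookkeeping remark: the transition from data-space integrals $\mathbb{E}_{p_{d}(x)}[\,\cdot\,]$ to feature-space integrals $\mathbb{E}_{p_{d}(z)}[\,\cdot\,]$ (where $p_{d}(z)$ is the pushforward under $G$) is the same change of variables used in Appendix \ref{sup-appendidx_optim_D}, so no additional Jacobian factor appears provided the loss is viewed in the feature space where $D$ acts.

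I do not expect any serious technical obstacle: the statement is an algebraic identity once $D^{*}$ is plugged in, and the only subtlety is being careful with the signs (the minus sign in front of $\mathcal{L}_{d}^{tu}$ in $\mathcal{L}_{G}$ flips to a plus after the outer negation, which is exactly what produces the $-\lambda_{tu}D_{KL}(p_{tu}\Vert p_{avg})$ term). The constant $C_{0}$ depends only on $\lambda_{tk}$ and $\lambda_{tu}$, not on $\theta_{g}$, so the identity is well-formed as a statement about the $G$-dependent objective.
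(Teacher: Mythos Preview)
Your proposal is correct and follows essentially the same route as the paper's own proof: substitute the closed-form $D^{*}$ into each of the three expectations, split each $\log(\lambda_{\cdot}\,p_{\cdot}/(2p_{avg}))$ into the KL piece and a scalar remainder, and then collect constants. Your computed $C_{0}=-\log 2+\lambda_{tk}(\log\lambda_{tk}-\log 2)-\lambda_{tu}(\log\lambda_{tu}-\log 2)$ simplifies (using $\lambda_{tk}+\lambda_{tu}=1$) to the paper's $-2\lambda_{tk}\log 2+\lambda_{tk}\log\lambda_{tk}-\lambda_{tu}\log\lambda_{tu}$, so the bookkeeping is consistent.
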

\begin{proof}
First, we change maximization problem into minimization problem, and substitute $D^{*}$ by using Eq. (\ref{eqn:optimal_D}) in the main paper. Note that $p_{avg}=(p_{s}(z) + \lambda_{tk} p_{tk}(z) + \lambda_{tu} p_{tu}(z))/2$.
\begin{align}
\min_{\theta_{g}}\ -& \mathcal{L}_{G}(\theta_{g}, \theta_{d}) = -\mathcal{L}_{d}^{s}(\theta_{g}, \theta_{d})  - \ \mathcal{L}_{d}^{tk}(\theta_{g}, \theta_{d}) + \mathcal{L}_{d}^{tu}(\theta_{g}, \theta_{d}) \\
 &=\int_{z} \left(p_{s}(z)\log D_{s}^{*}(z) +\lambda_{tk}  p_{tk}(z)\log D_{tk}^{*}(z) - \lambda_{tu} p_{tu}(z)\log D_{tu}^{*}(z)\right)\,dz \\   \label{sup-appendix_eq_from}
&= \int_{z} (\ p_{s}(z)\log {\frac{p_{s}(z)}{2 p_{avg}(z)}}  + \lambda_{tk} p_{tk}(z)\log{\frac{\lambda_{tk} p_{tk}(z)}{2 p_{avg}(z)}} -\lambda_{tu} p_{tu}(z)\log{\frac{\lambda_{tu} p_{tu}(z)}{2 p_{avg}(z)}}\ )\, dz \\  \label{sup-appendix_eq_to}
&=  D_{KL}(p_{s} \Vert p_{avg}) + \lambda_{tk} D_{KL}(p_{tk} \Vert  p_{avg}) - \lambda_{tu} D_{KL}(p_{tu} \Vert  p_{avg}) + C_{0}
\end{align}
where $C_{0}= -2\lambda_{tk}\log2 + \lambda_{tk}\log\lambda_{tk} - \lambda_{tu}\log\lambda_{tu}$.

We use below derivation for the last equation, i.e. from Eq. (\ref{sup-appendix_eq_from}) to Eq. (\ref{sup-appendix_eq_to}). 
\begin{align*}
D_{KL}(p_{s} \Vert p_{avg}) & =  \int_{z}p_{s}(z)\log {\frac{p_{s}(z)}{(p_{avg}(z)}} \,dz \\
& =  \int_{z}p_{s}(z)\log {\frac{2 p_{s}(z)}{p_{s}(z) + \lambda_{tk} p_{tk}(z) + \lambda_{tu} p_{tu}(z)}} \,dz \\
&= \int_{z}p_{s}(z)\log {\frac{p_{s}(z)}{p_{s}(z) + \lambda_{tk} p_{tk}(z) + \lambda_{tu} p_{tu}(z)}} \,dz \\
&\quad\quad + \int_{z}p_{s}(z)\log {2} \,dz \\
&= \int_{z}p_{s}(z)\log {\frac{p_{s}(z)}{2 p_{avg}(z)}} \,dz + \int_{z}p_{s}(z)\log {2} \,dz
\end{align*}
Thus,
\begin{align}
\int_{z}p_{s}(z)\log {\frac{p_{s}(z)}{2 p_{avg}(z)}} \,dz = D_{KL}(p_{s} \Vert p_{avg}) - \log{2} \label{sup-p_s}
\end{align}
For the second term, 
\begin{align*}
&D_{KL}(p_{tk} \Vert p_{avg}) \\
& =  \int_{z}p_{tk}(z)\log {\frac{p_{tk}(z)}{p_{avg}(z)}} \,dz \\
& =  \int_{z}p_{tk}(z)\log {\frac{2 p_{tk}(z)}{p_{s}(z) + \lambda_{tk} p_{tk}(z) + \lambda_{tu} p_{tu}(z)}} \,dz \\ 
&= \int_{z}p_{tk}(z)\log {\frac{p_{tk}(z)}{p_{s}(z) + \lambda_{tk} p_{tk}(z) + \lambda_{tu} p_{tu}(z)}} \,dz + \int_{z}p_{tk}(z)\log {2} \,dz \\
&= \int_{z}p_{tk}(z)\log {\frac{p_{tk}(z)}{p_{s}(z) + \lambda_{tk} p_{tk}(z) + \lambda_{tu} p_{tu}(z)}} \,dz + \log {2} \\
&= \int_{z}p_{tk}(z)(\log {\frac{p_{tk}(z)}{p_{s}(z) + \lambda_{tk} p_{tk}(z) + \lambda_{tu} p_{tu}(z)}} + \log\lambda_{tk} - \log\lambda_{tk}) \,dz + \log {2} \\
&= \int_{z}p_{tk}(z)\log {\frac{\lambda_{tk} p_{tk}(z)}{p_{s}(z) + \lambda_{tk} p_{tk}(z) + \lambda_{tu} p_{tu}(z)}}\,dz - \log\lambda_{tk} + \log {2} \\
&= \int_{z}p_{tk}(z)\log {\frac{\lambda_{tk} p_{tk}(z)}{2p_{avg}(z)}}\,dz - \log\lambda_{tk} + \log {2} \\
\end{align*}
By multiplying $\lambda_{tk}$,
\begin{align*}
\lambda_{tk} D_{KL}(p_{tk} \Vert p_{avg}) &= \int_{z}\lambda_{tk}  p_{tk}(z)\log {\frac{\lambda_{tk} p_{tk}(z)}{2p_{avg}(z)}}\,dz -  \lambda_{tk}\log{\frac{\lambda_{tk}}{2}}     
\end{align*}
Thus, 
\begin{align} \label{sup-p_tk}
\int_{z}\lambda_{tk}  p_{tk}(z)\log {\frac{\lambda_{tk} p_{tk}(z)}{2 p_{avg}(z)}}\,dz &= \lambda_{tk} D_{KL}(p_{tk} \Vert p_{avg})+   \lambda_{tk}\log{\frac{\lambda_{tk}}{2}}
\end{align}
Similarly, for the third term,
\begin{align}  \label{sup-p_tu}
\int_{z}\lambda_{tu}  p_{tu}(z)\log {\frac{\lambda_{tu}  p_{tu}(z)}{2p_{avg}(z)}}\,dz &= \lambda_{tu}  D_{KL}(p_{tu} \Vert p_{avg}) +   \lambda_{tu}\log{\frac{\lambda_{tu}}{2}}
\end{align}
In summary, from the Eq. (\ref{sup-p_s}), (\ref{sup-p_tk}), and (\ref{sup-p_tu}), we obtain the minimization problem with respect to $G$ as follows,
\begin{align}
\begin{split}\label{sup-eq_final_G}
\min_{\theta_{g}}\ - \mathcal{L}_{G}(\theta_{g}, \theta_{d})& = -\mathcal{L}_{d}^{s}(\theta_{g}, \theta_{d})  - \ \mathcal{L}_{d}^{tk}(\theta_{g}, \theta_{d}) + \mathcal{L}_{d}^{tu}(\theta_{g}, \theta_{d}) \\
&=  D_{KL}(p_{s} \Vert p_{avg}) + \lambda_{tk} D_{KL}(p_{tk} \Vert p_{avg}) - \lambda_{tu} D_{KL}(p_{tu} \Vert p_{avg}) + C_{0},
\end{split}
\end{align}
where $C_{0}=-\log{2}+\lambda_{tk}\log{\frac{\lambda_{tk}}{2}}-\lambda_{tu}\log{\frac{\lambda_{tu}}{2}}= -2\lambda_{tk}\log2 + \lambda_{tk}\log\lambda_{tk} - \lambda_{tu}\log\lambda_{tu}$.
\end{proof}

\subsubsection{Proof of Proposition \ref{thm:noninf2}} \label{sup-boundness}
\begin{proposition}\label{sup-appendix_thm:noninf}
The third term of the right-hand side in Eq. (\ref{eqn:optimal_G}) in the main paper,  $D_{KL}( p_{tu} \Vert p_{avg} )$, is bounded to\ $\log2-\log{\lambda_{tu}}$ .
\end{proposition}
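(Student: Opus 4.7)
The plan is to exploit the fact that $p_{avg}$ is a mixture that includes $\lambda_{tu} p_{tu}$ as one of its components, so $p_{avg}$ must dominate $p_{tu}$ up to a multiplicative constant. Concretely, from the definition $p_{avg}(z) = \tfrac{1}{2}\bigl(p_s(z) + \lambda_{tk} p_{tk}(z) + \lambda_{tu} p_{tu}(z)\bigr)$, and since $p_s(z) \geq 0$ and $\lambda_{tk} p_{tk}(z) \geq 0$ pointwise, I would first observe the pointwise inequality
\[
p_{avg}(z) \;\geq\; \tfrac{\lambda_{tu}}{2}\, p_{tu}(z), \qquad \text{equivalently} \qquad \frac{p_{tu}(z)}{p_{avg}(z)} \;\leq\; \frac{2}{\lambda_{tu}}.
\]

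Next, I would plug this pointwise bound into the definition of the KL divergence. Since $\log$ is monotone, for every $z$ in the support of $p_{tu}$ we get $\log\frac{p_{tu}(z)}{p_{avg}(z)} \leq \log\frac{2}{\lambda_{tu}} = \log 2 - \log \lambda_{tu}$. Integrating against $p_{tu}$, which is a probability density (so $\int p_{tu}(z)\,dz = 1$), yields
\[
D_{KL}(p_{tu} \Vert p_{avg}) \;=\; \int p_{tu}(z)\log\frac{p_{tu}(z)}{p_{avg}(z)}\,dz \;\leq\; \log 2 - \log \lambda_{tu},
\]
which is exactly the claimed bound.

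There is no real obstacle here: the proof is a one-line application of the mixture lower bound for $p_{avg}$ combined with monotonicity of $\log$, and it relies only on $\lambda_{tu} > 0$ (which holds whenever the target domain actually contains unknowns, i.e. whenever OSDA is non-degenerate). The only mild subtlety worth noting is the measure-theoretic convention that $p_{tu}(z) = 0$ implies the integrand is defined as $0$, so points outside $\mathrm{supp}(p_{tu})$ do not affect the bound. This explains why the third KL term in Theorem~\ref{theorem1_optim_G} cannot diverge to $-\infty$ when it enters $-\mathcal{L}_G$ with a negative coefficient, ensuring the first two KL terms retain their alignment effect during the sequential optimization.
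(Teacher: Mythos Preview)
Your proposal is correct and follows essentially the same approach as the paper: both arguments rest on the pointwise inequality $p_{avg}(z) \geq \tfrac{\lambda_{tu}}{2}\,p_{tu}(z)$, which comes from dropping the nonnegative terms $p_s$ and $\lambda_{tk}p_{tk}$ in the mixture. The paper carries this out by first splitting off the $\log 2$ factor and then bounding $-\log\bigl(p_s+\lambda_{tk}p_{tk}+\lambda_{tu}p_{tu}\bigr)$ by $-\log(\lambda_{tu}p_{tu})$, whereas you bound the ratio $p_{tu}/p_{avg}$ directly; the two computations are equivalent.
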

\begin{proof}
\begin{align*}
&D_{KL}(p_{tu} \Vert p_{avg}) = \int_{z} p_{tu}(z) \log{\frac{p_{tu}(z)}{p_{avg}(z)}} \,dz \\
&= \int_{z}p_{tu}(z) \log{\frac{2 p_{tu}(z)}{p_{s}(z) + \lambda_{tk} p_{tk}(z) + \lambda_{tu} p_{tu}(z)}} \,dz \\
&=  \int_{z}p_{tu}(z) \log{2} \,dz + \int_{z} p_{tu}(z) \log{\frac{p_{tu}(z)}{p_{s}(z) + \lambda_{tk} p_{tk}(z) + \lambda_{tu} p_{tu}(z)}} \,dz \\
&=  \log{2}  + \int_{z} p_{tu}(z)\log{p_{tu}(z)} \,dz - \int_{z} p_{tu}(z)\log{(p_{s}(z) + \lambda_{tk}p_{tk}(z) + \lambda_{tu}p_{tu}(z))} \,dz \\
& \le \log{2}  + \int_{z}p_{tu}(z) \log{ p_{tu}(z)} \,dz -  \int_{z}p_{tu}(z) \log{\lambda_{tu} p_{tu}(z)} \,dz \\
& \quad(\ p_{s}(z) + \lambda_{k}p_{tk}(z) + \lambda_{tu}p_{tu}(z) \ge  \lambda_{tu}p_{tu}(z) \text{ for all } z) \\
& =\log{2}  + \int_{z}p_{tu}(z)\log{ p_{tu}(z)} \,dz -  \int_{z} p_{tu}(z) \log{p_{tu}(z)} \,dz -  \int_{z}p_{tu}(z) \log{\lambda_{tu}} \,dz  \\
& =\log{2} -\log{\lambda_{tu}}
\end{align*}
\text{Therefore,} 
\begin{align*}
 D_{KL}(p_{tu}  \Vert p_{avg})  \le  \log{2} -\log{\lambda_{tu}}.
\end{align*}
\end{proof}

\subsubsection{Proof of Proposition \ref{thm:f_div}} \label{sup-appendix_fdivergence}

\begin{proposition}\label{sup-appendix_thm:f_div}
Assume that $\text{supp}(p_s) \cap \text{supp}(p_{tu})=\emptyset$ and $\text{supp}(p_{tk}) \cap \text{supp}(p_{tu})=\emptyset$, where $\text{supp}(p):=\{ z \in \mathcal{Z} | p(z) > 0 \}$ is the support set of probability distribution $p$. Then, the minimization problem with respect to $G$, Eq. (\ref{eqn:optimal_G}) in the main paper, is equivalent to the minimization problem of summation on two $f$-divergences. 
\begin{align*} 
D_{f_1}(p_{s}||p_{tk}) + \lambda_{tk} D_{f_2}(p_{tk}||p_{s}),
\end{align*}

where $f_{1}(u)=u \log \frac{u}{(1-\alpha)u+\alpha}$, and $f_{2}(u)=u \log \frac{u}{\alpha u+(1-\alpha)}$.
Therefore, the minimum of Eq. (\ref{eqn:optimal_G}) in the main paper is achieved if and only if $p_{s}=p_{tk}$.
\end{proposition}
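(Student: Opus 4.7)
The plan is to exploit the support-disjointness hypothesis to first collapse the unknown term into a constant, and then rewrite the remaining KL terms against the mixture $p_{avg}$ as KL divergences against convex combinations of only $p_s$ and $p_{tk}$, which is exactly what the $f$-divergence form encodes.

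First, I would split the integration domain into $\text{supp}(p_{tu})$ and its complement. By the assumption $\text{supp}(p_s)\cap\text{supp}(p_{tu})=\text{supp}(p_{tk})\cap\text{supp}(p_{tu})=\emptyset$, on $\text{supp}(p_{tu})$ we have $p_s=p_{tk}=0$, so
\begin{equation*}
p_{avg}(z)=\tfrac{1}{2}\lambda_{tu}p_{tu}(z)\quad\text{on }\text{supp}(p_{tu}),
\end{equation*}
and on the complement $p_{tu}=0$, hence $p_{avg}(z)=(p_s(z)+\lambda_{tk}p_{tk}(z))/2$. Substituting the first identity into $\lambda_{tu}D_{KL}(p_{tu}\Vert p_{avg})$ collapses it to the constant $\lambda_{tu}\log(2/\lambda_{tu})$; this constant is then absorbed into $C_0$ of Theorem~\ref{theorem1_optim_G}, so the $G$-minimization reduces (up to a constant) to
\begin{equation*}
\min_{\theta_g}\ D_{KL}(p_s\Vert p_{avg}) + \lambda_{tk}D_{KL}(p_{tk}\Vert p_{avg}).
\end{equation*}

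Second, on the relevant support, $2p_{avg}=p_s+\lambda_{tk}p_{tk}=(1+\lambda_{tk})\bigl((1-\alpha)p_s+\alpha p_{tk}\bigr)$ with $\alpha=\lambda_{tk}/(1+\lambda_{tk})$. Plugging this identity into the first KL divergence gives
\begin{equation*}
D_{KL}(p_s\Vert p_{avg}) = \int p_s\log\frac{p_s}{(1-\alpha)p_s+\alpha p_{tk}}\,dz + \log 2 - \log(1+\lambda_{tk}),
\end{equation*}
and the integral is precisely $D_{f_1}(p_s\Vert p_{tk})$ once the integrand is rewritten through the ratio $u=p_s/p_{tk}$. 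An analogous manipulation, using $2p_{avg}=(1+\lambda_{tk})(\alpha p_{tk}+(1-\alpha)p_s)$, yields $\lambda_{tk}D_{KL}(p_{tk}\Vert p_{avg})=\lambda_{tk}D_{f_2}(p_{tk}\Vert p_s)+\lambda_{tk}(\log 2-\log(1+\lambda_{tk}))$. Collecting the constants finishes the identification
\begin{equation*}
-\mathcal{L}_G(\theta_g,\theta_d^*) = D_{f_1}(p_s\Vert p_{tk}) + \lambda_{tk}D_{f_2}(p_{tk}\Vert p_s) + C_1.
\end{equation*}

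Third, for the equality case I would note that $D_{f_1}(p_s\Vert p_{tk})$ equals $D_{KL}\bigl(p_s\,\Vert\,(1-\alpha)p_s+\alpha p_{tk}\bigr)$, a KL divergence against a mixture, which is nonnegative and vanishes iff $p_s=(1-\alpha)p_s+\alpha p_{tk}$, i.e.\ iff $p_s=p_{tk}$ (since $\alpha\in(0,1)$). The same argument applied to $D_{f_2}$ gives the same necessary-and-sufficient condition, so the sum is nonnegative and attains its minimum iff $p_s=p_{tk}$. Verifying that $f_1,f_2$ are bona fide $f$-divergence generators (convex with $f_i(1)=0$) is then a direct check on the definitions; convexity follows from each $f_i$ being of the form $u\log(u/h_i(u))$ with $h_i$ affine and positive.

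The only step requiring real care is the algebraic rewriting in the second paragraph, where one must match the exact form of the generators $f_1,f_2$ stated in the proposition; the support-disjointness collapse of the unknown term and the subsequent nonnegativity argument are then routine.
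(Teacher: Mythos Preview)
Your proposal is correct and follows essentially the same route as the paper: split the domain by $\text{supp}(p_{tu})$, collapse the unknown KL term to the constant $\lambda_{tu}\log(2/\lambda_{tu})$, then factor $p_s+\lambda_{tk}p_{tk}=(1+\lambda_{tk})\bigl((1-\alpha)p_s+\alpha p_{tk}\bigr)$ to identify the remaining two KL terms with the stated $f$-divergences (the paper names them \emph{skewed $\alpha$-KL divergences} and cites an external reference for the $f$-divergence property, whereas you verify it directly). Your treatment of the ``if and only if'' part---observing that $D_{f_1}(p_s\Vert p_{tk})=D_{KL}\bigl(p_s\,\Vert\,(1-\alpha)p_s+\alpha p_{tk}\bigr)$ and invoking strict positivity of KL unless the arguments coincide---is in fact more explicit than the paper's, which simply asserts the conclusion from the $f$-divergence form.
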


\begin{proof}
The minimization problem with respect to $G$ can be expressed as below:
\begin{align}\label{sup-appendix_eq_optimal_G}
& D_{KL}(p_{s}\Vert p_{avg})  + \lambda_{tk} D_{KL}(p_{tk} \Vert p_{avg})  - \lambda_{tu} D_{KL}(p_{tu} \Vert p_{avg}) + C_{0}
\end{align} 

where $C_{0}= -2\lambda_{tk}\log2 + \lambda_{tk}\log\lambda_{tk} - \lambda_{tu}\log\lambda_{tu}$.

We assume that (i) $\text{supp}(p_s) \cap \text{supp}(p_{tu})=\emptyset$ and (ii) $\text{supp}(p_{tk}) \cap \text{supp}(p_{tu})=\emptyset$, where $\text{supp}(p):=\{ z \in \mathcal{Z} | p(z) > 0 \}$ be the support set of probability distribution $p$. We denote $\mathcal{Z}_1:=\mathcal{Z} \setminus \text{supp}(p_{tu})$ and $\mathcal{Z}_2:=\text{supp}(p_{tu})$. Then, the first and second term in Eq. (\ref{sup-appendix_eq_optimal_G}) are written as below, respectively:
\begin{align}\label{sup-eq_optimal_s}
D_{KL}(p_{s} \Vert p_{avg}) =  \int_{\mathcal{Z}}p_{s}(z)\log {\frac{p_{s}(z)}{p_{avg}(z)}} \,dz =  \int_{\mathcal{Z}_{1}}p_{s}(z)\log {\frac{p_{s}(z)}{(p_{s}(z) + \lambda_{tk} p_{tk}(z))/2}} \,dz ,
\end{align} 
\begin{align}\label{sup-eq_optimal_tk}
\lambda_{tk}D_{KL}(p_{tk} \Vert p_{avg}) & =\lambda_{tk}  \int_{\mathcal{Z}}p_{tk}(z)\log {\frac{p_{tk}(z)}{p_{avg}(z)}} \,dz  \\
& =\lambda_{tk} \int_{\mathcal{Z}_{1}}p_{tk}(z)\log {\frac{p_{tk}(z)}{(p_{s}(z) + \lambda_{tk} p_{tk}(z))/2}} \,dz .
\end{align} 
since $p_{tu}(z)=0 \text{ for all } z \in \mathcal{Z}_{1}$ and $p_{s}(z)=p_{tk}(z)=0  \text{ for all }  z\in \mathcal{Z}_{2}$. Also, the third term in Eq. (\ref{sup-appendix_eq_optimal_G}) is as follows:
\begin{align}\label{sup-eq_optimal_tu}
\begin{split}
\lambda_{tu} D_{KL}(p_{tu}\Vert p_{avg}) & =\lambda_{tu}  \int_{\mathcal{Z}}p_{tu}(z)\log {\frac{p_{tu}(z)}{p_{avg}(z)}} \,dz  \\
& = \lambda_{tu} \int_{\mathcal{Z}_{2}}p_{tu}(z)\log {\frac{p_{tu}(z)}{(\lambda_{tu} p_{tu}(z))/2}} \,dz =  \lambda_{tu} \log\frac{2}{\lambda_{tu}},
\end{split}
\end{align} 

With Eq. (\ref{sup-eq_optimal_s}) to (\ref{sup-eq_optimal_tk}) and letting $C_{2} := \lambda_{tu}\cdot \log\frac{2}{\lambda_{tu}}$ in Eq. (\ref{sup-eq_optimal_tu}), Eq. (\ref{sup-appendix_eq_optimal_G}) is as below:
\begin{align*}
&\int_{\mathcal{Z}_{1}}p_{s}(z)\log {\frac{p_{s}(z)}{(p_{s}(z) + \lambda_{tk} p_{tk}(z))/2}} \,dz +\lambda_{tk} \int_{\mathcal{Z}_{1}}p_{tk}(z)\log {\frac{p_{tk}(z)}{(p_{tk}(z) + \lambda_{tk} p_{tk}(z))/2}} \,dz  - C_{2} \\
= &\int_{\mathcal{Z}_{1}}p_{s}(z)\log \Big[ {\frac{p_{s}(z)}{(p_{s}(z) + \lambda_{tk} p_{tk}(z))/(1+\lambda_{tk})}}\cdot \frac{2}{1+\lambda_{tk}} \Big] \,dz \\
 &+\lambda_{tk}  \int_{\mathcal{Z}_{1}}p_{tk}(z)\log \Big[{\frac{p_{tk}(z)}{(p_{tk}(z) + \lambda_{tk} p_{tk}(z))/(1+\lambda_{tk})}} \cdot\frac{2}{1+\lambda_{tk}}\Big] \,dz  -  C_{2} \\
=&\int_{\mathcal{Z}_{1}}p_{s}(z)\log {\frac{p_{s}(z)}{(p_{s}(z) + \lambda_{tk} p_{tk}(z))/(1+\lambda_{tk})}} \,dz+\int_{\mathcal{Z}_{1}}p_{s}(z)\log \frac{2}{1+\lambda_{tk}} \,dz \\
&+\lambda_{tk}  \int_{\mathcal{Z}_{1}}p_{tk}(z)\log {\frac{p_{tk}(z)}{(p_{tk}(z) + \lambda_{tk} p_{tk}(z))/(1+\lambda_{tk})}} \,dz +\lambda_{tk}  \int_{\mathcal{Z}_{1}}p_{tk}(z)\log \frac{2}{1+\lambda_{tk}} \,dz -  C_{2} \\
=&\int_{\mathcal{Z}_{1}}p_{s}(z)\log {\frac{p_{s}(z)}{(p_{s}(z) + \lambda_{tk} p_{tk}(z))/(1+\lambda_{tk})}} \,dz+\log \frac{2}{1+\lambda_{tk}} \\
&+\lambda_{tk}  \int_{\mathcal{Z}_{1}}p_{tk}(z)\log {\frac{p_{tk}(z)}{(p_{tk}(z) + \lambda_{tk} p_{tk}(z))/(1+\lambda_{tk})}} \,dz +\lambda_{tk} \log \frac{2}{1+\lambda_{tk}}  -  C_{2} \\
\end{align*} 
With denoting $\alpha:=\frac{\lambda_{tk}}{1+\lambda_{tk}}$, and $C_{3}:=\log \frac{2}{1+\lambda_{tk}}+\lambda_{tk}\log \frac{2}{1+\lambda_{tk}}  -C_{2}$, and satisfying $0<\alpha<1$,
\begin{align}\label{sup-appendix_eq_optimal_G2}
=\int_{\mathcal{Z}_{1}}\hspace{-0.1em}p_{s}(z)\log {\frac{p_{s}(z)}{(1-\alpha) p_{s}(z) + \alpha p_{tk}(z)}} \,dz  +\lambda_{tk}\hspace{-0.1em}  \int_{\mathcal{Z}_{1}}\hspace{-0.1em}p_{tk}(z)\log {\frac{p_{tk}(z)}{(1-\alpha) p_{tk}(z) + \alpha p_{tk}(z)}} \,dz + C_{3}.
\end{align} 
By the definition of the skewed $\alpha$-KL Divergence ($D^{(\alpha)}_{KL}$) \cite{yamano2019some}, Eq. (\ref{sup-appendix_eq_optimal_G2}) is written as follow:
\begin{align} \label{sup-eq_skew_kl}
 D^{(\alpha)}_{KL}(p_{s} \Vert p_{tk}) + \lambda_{tk}\cdot D^{(1-\alpha)}_{KL}(p_{tk} \Vert p_{s})+C_{3}.
\end{align}
The skewed $\alpha$-KL Divergence,  $D^{(\alpha)}_{KL}(p\Vert q)$, belongs to the $f$-divergence from $p$ to $q$ \cite{yamano2019some}.
\begin{align*}
D_{f}(p\Vert q) = \int q(x)f \Big(\frac{p(x)}{q(x)}\Big)dx, \ where \ f(u)=u \log \frac{u}{(1-\alpha)u+\alpha}, \ (u=\frac{p(x)}{q(x)}\neq1),
\end{align*} 
where $f(u)$ is a convex function with $f(1)=0$. Therefore, Eq. (\ref{sup-eq_skew_kl}) is equivalent to the summation of $f$-divergence as below.
\begin{align} \label{sup-appendix_eq_f_divergence}
D_{f_1}(p_{s}\Vert p_{tk}) + \lambda_{tk} D_{f_2}(p_{tk}\Vert p_{s})+C_{3},
\end{align}
where $f_{1}(u)=u \log \frac{u}{(1-\alpha)u+\alpha}$, and $f_{2}(u)=u \log \frac{u}{\alpha u+(1-\alpha)}$. Therefore, the minimum of Eq. (\ref{sup-appendix_eq_f_divergence}) is achieved when $p_{s}=p_{tk}$. 
\end{proof}

\subsection{Posterior Inference} \label{sup-bmm}
We provide the details of the posterior inference to estimate $w_x=p(\textit{known}|x)$ for a target instance, $x$. Thus, we model the mixture of two Beta distributions on the entropy values of the target instances. We estimate $w_x$ as the posterior probability by fitting the Beta mixture model through the Expectation-Maximization (EM) algorithm. Therefore, this section starts the details of the fitting process of the Beta mixture model.

\subsubsection{Fitting Process of Beta mixture model} \label{sup-BMMem}
We follow the fitting process of Beta mixture model by \cite{arazo2019unsupervised}. First, the probability density function (pdf) for the mixture of two Beta distributions on the entropy values is defined as follows,
\begin{align} 
p(\ell_{x})&=\lambda_{tk} p(\ell_{x}|known) + \lambda_{tu} p( \ell_{x}|unknown),\\
\text{with}\quad p(\ell_{x}|known) &\sim Beta(\alpha_{0}, \beta_{0}) \quad \text{and} \quad p(\ell_{x}|unknown) \sim Beta(\alpha_{1}, \beta_{1}), \label{sup-individual_pdf}
\end{align} 
where $\lambda_{tk}$ is $p(known)$; $\lambda_{tu}$ is $p(unknown)$; $\ell_{x}$ is the entropy value for the target instance, $x$, i.e. $\ell_{x}=H(E(G(x)))$ with entropy function $H$; $\alpha_{0}$ and $\beta_{0}$ represents the parameters of the Beta distribution for the \textit{known} component; and $\alpha_{1}$ and $\beta_{1}$ are the parameters for \textit{unknown} component.
Eq. (\ref{sup-individual_pdf}) represents the individual pdf for each component which is followed by the Beta distribution,

We fit the distribution through the Expectation-Maximization (EM) algorithm. 
We introduce the latent variables $\gamma_{0}(\ell_{x})=p(known|\ell_{x})$ and $\gamma_{1}(\ell_{x})=p(unknown|\ell_{x})$, and use an Expectation Maximization (EM) algorithm with a finite number of iterations (10 in ours). 

In E-step, we update the latent variables using Bayes' rule with fixing the other parameters, $\lambda_{tk}$, $\alpha_{0}$, $\beta_{0}$, $\lambda_{tu}$, $\alpha_{1}$, and $\beta_{1}$, as follows:
\begin{equation*}
\gamma_{0}(\ell_{x})= \frac{\lambda_{tk}p(\ell_{x} | known)}{ \lambda_{tk}p(\ell_{x} | known)+\lambda_{tu}p(\ell_{x} | unknown)},
\end{equation*}
where $p(\ell_{x}|known)$ and $p(\ell_{x}|unknown)$ are from Eq. (\ref{sup-individual_pdf}). $\gamma_{1}(\ell_{x})$ follows the same claculation.

In M-step, given the fixed $\gamma_{0}(\ell_{x})$ and $\gamma_{1}(\ell_{x})$ from the E-step, the parameters $\alpha_{k}$, $\beta_{k}$ are estimated by using a weighted method of moments as follows,
\begin{equation*}
\beta_{k} = \frac{\alpha_{k}(1-\bar{\ell}_{k})}{\bar{\ell}_{k}}, \quad  \alpha_{k} =\bar{\ell}_{k} (\frac{\bar{\ell}_{k}(1-\bar{\ell}_{k})}{s_{k}^{2}}-1),  \quad \text{where} \ k \in \left\{0, 1 \right\},
\end{equation*}
where $\bar{\ell}_{0}$ and $s_{0}^{2}$ are a weighted average and a weighted variance estimation of the entropy values, $\ell_{x}$, for \textit{known} component, respectively. $\bar{\ell}_{1}$ and $s_{1}^{2}$ are for \textit{unknown} component as follows, 
\begin{equation*}
\bar{\ell}_{k} = \frac{\sum_{x\in \chi_{t}} \gamma_{k} (\ell_{x}) \ell_{x} }{\sum_{x\in \chi_{t}} \gamma_{k} (\ell_{x})},\quad s_{k}^{2} = \frac{\sum_{x\in \chi_{t}} \gamma_{k} (\ell_{x}) (\ell_{x} -\bar{\ell}_{k} )^{2} }{\sum_{x\in \chi_{t}}\gamma_{k} (\ell_{x})}\ \text{where} \ k \in \left\{0, 1 \right\}.
\end{equation*}

Then, the mixing coefficients, $\lambda_{tk}$ and $\lambda_{tu}$, are calculated as follows,
\begin{equation}\label{sup-eqn:lambda_tk_tu}
\lambda_{tk} = \frac{1}{n_{t}}\sum_{x\in \chi_{t}} \gamma_{0}(\ell_{x}), \quad \lambda_{tu} = 1- \lambda_{tk},
\end{equation}
where $n_{t}$ is the number of instances in the target domain, $\chi_{t}$.

We conduct a finite number of iteration over E-step and M-step iteratively. Finally, the probability of a instance being \textit{known} or \textit{unknown} class through the posterior probability:
\begin{equation}
p(known|\ell_{x}) =\frac{\lambda_{tk}p(\ell_{x}|known)}{\lambda_{tk} p(\ell_{x}|known) + \lambda_{tu} p( \ell_{x}|unknown)},
\end{equation}
where $p(unknown|\ell_{x})$ follows the same calculation. 

\subsection{Training Details}\label{sup-appendix:trainig_algorithm}
This subsection provides the details for the training part of UADAL. The first part enumerates the training algorithm procedure, and the second part shows the computational complexity of UADAL during training.
\subsubsection{Training Algorithm of UADAL}\label{sup-appendix:trainig_algorithm_detail}
We provide a training algorithm of UADAL in detail. 
All equations in Algorithm \ref{sup-alg:algorithm1} of this script represent the equations in the main paper. 
The detailed settings for $n_{iter}$, $m$, and $\eta$ in Algorithm \ref{sup-alg:algorithm1} are described in Section \ref{sup-appendix:hyperparameter}.

\newcommand{\factorial}{\ensuremath{\mbox{\sc Factorial}}}
\begin{algorithm}[h]
\caption{Training algorithm of UADAL.}\label{sup-alg:algorithm1}
\begin{algorithmic}[1]
\Require
\Statex $\chi_{s}$: dataset from the source domain.
\Statex $\chi_{t}$: dataset from the target domain.
\Statex $n_{iter}$: the number of epochs for main training.
\Statex $m$: the batch size.
\Statex $\eta$: the frequency to fit the posterior inference.
\Ensure
\State Sample few source minibatches from $\chi_{s}$.
\State Update $\theta_{e}$, $\theta_{g}$ following Eq. (\ref{eqn:e_update}). 
\State {Fitting the posterior model through EM algorithm by Eq. (\ref{eqn:posterior})}
\For {$i =1,\ldots,n_{iter}$}
\State Sample minibatch of $m$ source samples from $\chi_{s}$. 
\State Sample minibatch of $m$ target samples from $\chi_{t}$. 
\State Update $\theta_{d}$ by Eq. (\ref{eqn:d_update}). 
\State Update $\theta_{e}$, $\theta_{g}$, $\theta_{c}$ by Eq. (\ref{eqn:e_update}, \ref{eqn:g_update}). 
\If{(i mod $\eta$) = 0}
\State {Fitting the posterior model through EM algorithm by Eq. (\ref{eqn:posterior})}
\EndIf
\EndFor
\end{algorithmic}
\end{algorithm}

\subsubsection{Computational Complexity of UADAL}\label{sup-appendix:computational_complexity}
The computational complexity of UADAL is increased because we need to fit the mixture model, which requires $O(nk)$, with the number of the target instances ($n$) and the iterations of EM ($k$). 
Figure \ref{sup-appendix:convergence_of_bmm} shows the negative log-likelihood from the fitted mixture model over EM iterations ($k$), along different initializations of $\lambda_{tk}$ and $\lambda_{tu}$.
Here, $k$ can be adjusted to trade the performance (negative log-likelihood) and the time-complexity ($k$). Moreover, it shows that the convergence of EM algorithm of the posterior inference, which makes we set $k$ as constant. 
As a measure of the computational complexity, we provide \textbf{Wall-clock-time} for a whole experimental procedure by following Algorithm \ref{sup-alg:algorithm1} (under {an RTX3090 GPU/i7-10700F CPU}). 
For A$\rightarrow$W in Office-31, the wall-clock-time is 1,484 and 1,415 seconds, with and without posterior inference, respectively (+5\% increment). 
From this +5\% increment, the performance of UADAL with the posterior inference has improved than with Entropy, as shown in Figure \ref{fig:entropy_trhesholding} in the main paper (see the red/blue solid lines).
\begin{figure}[h]\centering
    \includegraphics[width=0.6\linewidth]{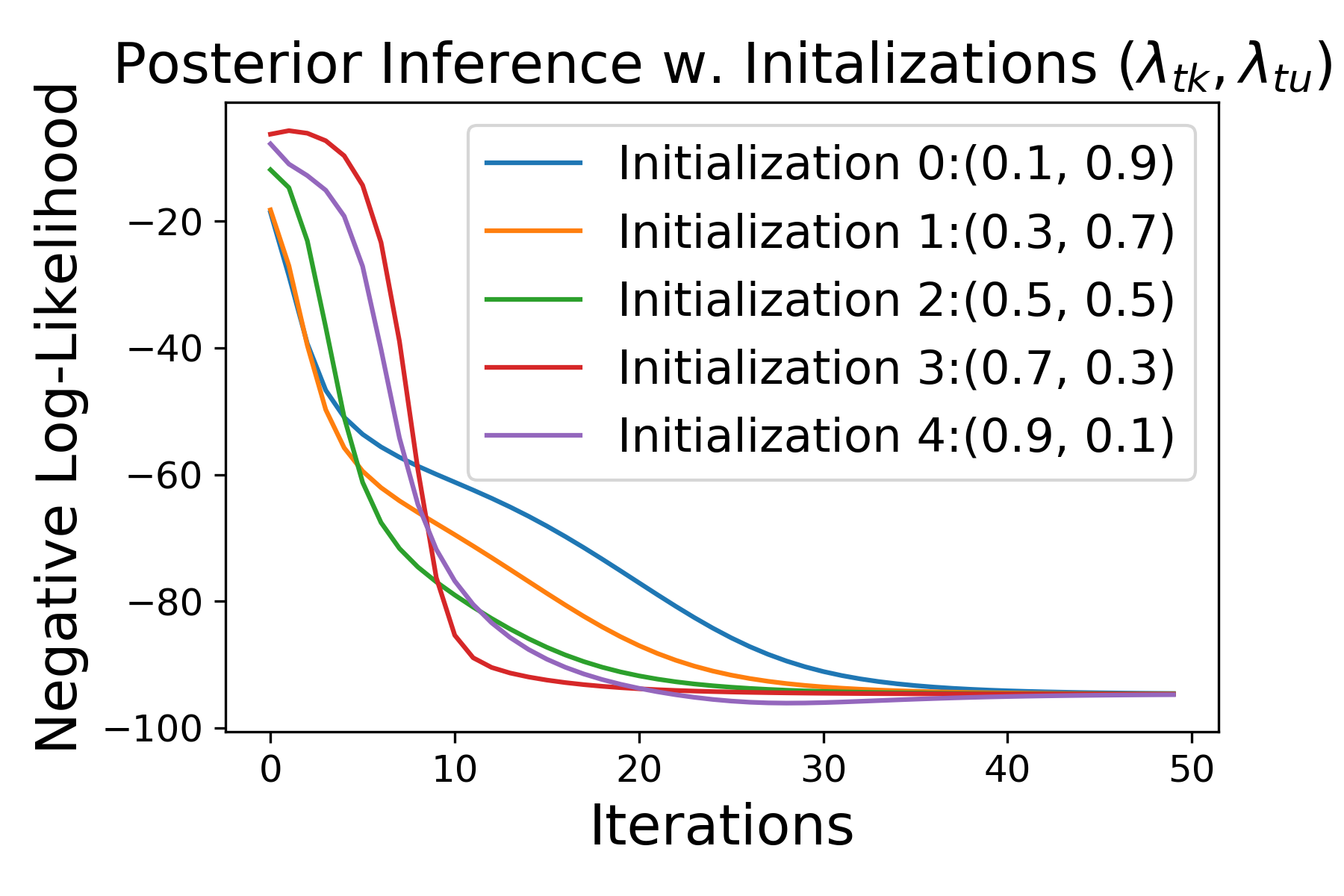}
        \caption{Convergence over EM iterations ($k$) of Posterior Inference} \label{sup-appendix:convergence_of_bmm}
\end{figure}

\newpage
\section{Experimental Part} \label{sup-exp_section}
\subsection{Implemenation details} \label{sup-appendix_imple_detail}
\subsubsection{Optimization Details}\label{sup-appendix_optim_detail}

We utilize the pre-trained ResNet-50 \cite{He_2016_CVPR}, DenseNet-121 \cite{huang2017densely}, EfficientNet-B0 \cite{tan2019efficientnet}, and VGGNet \cite{Simonyan15}, as a backbone network. 
For all cases of the experiments for the backbone networks and the datasets, we use the SGD optimizer with the cosine annealing \cite{loshchilov2016sgdr} schedule for the learning rate scheduling. 
For the parameters in the pre-trained network of ResNet-50, DenseNet, and EfficientNet, we set the learning rate 0.1 times smaller than the parameters from the scratch, followed by \cite{liu2019separate,ROS2020ECCV}. 
For VGGNet with VisDA dataset, we followed \cite{saito2018open}. Therefore, we did not update the parameters of VGGNet and constructed fully-connected layers with 100 hidden units after the FC8 layers. 
In terms of the entropy loss for the target domain, we adopt a variant of the loss, FixMatch \cite{sohn2020fixmatch}, in order to utilize the confident predictions of the target instances.
We run each setting \textbf{three times} and report the averaged accuracy with standard deviation. 
We conduct all experiments on an NVIDIA RTX 3090 GPU and an i7-10700F CPU. 

\subsubsection{Network Configurations} Except for the feature extractor network $G$, the configurations of the other networks, $E$, $C$, $D$, are based on the classification network. 
The feature dimensions from the ResNet-50, EfficientNet-B0, and DenseNet-121 are 2048, 1280, and 1024, respectively, which are the output dimensions of $G$ (100 for VGGNet by the construction of the fully connected layer). 
Given the dimension, the network $C$ utilizes one middle layer between the feature and classification layer with the dimension of 256. 
After the middle layer, we apply a batch normalization \cite{ioffe2015batch} and LeakyReLU \cite{xu2015empirical} with 0.2 parameter. 
Then, the feature after the middle layer passes to the classification layer, in which the output dimensions are $|\mathcal{C}_s|+1$. 
The network $E$ has the classification layer without any middle layer, in which the output dimension is $|\mathcal{C}_s|$.
The network $D$ consists of the two middle layers with applying the LeakyReLU, where the output dimension is 3. 

\subsubsection{Hyperparameter Settings}\label{sup-appendix:hyperparameter}

For all cases of the experiments, we set the batch size, $m$, to 32. 
For the number of epochs for main training, denoted as $n_{iter}$, we set 100 for Office-31 and OfficeHome datasets. 
For the VisDA dataset, we set  $n_{iter}$ as 10 epochs since it is a large scale. 
Associated with  $n_{iter}$, we set the frequency to fit the posterior inference, $\eta$, as 10 epochs for Office-31 and OfficeHome, as 1 epoch for the VisDA dataset.
We set 0.001 as a default learning rate with 0.1 times smaller for the network $G$ since we bring the pre-trained network. For the network $E$ of the open-set recognizer, we set 2 times larger than the default value because the shallow network $E$ should learn the labeled source domain quickly during $\eta$ epochs, followed by initializing the network $E$ after fitting the posterior inference. 
For the case of VGGNet with the VisDA dataset, we utilize the default learning rate for the network $G$ as same as the other networks since we only train the fully connected layers on the top of VGGNet.

\subsubsection{Baselines} \label{sup-appendix_baseline}
For the baselines, we implement the released codes for OSBP \cite{saito2018open} (\url{https://github.com/ksaito-ut/OPDA_BP}), STA \cite{liu2019separate} (\url{https://github.com/thuml/Separate_to_Adapt}), PGL \cite{luo2020progressive} (\url{https://github.com/BUserName/PGL}), ROS \cite{ROS2020ECCV} (\url{https://github.com/silvia1993/ROS}), DANCE \cite{DANCE2020NIPS} (\url{https://github.com/VisionLearningGroup/DANCE}), and DCC \cite{li2021domain} (\url{https://github.com/Solacex/Domain-Consensus-Clustering}). For OSLPP \cite{wang2021progressively}, we are not able to find the released code. 
Thus, the reported performances of OSLPP for Office-31 and Office-Home with ResNet-50 are only available. 
For the released codes, we follow their initial experimental settings. Especially, we set all experimental settings for DenseNet and EfficientNet, equal to the settings on their ResNet-50 experiments, since they do not conduct the experiments on the DenseNet-121 and EfficientNet-B0. 
For a \textbf{fair comparison}, we bring the reported results for the baselines from its papers on the datasets, i.e., Office-31 (with ResNet-50), OfficeHome (with ResNet-50), and VisDA(with VGGNet) dataset. The officially reported performances are marked as $^{*}$ in the tables.
Except for these cases, we all re-implement the experiments three times. 

\subsubsection{Dataset} \label{sup-appendix_dataset}
For the availability of the datasets, we utilize the following links; Office-31 (\url{https://www.cc.gatech.edu/~judy/domainadapt/#datasets_code}), Office-Home (\url{https://www.hemanthdv.org/officeHomeDataset.html}), and VisDA (\url{http://ai.bu.edu/visda-2017/#download}). We utilize the data transformations for training the proposed model, which are 1) resize, random horizontal flip, crop, and normalize by following \cite{DANCE2020NIPS}, and 2) RandAugment \cite{cubuk2020randaugment} by following \cite{prabhu2021sentry}.

\subsection{Experimental Results} \label{sup-appendix_result_analysis}
\subsubsection{Computational Complexity of OSLPP on VisDA dataset} \label{sup-appendix_oslpp}
As OSLPP \cite{wang2021progressively} said, their complexity is $\mathcal{O}(T(2n^{2}d_{PCA}+{d^{3}}_{PCA}))$, which is repeated for $T$ times. Here, $n$ is the number of samples with $n=n_s+n_t$, and $d_{PCA}$ is the dimension which is reduced by PCA. However, regarding memory usage, when the number of samples, $n$, is much greater than the dimensionality, the memory complexity is $\mathcal{O}(n^2)$. Therefore, they claimed that it has limitation of scaling up to the extremely large dataset (e.g., $n>100,000$). With this point, VisDA dataset consists of the source dataset with 79,765 instances and the target dataset with 55,388 instances, where the number of samples becomes 135,153. Therefore, OSLPP is infeasible to conduct the experiments for VisDA dataset.

\subsubsection{Low Accuracy of Baselines}\label{sup-appendix:low_accuracy_results}
{HOS} score is a harmonic mean of OS$^{*}$ and UNK. Therefore, HOS is higher when performing well in \textit{both} known and unknown classification. 
With this point, some baselines have very low {HOS} score in the Table \ref{tab:hos_office} and \ref{tab:res:visda} of the main paper. 
This is because their UNK performances are worse. 
For example, the reported OS and OS$^{*}$ of PGL \cite{luo2020progressive} in Office-Home are 74.0 and 76.1, respectively. 
Here, OS is the class-wise averaged accuracy over the classes including \textit{unknown} class.
With 25 known classes, {UNK} then becomes (25+1)$\times \text{OS}-$25$\times${OS*} = 25.1, which leads to {HOS} score as 33.5. 
It means that PGL fails in the open-set scenario because their adversarial loss includes \textit{all} target instances, which is critically weakened by the negative transfer. 

For DANCE, they only reported OS scores in the paper, which makes the calculation of HOS infeasible. 
Also, their class set (15 knowns in OfficeHome) is different from the standard OSDA scenario (25 knowns in OfficeHome by following \cite{saito2018open}).
It means that optimal hyper-parameters are not available. 
Therefore, we re-implement based on their official code.
Meanwhile, DANCE (also DCC) is based on clustering which means that they are weak on initializations or hyperparameters, empirically shown as higher standard deviations of the performances in the tables. 
The below is the detailed answer on the lower performances of baselines, especially DANCE, with EfficientNet. As we said, there is no reported performance of DANCE with the additional backbone choices. Therefore, we implemented additional variants of DANCE with EfficientNet and DenseNet by following their officially released codes. In order to compare fairly, we set all hyper-parameters with that of ResNet-50 case as UADAL is being set. Specifically, DANCE requires a threshold value ($\rho$) to decide whether a target instance belongs to “known” class or not, which is very sensitive to the performance. We confirmed that they utilize the different values over the experimental settings. This sensitivity may degrade the performance of DANCE. Unlike DANCE, UADAL does not require a threshold setting because it has a posterior inference to automatically find the threshold to decide open-set instances. Therefore, this becomes the key reason behind the performance difference.

For DCC, the experimental settings for VisDA are not available with the comments of ``the clustering on VisDA is not very stable'' in their official code repository. 
Threfore, our re-implementation of DCC with VisDA (with EfficientNet-B0, DenseNet-121, and ResNet50) was also unstable. 
For a fair comparison with DCC, please refer to the performances which is marked as $^{*}$ in the Table \ref{tab:hos_office} and \ref{tab:res:visda} of the main paper.

\subsubsection{t-SNE Visualization} \label{sup-appendix_tsne}
Figure \ref{sup-appendix_fig:tsne_effinet} and \ref{sup-appendix_fig:tsne_resnet} in this script represents the t-SNE visualizations of the learned features extracted by EfficientNet-B0 and ResNet-50, respectively. It should be noted that EfficientNet-B0 (5.3M) has only 20\% of parameters than ResNet-50 (25.5M). We observe that the target-unknown (red) features from the baselines are not discriminated with the source (green) and the target-known (blue) features. On the contrary, UADAL and cUADAL align the features from the source and the target-known instances accurately with clear \textit{segregation} of the target-unknown features. It means that UADAL learns the feature spaces effectively even in the less complexity.

\begin{figure*}[h]
\centering
\resizebox{\textwidth}{!}{%
\begin{subfigure}[t]{0.174\textwidth}
    \includegraphics[width=\textwidth]{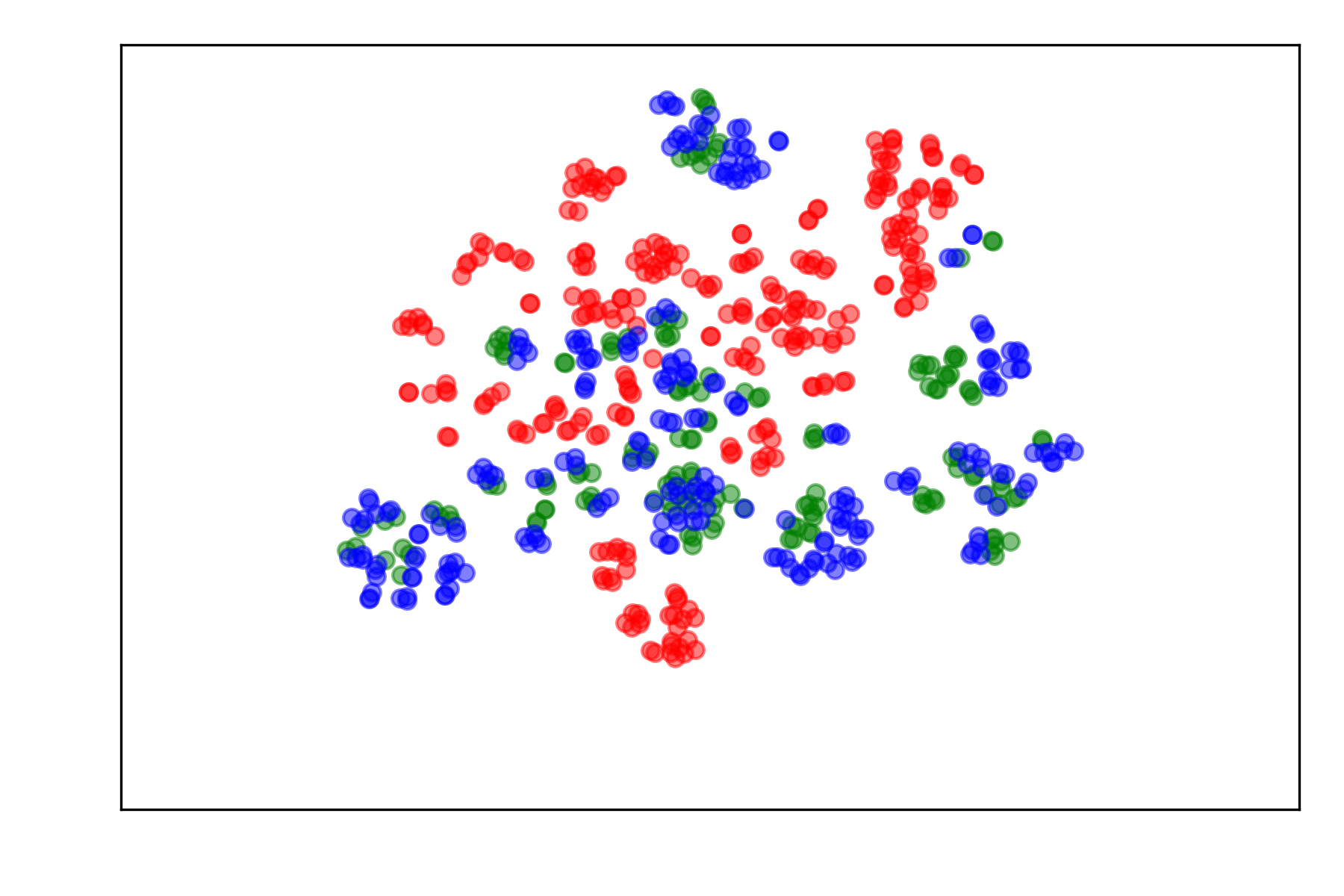}
    \caption{\small{DANN}}
    \label{sup-fig:tsne_dann_effi}
\end{subfigure} \hspace{-0.2em}%
\begin{subfigure}[t]{0.174\textwidth}
    \includegraphics[width=\textwidth]{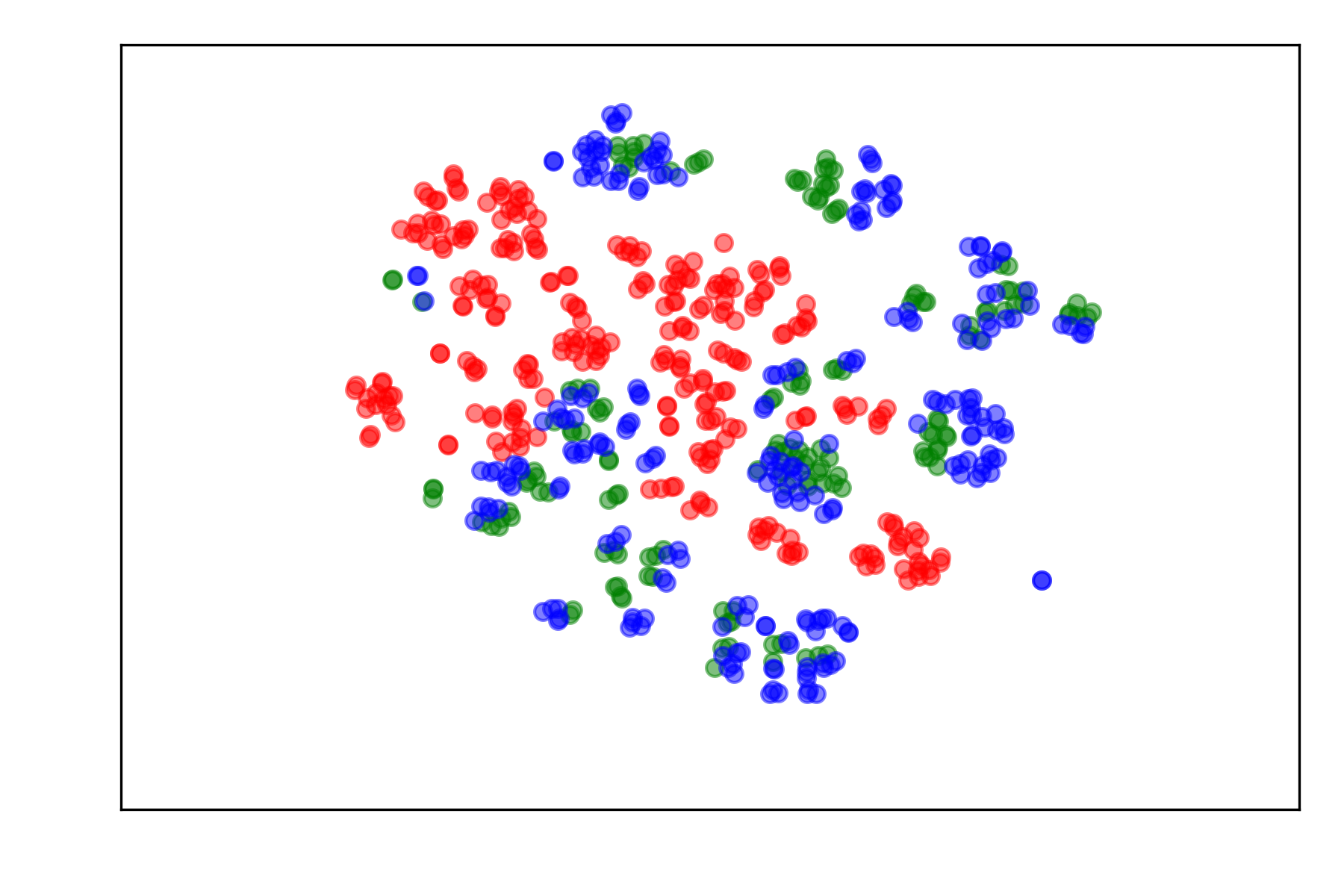}
    \caption{\small{STA}}
    \label{sup-fig:tsne_sta_effi}
\end{subfigure} \hspace{-0.2em}%
\begin{subfigure}[t]{0.174\textwidth}
    \includegraphics[width=\textwidth]{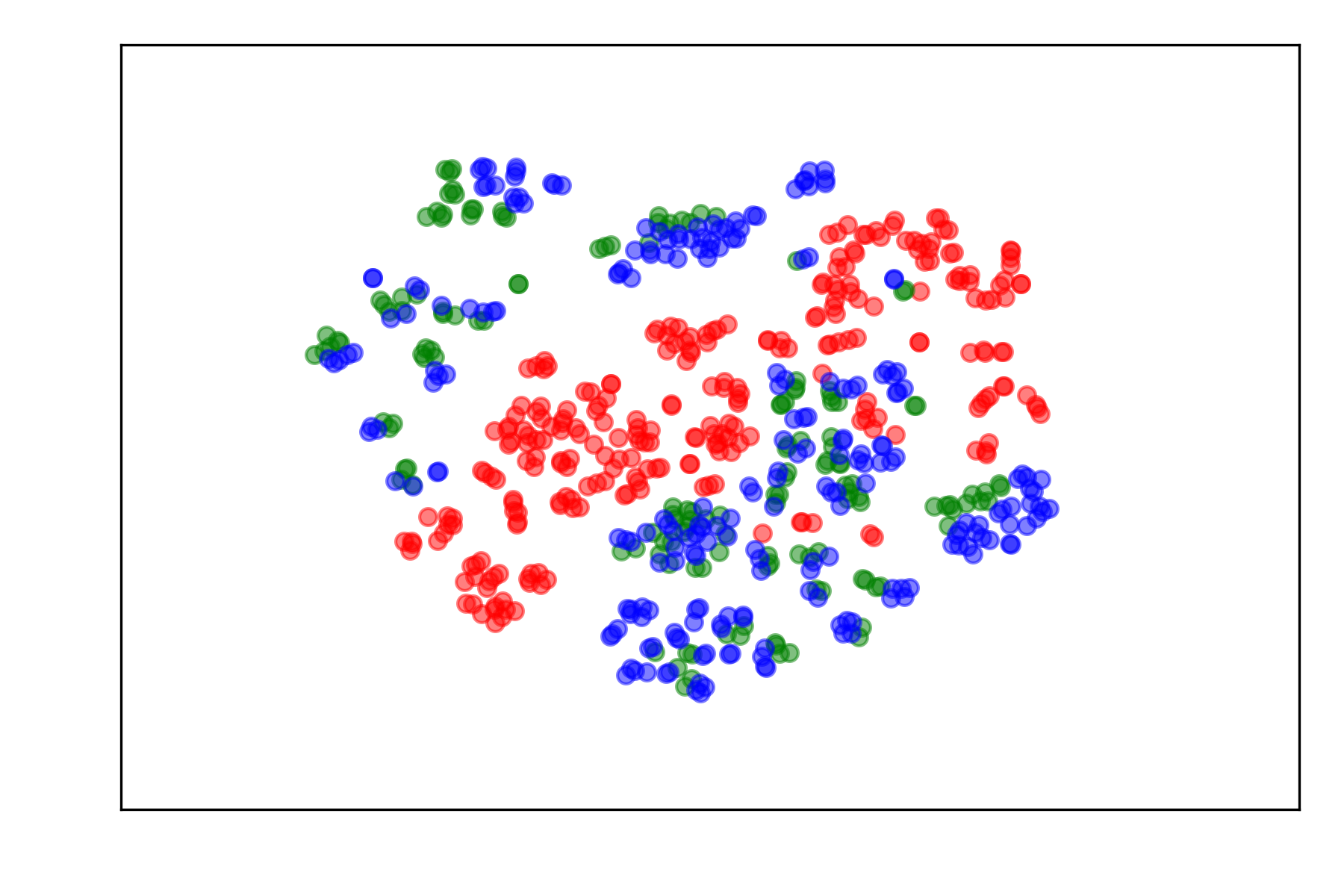}
    \caption{\small{OSBP}}
    \label{sup-fig:tsne_2d_effi}
\end{subfigure}\hspace{-0.1em}%
\begin{subfigure}[t]{0.174\textwidth}
    \includegraphics[width=\textwidth]{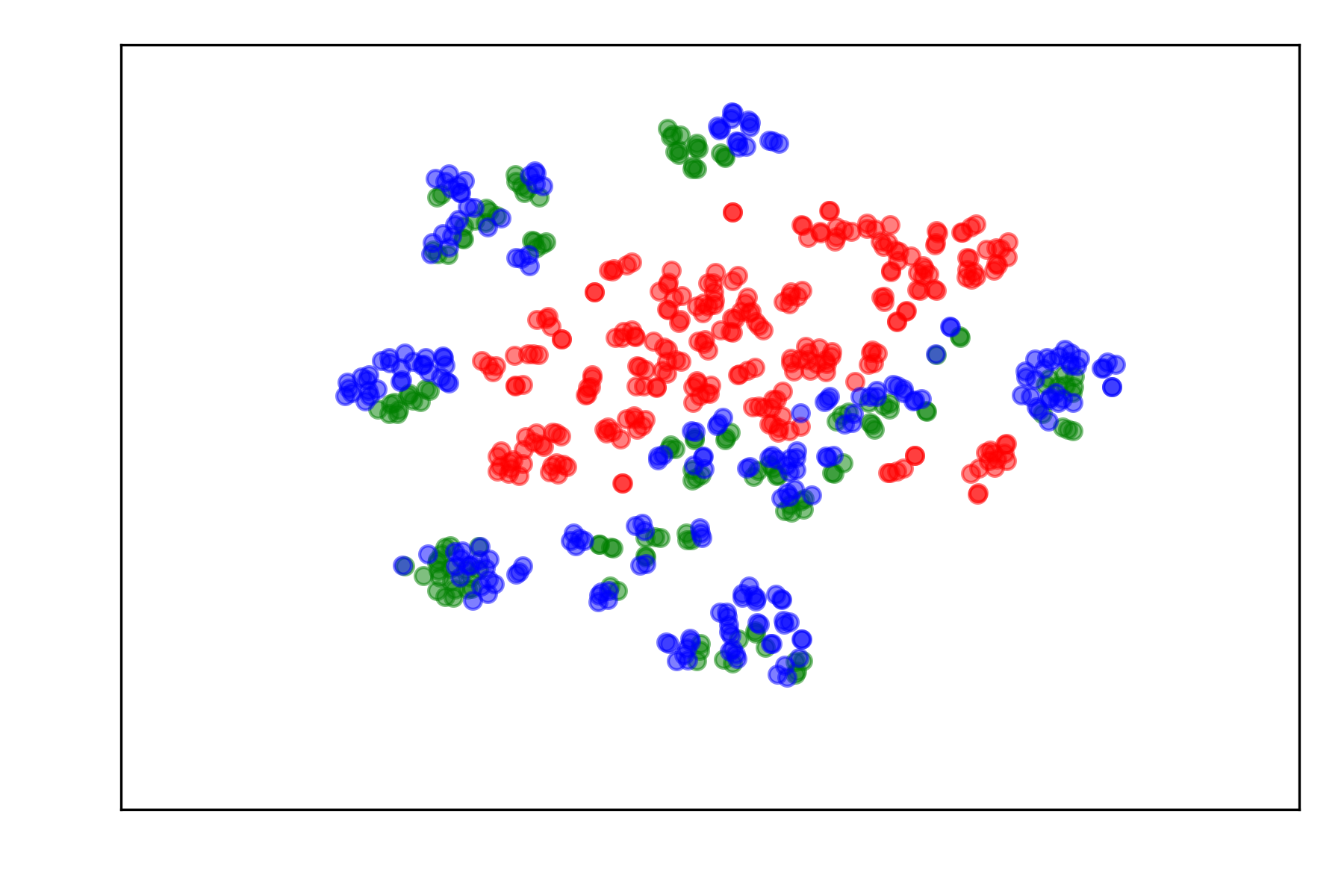}
    \caption{\small{DCC}}
    \label{sup-fig:tsne_dance_effi}
\end{subfigure} \hspace{-0.2em}%
\begin{subfigure}[t]{0.174\textwidth}
    \includegraphics[width=\textwidth]{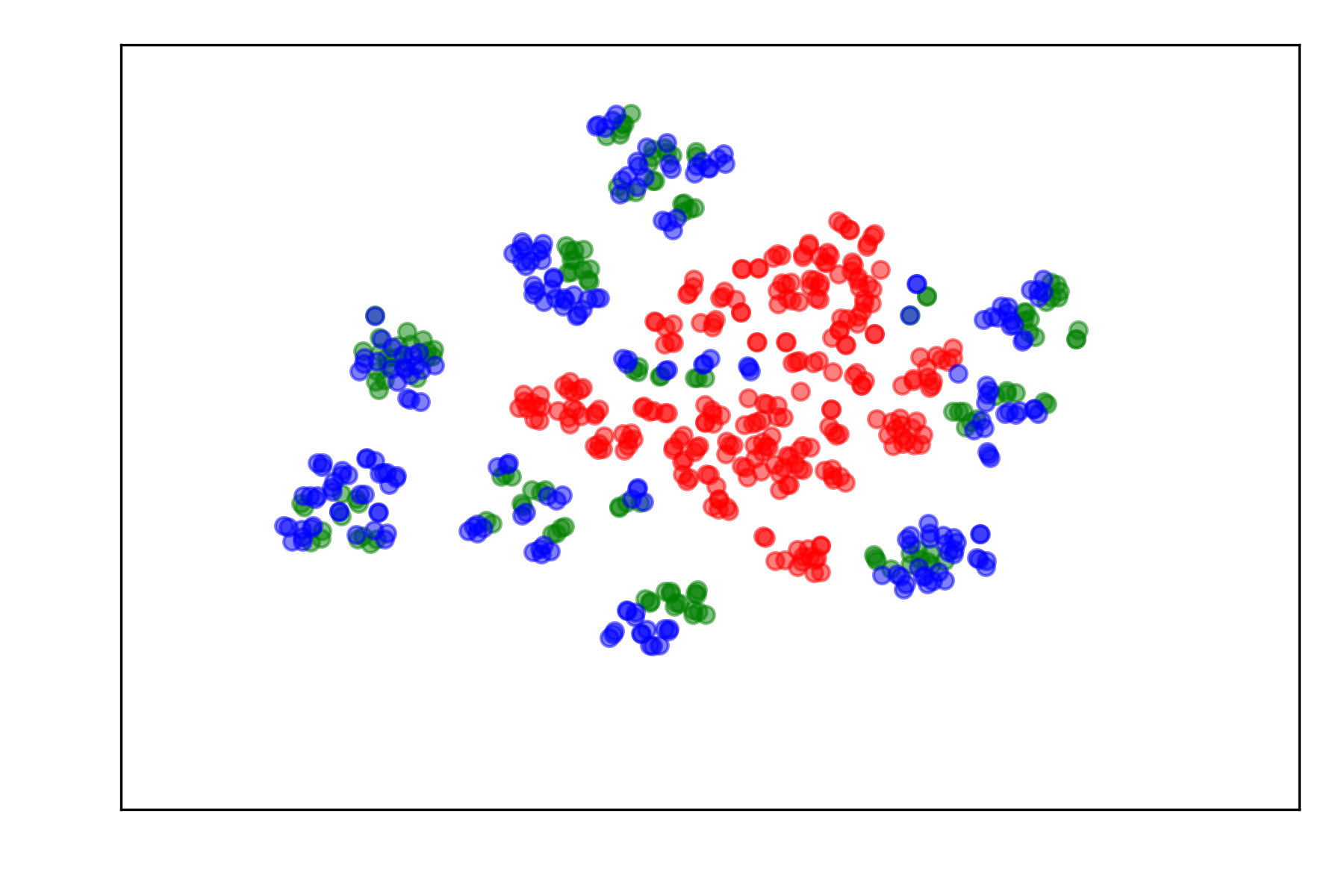}
    \caption{\small{UADAL}}
    \label{sup-fig:tsne_uadal_effi}
\end{subfigure} \hspace{-0.2em}%
\begin{subfigure}[t]{0.174\textwidth}
    \includegraphics[width=\textwidth]{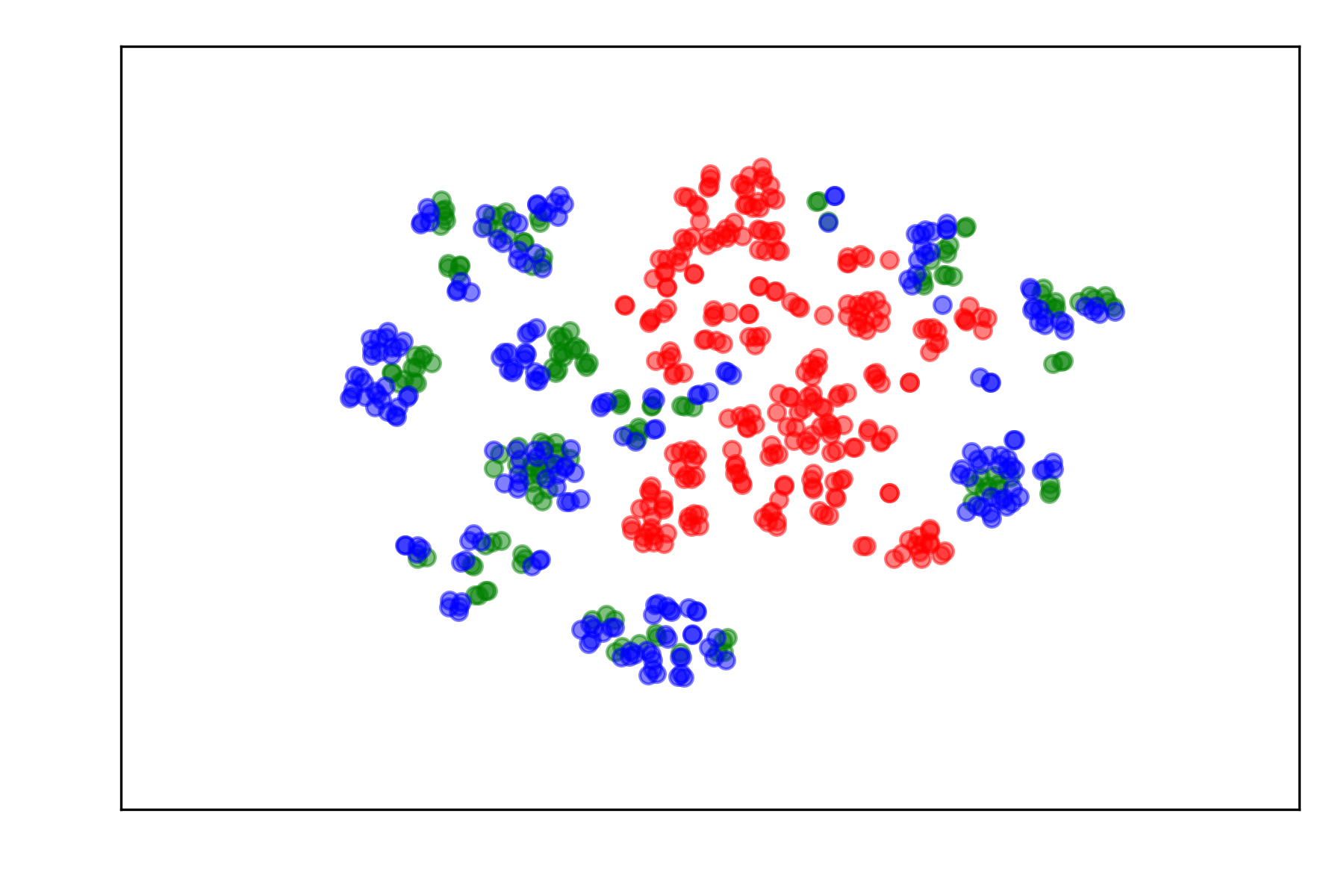}
    \caption{\small{cUADAL}}
    \label{sup-fig:tsne_cuadal_effi}
\end{subfigure} \hspace{-0.2em}%
}
\caption{{t-SNE of the features by the EfficientNet-B0 on the task D $\rightarrow$ W of Office-31. (blue: target-known, red: target-unkonwn, green: source)}}
\label{sup-appendix_fig:tsne_effinet}
\end{figure*}

\begin{figure*}[h]
\centering
\resizebox{\textwidth}{!}{%
\begin{subfigure}[t]{0.16\textwidth}
    \includegraphics[width=\textwidth]{figs/Tsne_GRL_resnet50_D_W_0.png}
    \caption{\small{DANN}}
    \label{sup-fig:tsne_dann}
\end{subfigure} \hspace{-0.2em}%
\begin{subfigure}[t]{0.16\textwidth}
    \includegraphics[width=\textwidth]{figs/Tsne_STAmax_resnet50_D_W_0.png}
    \caption{\small{STA}}
    \label{sup-fig:tsne_sta}
\end{subfigure} \hspace{-0.2em}%
\begin{subfigure}[t]{0.16\textwidth}
    \includegraphics[width=\textwidth]{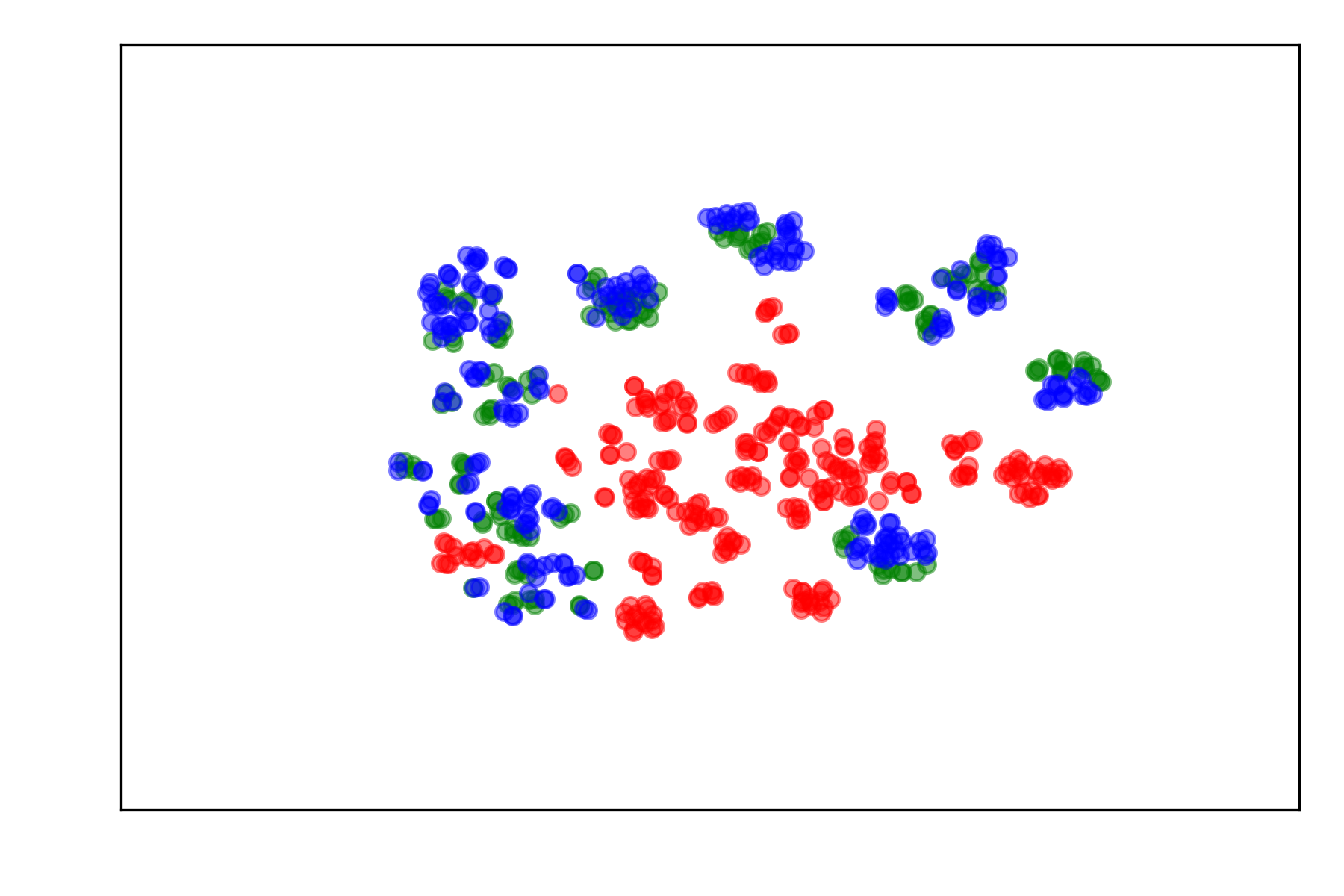}
    \caption{\small{OSBP}}
    \label{sup-fig:tsne_2d}
\end{subfigure}\hspace{-0.1em}%
\begin{subfigure}[t]{0.16\textwidth}
    \includegraphics[width=\textwidth]{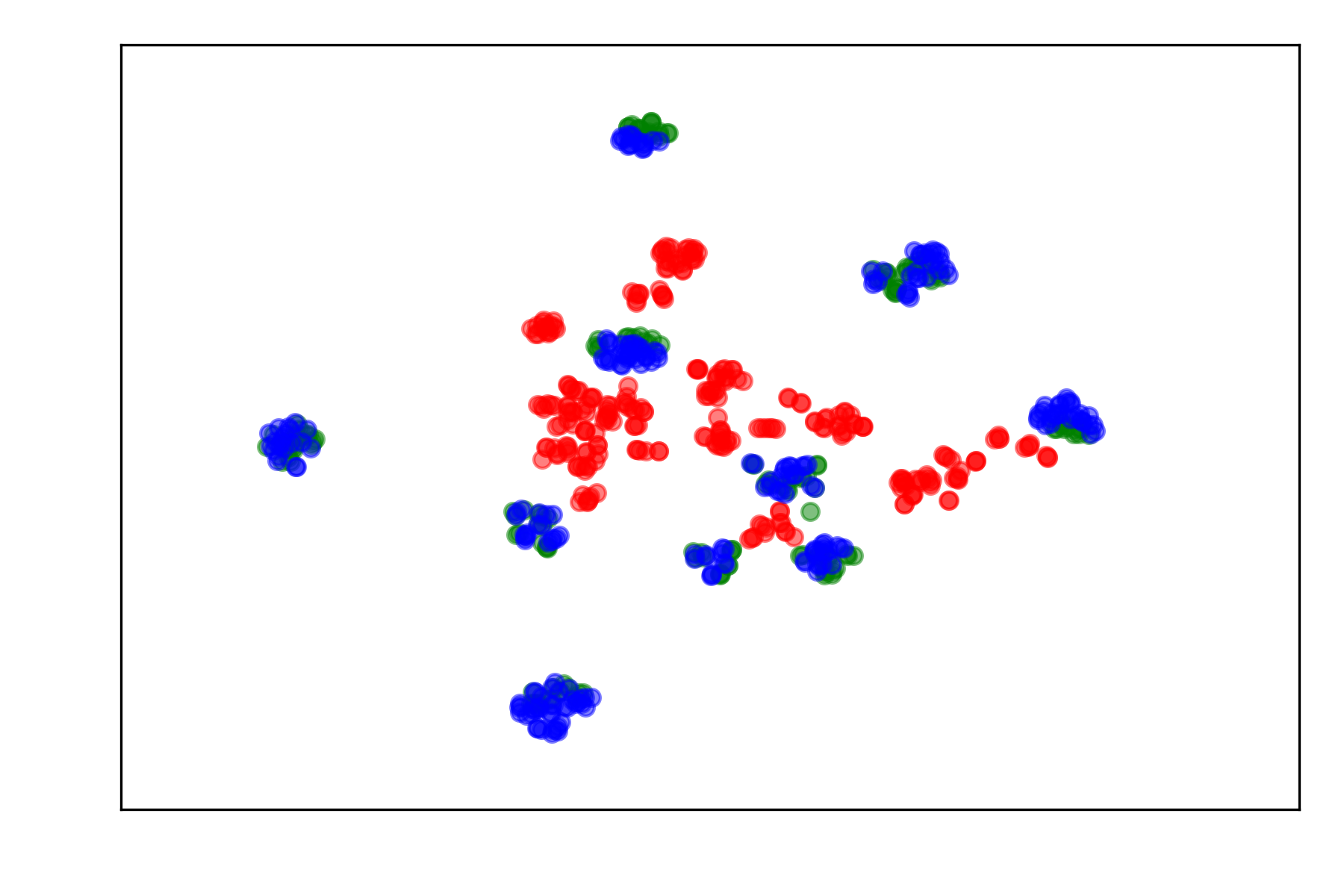}
    \caption{\small{DCC}}
    \label{sup-fig:tsne_dance}
\end{subfigure} \hspace{-0.2em}%
\begin{subfigure}[t]{0.16\textwidth}
    \includegraphics[width=\textwidth]{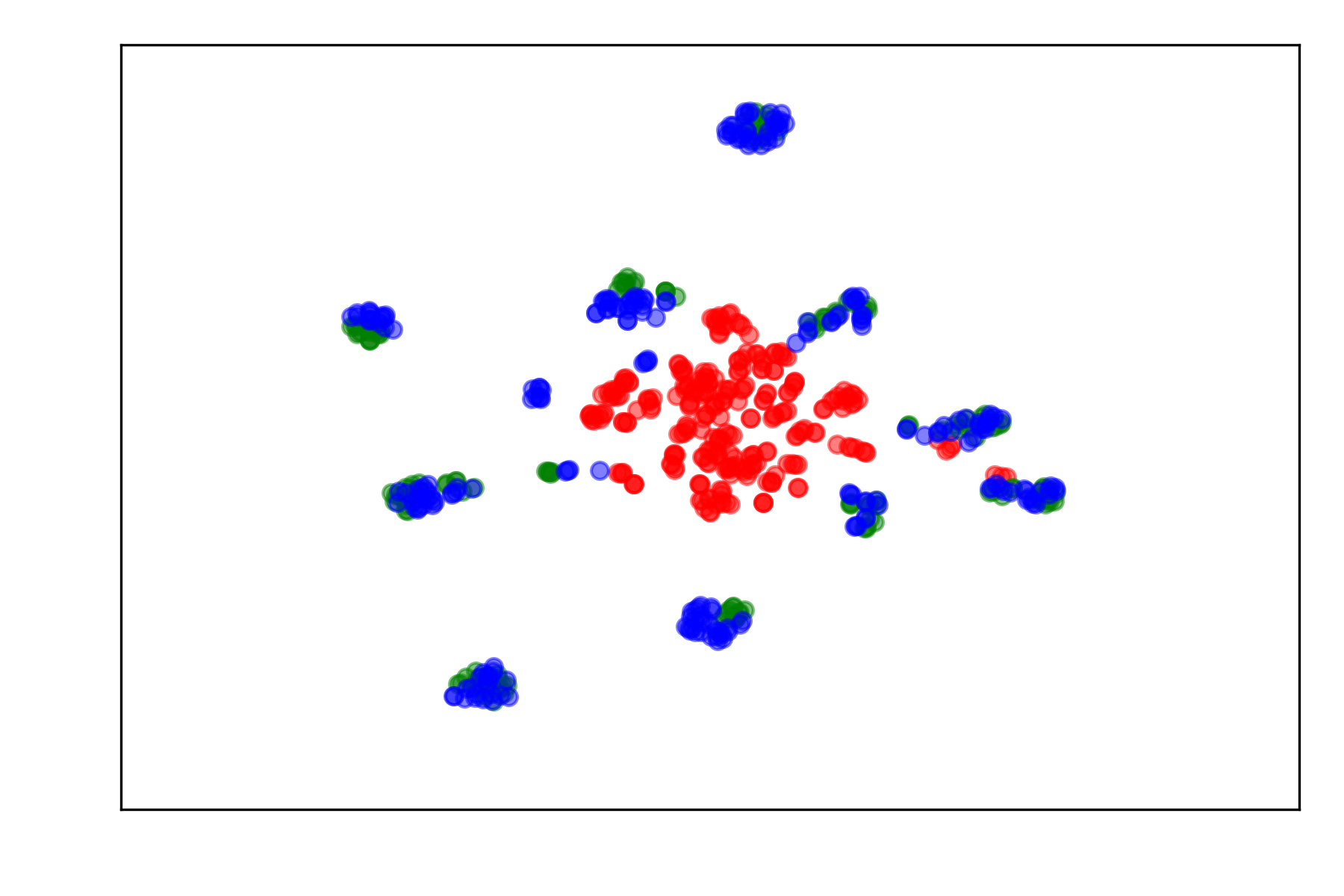}
    \caption{\small{UADAL}}
    \label{sup-fig:tsne_uadal}
\end{subfigure} \hspace{-0.2em}%
\begin{subfigure}[t]{0.16\textwidth}
    \includegraphics[width=\textwidth]{figs/Tsne_cUADAL_resnet50_D_W_0.png}
    \caption{\small{cUADAL}}
    \label{sup-fig:tsne_uadal}
\end{subfigure} \hspace{-0.2em}%
}
\caption{\small{t-SNE of the features by the ResNet-50 on the task D $\rightarrow$ W of Office-31. (blue: target-known, red: target-unkonwn, green: source) }}
\label{sup-appendix_fig:tsne_resnet}
\end{figure*}

\subsubsection{Proxy $\mathcal{A}$-Distance (PAD) } \label{sup-appendix_proxy}
Proxy $\mathcal{A}$-Distance (PAD) is an empirical measure of distance between domain distributions, which is proposed by \cite{ganin2016domain}. Given a generalization error $\epsilon$ of discriminating data which sampled by the domain distributions, PAD value can be computed as $\hat{d}_{\mathcal{A}}=2(1-2\epsilon)$. We compute the PAD value between target-known and target-unknown features from the feature extractor, $G$. We follow the detailed procedure in \cite{ganin2016domain}. Note that high PAD value means two domain distributions are well discriminated.


\subsubsection{Robust on Early Stage Iterations}\label{sup-appendix:robust_ws}
We investigate the effects of the learning iterations for the early stage training to fit the posterior inference, which is considered as a hyper-parameter of UADAL. 
Figure \ref{sup-fig:res:ws} represents the performance metrics such as OS$^{*}$, UNK, and HOS over the number of the initial training iterations of UADAL.
It shows that UADAL is not sensitive to the number of iterations for the early stage. Taken together with Figure \ref{fig:histo_warm-up} in the main paper, these results represent that our two modality assumption for the target entropy values robustly holds.

\begin{figure*}[h]
\centering 
    \includegraphics[width=0.45\textwidth]{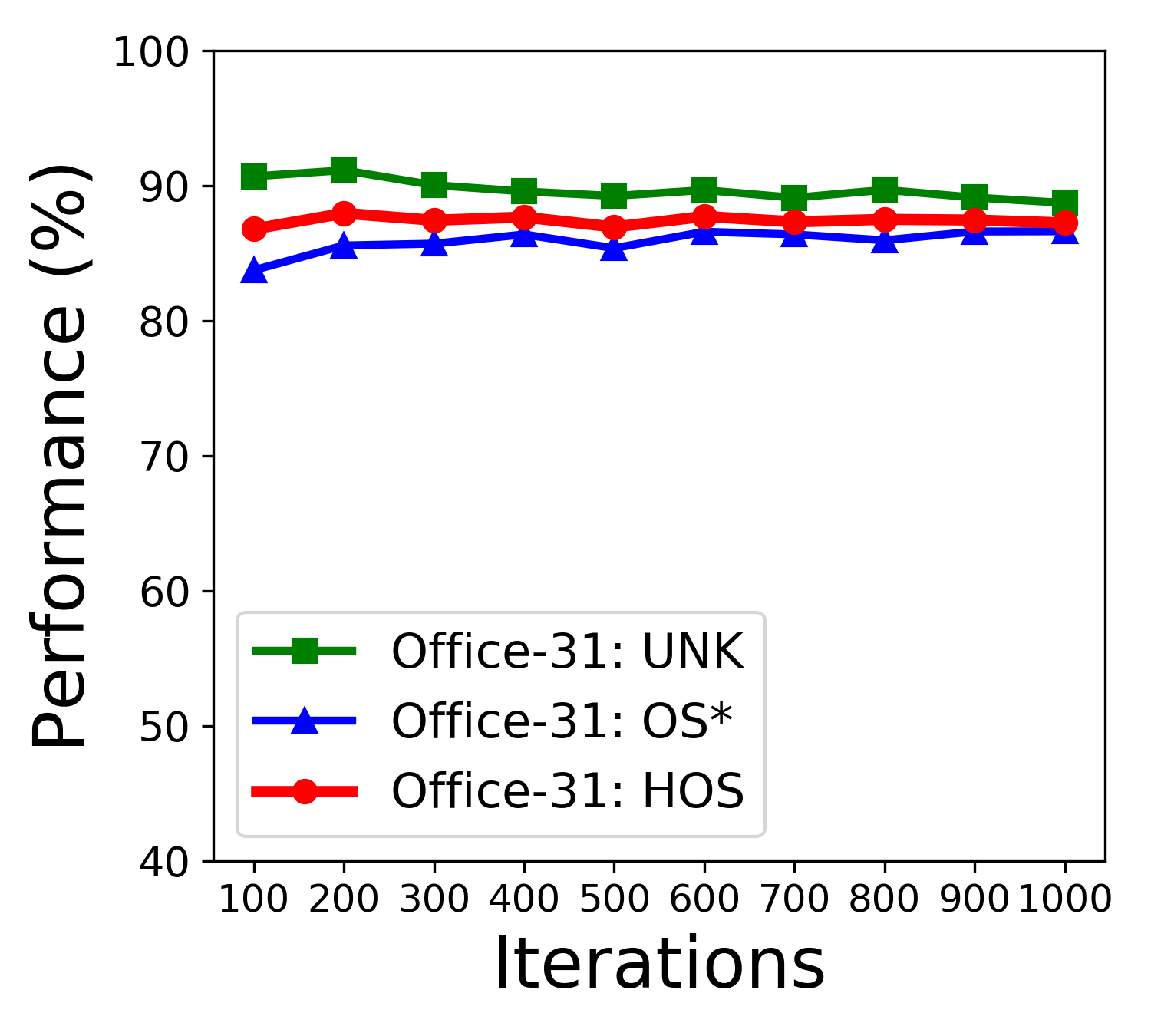}
\caption{Averaged performance over the tasks in Office-31 varying the number of iterations.} \label{sup-fig:res:ws}
\end{figure*}

\newpage

\subsubsection{Ablation Studies on Using Classifier $C$ as Entropy }\label{sup-appendix:EvsC}

\begin{table*}
\centering
\resizebox{\textwidth}{!}{%
\begin{tabular}{c|ccc|ccc|ccc|ccc|ccc|ccc||ccc}
\hline \hline
\multicolumn{22}{c}{\textbf{Office31 (ResNet-50)}} \\ \hline
  \multirow{2}{*}{\textbf{Network}} & \multicolumn{3}{|c}{{A $\rightarrow$ W}} & \multicolumn{3}{|c}{{A $\rightarrow$ D}} & \multicolumn{3}{|c}{{D $\rightarrow$ W}} & \multicolumn{3}{|c}{{W $\rightarrow$ D}} & \multicolumn{3}{|c}{{D $\rightarrow$ A}} & \multicolumn{3}{|c||}{{W $\rightarrow$ A}} & \multicolumn{3}{|c}{\textbf{Avg.}} \\
    & OS*    & UNK    & \textbf{HOS}   & OS*    & UNK    & \textbf{HOS}   & OS*    & UNK    & \textbf{HOS}   & OS*    & UNK    & \textbf{HOS}   & OS*    & UNK    & \textbf{HOS}   & OS*    & UNK    & \textbf{HOS}   &   OS*&UNK &\textbf{HOS}  \\ \hline
\textbf{C} & 80.5 & 92.4 & 86.1 & 79.4 & 91.0 & 84.8 & 99.0 & 98.3 & \textbf{98.6} & 98.7 & 100.0 & 99.3 & 68.7 & 89.7 & 77.9 & 56.5 & 89.1 & 68.9 & 80.5 & 93.4 & 85.9 \\
\textbf{E} & 84.3 & 94.5 & \textbf{89.1} & 85.1 & 87.0 & \textbf{86.0} & 99.3 & 96.3 & 97.8 & 99.5 & 99.4 &\textbf{ 99.5} & 73.3 & 87.3 & \textbf{79.7} & 67.4 & 88.4 & \textbf{76.5} & 84.8 & 92.1 & \textbf{88.1}\\ \hline \hline
\multicolumn{22}{c}{}      \\ \hline\hline
\multicolumn{22}{c}{\textbf{Office-Home (ResNet-50)}}                                                                                                                                                   \\ \hline
  \multirow{2}{*}{\textbf{Network}} & \multicolumn{3}{|c}{P$\rightarrow$R} & \multicolumn{3}{|c}{P$\rightarrow$C} & \multicolumn{3}{|c}{P$\rightarrow$A} & \multicolumn{3}{|c}{A$\rightarrow$P} & \multicolumn{3}{|c}{A$\rightarrow$R} & \multicolumn{3}{|c||}{A$\rightarrow$C} &  \multicolumn{3}{|c}{} \\  
   & OS*    & UNK    & \textbf{HOS}   & OS*    & UNK    & \textbf{HOS}   & OS*    & UNK    & \textbf{HOS}   & OS*    & UNK    & \textbf{HOS}   & OS*    & UNK    & \textbf{HOS}   & OS*    & UNK    & \textbf{HOS}   &   &&  \\ \cline{0-18}
\textbf{C} & 68.8 & 83.7 & 75.5 & 39.3 & 79.4 & 52.6 & 47.9 & 84.3 & 61.1 & 65.0 & 76.3 & 70.2 & 78.4 & 75.6 & 77.0 & 46.7 & 77.2 & 58.2 &  &  &   \\
\textbf{E} & 71.6 & 83.1 & \textbf{76.9} & 43.4 & 81.5 & \textbf{56.6} & 50.5 & 83.7 & \textbf{63.0} & 69.1 & 72.5 & \textbf{70.8} & 81.3 & 73.7 & \textbf{77.4} & 54.9 & 74.7 & \textbf{63.2} &  &  &   \\  \hline 

  \multirow{2}{*}{\textbf{Network}} & \multicolumn{3}{|c}{R$\rightarrow$A} & \multicolumn{3}{|c}{R$\rightarrow$P} & \multicolumn{3}{|c}{R$\rightarrow$C} & \multicolumn{3}{|c}{C$\rightarrow$R} & \multicolumn{3}{|c}{C$\rightarrow$A} & \multicolumn{3}{|c||}{C$\rightarrow$P} &  \multicolumn{3}{|c}{\textbf{Avg.}} \\  
    & OS*    & UNK    & \textbf{HOS}   & OS*    & UNK    & \textbf{HOS}   & OS*    & UNK    & \textbf{HOS}   & OS*    & UNK    & \textbf{HOS}   & OS*    & UNK    & \textbf{HOS}   & OS*    & UNK    & \textbf{HOS}   &   OS*&UNK &\textbf{HOS}  \\   \hline
\textbf{C} & 64.2 & 79.2 & 71.0 & 75.5 & 78.6 & \textbf{77.0} & 44.9 & 71.7 & 55.2 & 63.0 & 75.0 & 68.5 & 49.9 & 77.4 & 60.7 & 57.8 & 79.2 & 66.8 & 58.4 & 78.1 & 66.1 \\
\textbf{E} & 66.7 & 78.6 & \textbf{72.1} & 77.4 & 76.2 & 76.8 & 51.1 & 74.5 & \textbf{60.6} & 69.1 & 78.3 & \textbf{73.4} & 53.5 & 80.5 & \textbf{64.2} & 62.1 & 78.8 & \textbf{69.5} & 62.6 & 78.0 & \textbf{68.7} \\  \hline \hline
\end{tabular}%
}
\caption{{Ablation study for the network $C$ and $E$ to generate the entropy value in UADAL w.r.t. classification accuracies (\%) on Office-31 and Office-Home wiht ResNet-50 (bold: best performer).}} \label{sup-tab:albation_EvsC}
\end{table*}

This part introduces an ablation study for utilizing the classifier $C$ to generate the entropy values instead of an open-set recognizer $E$. 
Using the classifier $C$ (except the last unknown dimension) directly is also feasible, as $E$ does. 
Although it is feasible, however, it leads to wrong decisions for the target-unknown instances. 
As we explained, the network $E$ learns the decision boundary over $\mathcal{C}_s$ classes while the network $C$ does over $\mathcal{C}_{s}+1$ including \textit{unknown} class. 
Especially, in the case of the target-unknown instances, the network $C$ is enforced to classify the instances as \textit{unknown class}, which is $C_{s}+1$-th dimension. 
When optimizing the network $C$ with the target-unknown instances, we expect that the output of the network $C$ would have a higher value on the unknown dimension. 
With this point, if we use the first $C_s$ dimension of $C$ to calculate the entropy value, there is no evidence that the distribution of the $C$'s output is flat over $\mathcal{C}_s$ classes which implies to a higher entropy value. 
Even though the largest predicted value except the last dimension is small, the entropy value might be lower due to imbalance in the output. 
Then, it becomes to be considered as \textit{known} class, which is wrong decision for the target-unknown instances. Therefore, it gives the negative effects on the open-set recognition, and it adversely affects when training the model.

As an ablation experiment, we conduct the experiments to compare using $E$ or $C$ for the entropy. The experimental results on both cases are shown in Table \ref{sup-tab:albation_EvsC} of this section, applied to Office-31 and Offce-Home datasets. The network $E$ in Table \ref{sup-tab:albation_EvsC} is the current UADAL model and $C$ represents that the entropy values are generated by the classifier $C$ without introducing the network $E$. As you can see, the performances with $E$ is better than $C$. It means that the network $E$ learns the decision boundary for the known classes, and it leads to recognize the open-set instances effectively. It should be noted that we utilize the structure of $E$ as an one-layered network to reduce the computation burden.


\subsubsection{Ablation Studies on Entropy Minimization}\label{sup-appendix:entropymin}

The entropy minimization is important part for the fields such as semi-supervised learning \cite{grandvalet2005semi, sohn2020fixmatch} and domain adaptation \cite{long2016unsupervised,liu2019separate,DANCE2020NIPS,ROS2020ECCV} where the label information of the dataset is not available.  In order to show the effect of this term, we conduct the ablation study on the datasets of Office-31 and Office-Home. We provide the experimental results in Table \ref{sup-tab:ablation_entropy_loss}. Combined with the results in the main paper, the experimental result shows that UADAL without the entropy minimization loss still performs better than other baselines. It represents that UADAL learns the feature space appropriately as we intended to suit Open-Set Domain Adaptation. The properly learned feature space leads to effectively classify the target instances without the entropy minimization.

\begin{table*}
\centering
\resizebox{\textwidth}{!}{%
\begin{tabular}{c|ccc|ccc|ccc|ccc|ccc|ccc||ccc}
\hline \hline
\multicolumn{22}{c}{\textbf{Office31 (ResNet-50)}} \\ \hline
\textbf{Entropy} & \multicolumn{3}{|c}{{A $\rightarrow$ W}} & \multicolumn{3}{|c}{{A $\rightarrow$ D}} & \multicolumn{3}{|c}{{D $\rightarrow$ W}} & \multicolumn{3}{|c}{{W $\rightarrow$ D}} & \multicolumn{3}{|c}{{D $\rightarrow$ A}} & \multicolumn{3}{|c||}{{W $\rightarrow$ A}} & \multicolumn{3}{|c}{\textbf{Avg.}} \\
  \textbf{Minimization}  & OS*    & UNK    & \textbf{HOS}   & OS*    & UNK    & \textbf{HOS}   & OS*    & UNK    & \textbf{HOS}   & OS*    & UNK    & \textbf{HOS}   & OS*    & UNK    & \textbf{HOS}   & OS*    & UNK    & \textbf{HOS}   &   OS*&UNK &\textbf{HOS}  \\ \hline
\textbf{X} &85.9 & 84.4 & 85.1 & 84.7 & 83.6 & 84.2 & 95.6 & 98.9 & 97.2 & 98.7 & 100.0 & 99.3 & 75.2 & 86.0 & \textbf{80.3} & 72.6 & 87.4 & \textbf{79.3} & 85.4 & 90.0 & 87.5\\
\textbf{O} & 84.3 & 94.5 & \textbf{89.1} & 85.1 & 87.0 & \textbf{86.0} & 99.3 & 96.3 & \textbf{97.8} & 99.5 & 99.4 &\textbf{99.5} & 73.3 & 87.3 & {79.7} & 67.4 & 88.4 & {76.5} & 84.8 & 92.1 & \textbf{88.1}\\  \hline \hline
\multicolumn{22}{c}{}      \\ \hline\hline

\multicolumn{22}{c}{\textbf{Office-Home (ResNet-50)}}                                                                                                                                                   \\ \hline
 \textbf{Entropy} & \multicolumn{3}{|c}{P$\rightarrow$R} & \multicolumn{3}{|c}{P$\rightarrow$C} & \multicolumn{3}{|c}{P$\rightarrow$A} & \multicolumn{3}{|c}{A$\rightarrow$P} & \multicolumn{3}{|c}{A$\rightarrow$R} & \multicolumn{3}{|c||}{A$\rightarrow$C} &  \multicolumn{3}{|c}{} \\  
 \textbf{Minimization}  & OS*    & UNK    & \textbf{HOS}   & OS*    & UNK    & \textbf{HOS}   & OS*    & UNK    & \textbf{HOS}   & OS*    & UNK    & \textbf{HOS}   & OS*    & UNK    & \textbf{HOS}   & OS*    & UNK    & \textbf{HOS}   &   &&  \\ \cline{0-18}
\textbf{X} & 70.7 & 78.2 & 74.2 & 48.4 & 76.6 & \textbf{59.3} & 49.9 & 76.2 & 60.3 & 64.5 & 79.4 & \textbf{71.2} & 78.8 & 75.2 & 77.0 & 56.3 & 75.1 & \textbf{64.3} &  &  &  \\
\textbf{O} & 71.6 & 83.1 & \textbf{76.9} & 43.4 & 81.5 & {56.6} & 50.5 & 83.7 & \textbf{63.0} & 69.1 & 72.5 & {70.8} & 81.3 & 73.7 & \textbf{77.4} & 54.9 & 74.7 & {63.2} &  &  &   \\  \hline 

 \textbf{}& \multicolumn{3}{|c}{R$\rightarrow$A} & \multicolumn{3}{|c}{R$\rightarrow$P} & \multicolumn{3}{|c}{R$\rightarrow$C} & \multicolumn{3}{|c}{C$\rightarrow$R} & \multicolumn{3}{|c}{C$\rightarrow$A} & \multicolumn{3}{|c||}{C$\rightarrow$P} &  \multicolumn{3}{|c}{\textbf{Avg.}} \\  
 \textbf{}    & OS*    & UNK    & \textbf{HOS}   & OS*    & UNK    & \textbf{HOS}   & OS*    & UNK    & \textbf{HOS}   & OS*    & UNK    & \textbf{HOS}   & OS*    & UNK    & \textbf{HOS}   & OS*    & UNK    & \textbf{HOS}   &   OS*&UNK &\textbf{HOS}  \\   \hline
\textbf{X} &62.3 & 76.4 & 68.6 & 71.3 & 81.0 & 75.8 & 57.3 & 65.6 & \textbf{61.2} & 68.1 & 75.9 & 71.8 & 54.7 & 70.9 & 61.7 & 60.1 & 72.6 & 65.8 & 61.9 & 75.2 & 67.6 \\
\textbf{O} & 66.7 & 78.6 & \textbf{72.1} & 77.4 & 76.2 & \textbf{76.8} & 51.1 & 74.5 & {60.6} & 69.1 & 78.3 & \textbf{73.4} & 53.5 & 80.5 & \textbf{64.2} & 62.1 & 78.8 & \textbf{69.5} & 62.6 & 78.0 & \textbf{68.7} \\  \hline \hline

\end{tabular}%
}
\caption{{Ablation study for the entropy minimization loss in UADAL w.r.t. classification accuracies (\%) on Office-31 and Office-Home wiht ResNet-50 (bold: best performer).}} \label{sup-tab:ablation_entropy_loss}
\end{table*}

\subsubsection{Posterior Inference with Efficiency}\label{sup-appendix:sampling_bmm}

In terms of complexity, the posterior inference increases the computational complexity because we need to fit the mixture model. As we provided at the section \ref{sup-appendix:computational_complexity}, Wall-clock-time is increased as 5\% with the posterior inference in the case of full data utilization. From this +5\% increment, the performance has improved significantly than that without the posterior inference (as shown in Figure \ref{fig:entropy_trhesholding} in the paper). In addition, by utilizing the posterior inference, we avoid introducing any extra hyper-parameter to recognize the unknown instances, which is also our contribution. 

As an alternative, we fit the mixture model only by sampling the target instances in order to reduce the computation time because the computational complexity is $O(nk)$ where $n$ is the number of samples and $k$ is the number of fitting iterations (we fixed it as 10). Figure \ref{sup-appendix_fig:target_sample} represents the wall-clock time and the performance measures by sampling ratio (\%) for the target domain. Since the computational complexity is linearly increased by the number of samples, the wall-clock time is also linearly increased by increasing the sampling ratio. Interestingly, we observed that even though the sampling ratio is small, i.e. 10\%, the performances of UADAL w.r.t. HOS, OS*, and UNK does not decreased, on both Office-31 and Office-Home datasets.

\begin{figure*}[h!]
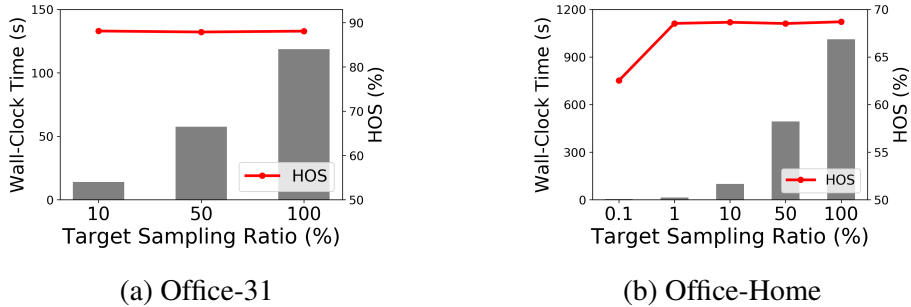

\centering
\resizebox{\textwidth}{!}{%
\begin{subfigure}[h]{0.4\textwidth} \centering
    \includegraphics[width=0.75\textwidth]{figs/abalation_sampling_office31_v2.png}
    \caption{{Office-31}}
    \label{sup-fig:abl_sample_31}
\end{subfigure}%
\begin{subfigure}[h]{0.4\textwidth}\centering
    \includegraphics[width=0.75\textwidth]{figs/abalation_sampling_officehome_v2.png}
    \caption{{Office-Home}}
    \label{sup-fig:abl_sample_home}
\end{subfigure}%
}
\caption{{Quantitative analysis for ablation study of applying sampling on the target domain with Office-31 (a) and Office-Home (b). Each subfigure represents the Wall-Clock Time (s) increased by fitting process of the mixture model during training and HOS ($\%$) over the target sampling ratio. (All records are the averaged values over the tasks in each dataset, not just single task.)}}
\label{sup-appendix_fig:target_sample}
\end{figure*}
In order to investigate the robustness on the sampling ratio, we provide the qualitative analysis in Figure \ref{sup-appendix_fig:target_sample_quali}. For each sampling ratio, the left figure represents the original target entropy distribution, and the middle shows the sampled target entropy values and the fitted BMM densities. Finally, the right figure represents the weight distribution by the posterior inference. As you can see, our posterior inference takes the entropy values, and fits the mixture model without any thresholds. Therefore, even if the sampling ratio is small, the observation that the target-unknown instances have higher entropy values than the target-known instances still holds. Therefore, the open-set recognition on the target domain is still informative, and it leads to maintain the performances of UADAL.

\begin{figure*}[h!]
\centering
\begin{subfigure}[h]{0.95\textwidth}
    \includegraphics[width=0.95\textwidth]{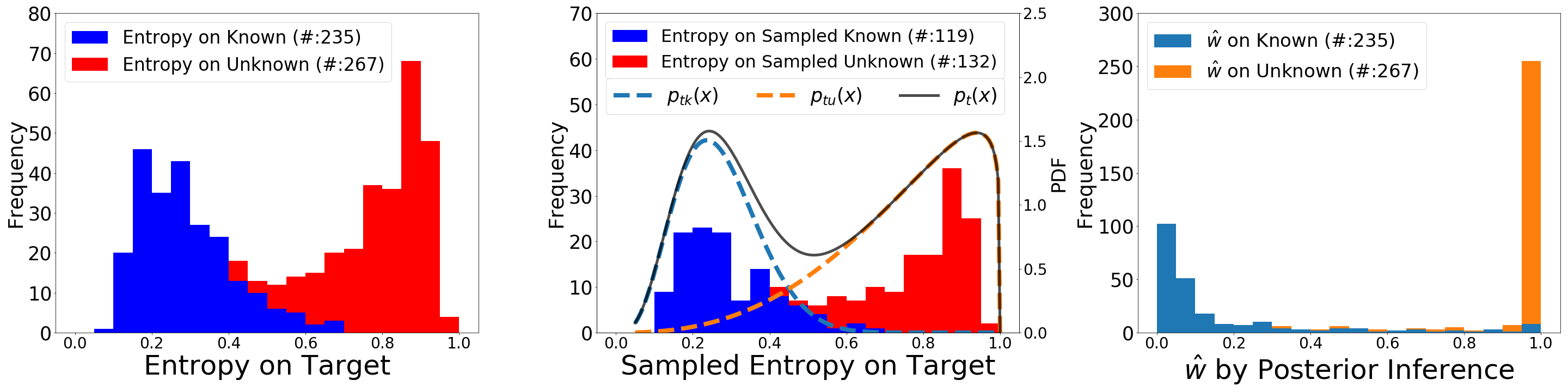}
    \caption{{Target Sampling Ratio: 50$\%$}}
    \label{sup-fig:abl_sample_31}
\end{subfigure} 
\begin{subfigure}[h]{0.95\textwidth}
    \includegraphics[width=0.95\textwidth]{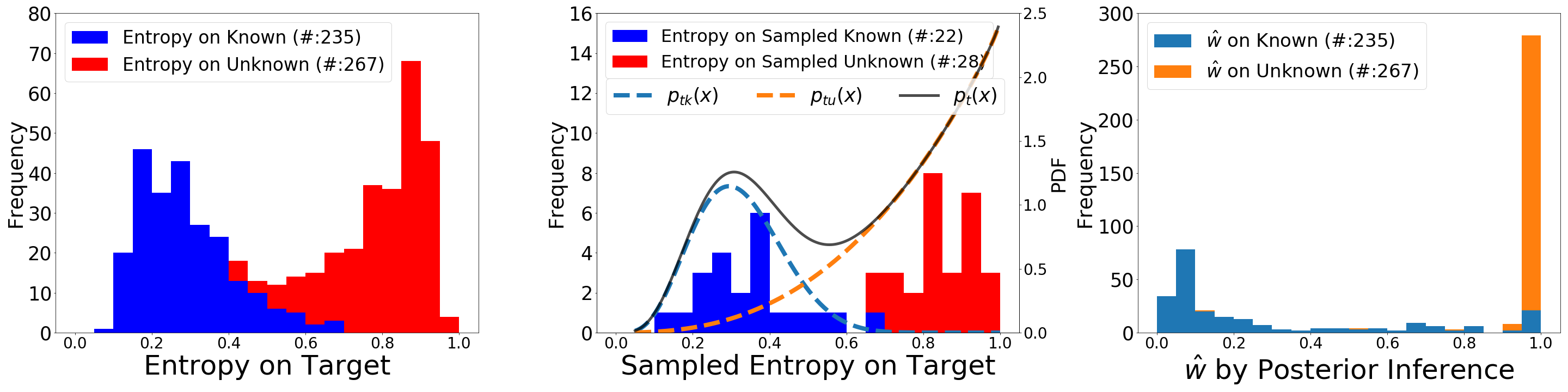}
    \caption{{Target Sampling Ratio: 10$\%$}}
    \label{sup-fig:abl_sample_home}
\end{subfigure} \hspace{-0.2em}
\caption{Qualitative analysis for ablation study of applying sampling on the target domain (D$\rightarrow$W task in Office-31). The subfigure (a) and (b) represent the different sampling ratio, 50$\%$ and 10$\%$, respectively. Each subfigure consists of 1) left: the original target entropy distribution, 2) midde: the sampled target entropy distribution with the fitted Beta Mixture Model (BMM), and 3) right: the weight ($\hat{w}$) distribution (by the fitted BMM in middle) on the target domain.}
\label{sup-appendix_fig:target_sample_quali}
\end{figure*}

\subsubsection{Full Experimental Results with All Metrics}\label{sup-appendix:full_table}

As a reminder, HOS metric is a harmonic mean of OS* and UNK where OS* is accuracy for the known class classification and UNK is for the unknown classification. 
Since Open-Set Domain Adaptation should perform well on both tasks, we choose HOS metric as a primary metric. 
For other metrics such as OS, OS*, and UNK, we provide the full experimental results including OS, OS*, and UNK in this section. 
First of all, we provide the summary table of the experimental results with the officially reported performances of the baselines, which is denoted as * for reliable and fair comparisons. 
Table \ref{sup-tab:summary} in this appendix shows that UADAL outperforms the baselines over all datasets, in the conventional setting of the backbone networks (such as Office-31/Office-Home with ResNet-50 and VisDA with VGGNet). 
The detailed results are shown in Table \ref{sup-tab:appendix_res:full_office31} for Office-31 and Table \ref{sup-tab:appendix_res:full_officehome_resnet} for Office-Home, in this appendix.


\begin{table*}[h]
\centering
\resizebox{0.8\textwidth}{!}{%
\begin{tabular}{c|cccc|cccc|cccc}
 \hline \hline
   \multirow{2}{*}{\textbf{Method}}    & \multicolumn{4}{|c|}{\textbf{Office31 (ResNet-50)}} & \multicolumn{4}{c|}{\textbf{Office-Home (ResNet-50)}} & \multicolumn{4}{c}{\textbf{VisDA (VGGNet)}} \\   \cline{2-13}
       &OS    & OS*   & UNK & \textbf{\underline{Avg. HOS}}   &OS    & OS*   & UNK & \textbf{\underline{Avg. HOS}} & OS    & OS*   & UNK & \textbf{\underline{HOS}}\\ \hline
DANN & 85.4 & 87.1 & 68.3 & 75.9\small{$\pm$0.5} & 53.5 & 52.6 & 77.1 & 60.7\small{$\pm$0.2} & - & - & - & - \\
CDAN & 86.1 & 88.3 & 63.9 & 73.4\small{$\pm$1.3} & 55.3 & 54.5 & 74.6 & 61.4\small{$\pm$0.3} & - & - & - & - \\
OSBP$^{*}$ & 86.6 & 87.2 & 80.4 & 83.7\small{$\pm$0.4} & 64.2 & 64.1 & 66.3 & 64.7\small{$\pm$0.2} & 62.9 & 59.2 & 85.1 & 69.8 \\
STA$^{*}$ & 82.5 & 84.3 & 64.8 & 72.5\small{$\pm$0.8} & 61.9 & 61.8 & 63.3 & 61.1\small{$\pm$0.3} & 66.8 & 63.9 & 84.2 & 72.7 \\
PGL$^{*}$ & 81.1 & 82.7 & 64.7 & 72.6\small{$\pm$1.5} & 74.1 & 76.1 & 25.0 & 35.2 & 80.7 & 82.8 & 68.1 & 74.7 \\
ROS$^{*}$ & 86.5 & 86.6 & 85.8 & 85.9\small{$\pm$0.2} & 62.0 & 61.6 & 72.4 & 66.2\small{$\pm$0.3} & - & - & - & - \\
DANCE & 91.0 & 94.0 & 60.2 & 73.1\small{$\pm$1.0} & 72.8 & 74.4 & 35.0 & 44.2\small{$\pm$0.6} & - & - & - & - \\
DCC$^{*}$ & - & - & - & 86.8 & - & - & - & 64.2 & 68.8 & 68.0 & 73.6 & 70.7 \\
LGU$^{*}$ & - & - & - & - & 71.4 & 72.7 & 38.9 & 50.7 & 70.1 & 69.2 & 75.5 & 72.2 \\
OSLPP$^{*}$ & 89.0 & 89.3 & 85.6 & 87.4 & 64.1 & 63.8 & 71.7 & 67.0 & - & - & - & - \\ \hline
\textbf{UADAL} & 85.5 & 84.8 & 92.1 & \underline{88.1\small{$\pm$0.2}} & 63.1 & 62.6 & 78.0 & \textbf{68.7\small{$\pm$0.2}} & 67.4 & 63.1 & 93.3 & \underline{75.3} \\
\textbf{cUADAL} & 85.6 & 84.8 & 93.0 & \textbf{88.5\small{$\pm$0.3}} & 63.1 & 62.5 & 77.6 & \underline{68.5\small{$\pm$0.1}} & 68.3 & 64.3 & 92.6 & \textbf{75.9}\\ \hline  \hline
\end{tabular}%
}
\caption{{Summary of the OSDA experimental results. The results in Office-31 and Office-Home are the averaged accuracies over the tasks because there are the multiple domains. (bold: best performer, underline: second-best performer, $^{*}$: officially reported performances.)}} \label{sup-tab:summary}
\end{table*}

\begin{table*}
\centering
\resizebox{\textwidth}{!}{%
\begin{tabular}{c|cccc|cccc|cccc||cccc}
\hline \hline
\multicolumn{17}{c}{\textbf{Office31 (ResNet-50)}} \\ \hline
& \multicolumn{4}{|c}{{A $\rightarrow$ W}} & \multicolumn{4}{|c}{{A $\rightarrow$ D}} & \multicolumn{4}{|c||}{{D $\rightarrow$ W}} &  &&&\\
\textbf{Model}& OS    & OS*   & UNK & \textbf{HOS}   &OS    & OS*   & UNK & \textbf{HOS} & OS    & OS*   & UNK & \textbf{HOS}  & &&&\\ \cline{1-13} 
DANN & 84.5 & 87.4 & 55.7 & 68.1 & 87.9 & 90.8 & 59.2 & 71.5 & 97.3 & 99.3 & 77.0 & 86.7 &  &  &  &  \\
CDAN & 86.7 & 90.3 & 50.7 & 64.9 & 88.6 & 92.2 & 52.4 & 66.8 & 97.2 & 99.6 & 73.2 & 84.3 &  &  &  &  \\
OSBP & 86.1 & 86.8 & 79.2 & 82.7 & 89.1 & 90.5 & 75.5 & 82.4 & 97.6 & 97.7 & 96.7 & 97.2 &  &  &  &  \\
STA & 85.0 & 86.7 & 67.6 & 75.9 & 88.5 & 91.0 & 63.9 & 75.0 & 90.6 & 94.1 & 55.5 & 69.8 &  &  &  &  \\
PGL & 81.4 & 82.7 & 67.9 & 74.6 & 80.5 & 82.1 & 65.4 & 72.8 & 85.7 & 87.5 & 68.1 & 76.5 &  &  &  &  \\
ROS & 87.3 & 88.4 & 76.7 & 82.1 & 86.6 & 87.5 & 77.8 & 82.4 & 98.7 & 99.3 & 93.0 & 96.0 &  &  &  &  \\
DANCE & 94.3 & 98.7 & 50.7 & 66.9 & 92.8 & 96.5 & 55.9 & 70.7 & 97.0 & 100.0 & 66.8 & 80.0 &  &  &  &  \\
DCC & - & - & - & 87.1 & - & - & - & 85.5 & - & - & - & 91.2 &  &  &  &  \\
LGU & - & - & - & - & - & - & - & - & - & - & - & - &  &  &  &  \\
OSLPP & 89.4 & 89.5 & 88.4 & 89.0 & 92.4 & 92.6 & 90.4 & \textbf{91.5} & 96.1 & 96.9 & 88.0 & 92.3 &  &  &  &  \\ \cline{1-13}
\textbf{UADAL} & 85.3 & 84.3 & 94.5 & \underline{89.1} & 85.2 & 85.1 & 87.0 & 86.0 & 99.0 & 99.3 & 96.3 & \underline{97.8} &  &  &  &  \\
\textbf{cUADAL} & 86.4 & 85.5 & 95.1 &\textbf{90.1} & 86.0 & 85.6 & 90.4 & \underline{87.9} & 98.6 & 98.7 & 97.7 &\textbf{98.2} &  &  &  &  \\ \hline \hline
  & \multicolumn{4}{|c}{{W $\rightarrow$ D}} & \multicolumn{4}{|c}{{D $\rightarrow$ A}} & \multicolumn{4}{|c||}{{W $\rightarrow$ A}} & \multicolumn{4}{|c}{\textbf{AVG.}}\\
  & OS    & OS*   & UNK & \textbf{HOS}   &OS    & OS*   & UNK & \textbf{HOS} & OS    & OS*   & UNK & \textbf{HOS}  & OS    & OS*   & UNK & \textbf{HOS}\\ \hline
DANN & 97.3 & 100.0 & 70.2 & 82.5 & 73.0 & 72.9 & 74.5 & 73.7 & 72.2 & 72.1 & 73.1 & 72.6 & 85.4 & 87.1 & 68.3 & 75.9$\pm$0.5 \\
CDAN & 97.0 & 100.0 & 67.3 & 80.5 & 74.5 & 74.9 & 70.6 & 72.7 & 72.5 & 72.8 & 69.3 & 71.0 & 86.1 & 88.3 & 63.9 & 73.4$\pm$1.3 \\
OSBP & 97.7 & 99.1 & 84.2 & 91.1 & 75.8 & 76.1 & 72.3 & 75.1 & 73.1 & 73.0 & 74.4 & 73.7 & 86.6 & 87.2 & 80.4 & 83.7$\pm$0.4 \\
STA & 83.3 & 84.9 & 67.8 & 75.2 & 81.5 & 83.1 & 65.9 & 73.2 & 66.4 & 66.2 & 68.0 & 66.1 & 82.5 & 84.3 & 64.8 & 72.5$\pm$0.8 \\
PGL & 81.1 & 82.8 & 64.0 & 72.2 & 78.8 & 80.6 & 61.2 & 69.5 & 79.1 & 80.8 & 61.8 & 70.1 & 81.1 & 82.7 & 64.7 & 72.6$\pm$1.5 \\
ROS & 99.9 & 100.0 & 99.4 & \textbf{99.7} & 75.4 & 74.8 & 81.2 & 77.9 & 71.2 & 69.7 & 86.6 & 77.2 & 86.5 & 86.6 & 85.8 & 85.9$\pm$0.2 \\
DANCE & 97.6 & 100.0 & 73.7 & 84.8 & 82.4 & 85.3 & 53.6 & 65.8 & 81.6 & 83.7 & 60.6 & 70.2 & 91.0 & 94.0 & 60.2 & 73.1$\pm$1.0 \\
DCC & - & - & - & 87.1 & - & - & - & \textbf{85.5} & - & - & - & \textbf{84.4} & - & - & - & 86.8 \\
LGU & - & - & - & - & - & - & - & - & - & - & - & - & - & - & - & - \\
OSLPP & 95.4 & 95.8 & 91.5 & 93.6 & 81.6 & 82.1 & 76.6 & 79.3 & 78.9 & 78.9 & 78.5 & \underline{78.7} & 89.0 & 89.3 & 85.6 & 87.4 \\ \hline
\textbf{UADAL} & 99.5 & 99.5 & 99.4 & \underline{99.5} & 74.5 & 73.3 & 87.3 & 79.7 & 69.3 & 67.4 & 88.4 & 76.5 & 85.5 & 84.8 & 92.1 & \underline{88.1$\pm$0.2} \\
\textbf{cUADAL} & 99.3 & 99.3 & 99.4 & 99.4 & 75.4 & 74.2 & 87.8 & \underline{80.5} & 67.6 & 65.6 & 87.8 & 75.1 & 85.6 & 84.8 & 93.0 & \textbf{88.5$\pm$0.3} \\ \hline \hline
\end{tabular}%
}
\caption{{Classification accuracy (\%) on Office-31 dataset using ResNet-50 as the backbone network. (bold: best performer, underline: second-best performer)}} \label{sup-tab:appendix_res:full_office31}
\end{table*}

 \begin{table*}[hbt!]
\centering
\resizebox{\textwidth}{!}{%
\begin{tabular}{c|cccc|cccc|cccc|cccc|cccc}
\hline \hline 
\multicolumn{21}{c}{\textbf{Office-Home (ResNet-50)}}                                                                                                                                                   \\ \hline
      & \multicolumn{4}{|c}{P$\rightarrow$R} & \multicolumn{4}{|c}{P$\rightarrow$C} & \multicolumn{4}{|c}{P$\rightarrow$A} & \multicolumn{4}{|c|}{A$\rightarrow$P} & && \\  
  \textbf{Model}    & OS    & OS*   & UNK & \textbf{HOS}   &OS    & OS*   & UNK & \textbf{HOS} & OS    & OS*   & UNK & \textbf{HOS}  & OS    & OS*   & UNK & \textbf{HOS} &  \multicolumn{4}{|c}{}   \\ \cline{1-17}
DANN & 67.9 & 67.7 & 72.0 & 69.8 & 32.3 & 30.1 & 86.3 & 44.6 & 44.0 & 42.4 & 83.9 & 56.3 & 60.4 & 60.0 & 71.3 & 65.2 \\
CDAN & 69.8 & 69.8 & 69.7 & 69.7 & 35.0 & 33.1 & 82.4 & 47.2 & 47.1 & 45.8 & 81.2 & 58.6 & 62.0 & 61.7 & 68.8 & 65.1 \\
OSBP & 76.0 & 76.2 & 71.7 & 73.9 & 45.3 & 44.5 & 66.3 & 53.2 & 59.4 & 59.1 & 68.1 & \underline{63.2} & 71.3 & 71.8 & 59.8 & 65.2 \\
STA & 75.7 & 76.2 & 64.3 & 69.5 & 45.1 & 44.2 & 67.1 & 53.2 & 54.9 & 54.2 & 72.4 & 61.9 & 67.2 & 68.0 & 48.4 & 54.0 \\
PGL & 82.6 & 84.8 & 27.6 & 41.6 & 58.4 & 59.2 & 38.4 & 46.6 & 72.2 & 73.7 & 34.7 & 47.2 & 77.1 & 78.9 & 32.1 & 45.6 \\
ROS & 71.1 & 70.8 & 78.4 & 74.4 & 47.5 & 46.5 & 71.2 & 56.3 & 57.6 & 57.3 & 64.3 & 60.6 & 68.5 & 68.4 & 70.3 & 69.3 \\
DANCE & 84.2 & 86.5 & 27.1 & 41.2 & 48.9 & 48.2 & 67.4 & 55.7 & 69.7 & 70.7 & 43.9 & 54.2 & 82.2 & 84.0 & 35.4 & 49.8 \\
DCC &  -& - &-  & 64.0 &-  &-  &-  & 52.8 &- & - & - & 59.5 &  -& - &-  & 67.4 \\
LGU & 81.2 & 82.8 & 41.2 & 55.0 & 53.1 & 54.5 & 18.1 & 27.2 & 68.4 & 69.1 & 50.9 & 58.6 & 79.3 & 80.5 & 49.3 & 61.2 \\
OSLPP & 76.8 & 77.0 & 71.2 & 74.0 & 53.6 & 53.1 & 67.1 & \textbf{59.3} & 55.4 & 54.6 & 76.2 & \textbf{63.6} & 72.5 & 72.5 & 73.1 & \textbf{72.8} \\ \cline{1-17}
\textbf{UADAL} & 72.1 & 71.6 & 83.1 & \textbf{76.9} & 44.9 & 43.4 & 81.5 & \underline{56.6} & 51.8 & 50.5 & 83.7 & 63.0 & 69.2 & 69.1 & 72.5 & 70.8 \\
\textbf{cUADAL} & 71.7 & 71.2 & 83.4 & \underline{76.8} & 42.7 & 41.2 & 80.7 & 54.6 & 52.1 & 50.9 & 82.4 & 62.9 & 69.6 & 69.4 & 73.9 & \underline{71.6}\\ \hline \hline
 & \multicolumn{4}{|c}{A$\rightarrow$R} & \multicolumn{4}{|c|}{A$\rightarrow$C}  & \multicolumn{4}{|c}{R$\rightarrow$A} & \multicolumn{4}{|c}{R$\rightarrow$P}&  \multicolumn{4}{|c}{} \\
      &OS    & OS*   & UNK & \textbf{HOS}   &OS    & OS*   & UNK & \textbf{HOS} & OS    & OS*   & UNK & \textbf{HOS}  & OS    & OS*   & UNK & \textbf{HOS}   &  \multicolumn{4}{|c}{}  \\  \cline{1-17}
DANN & 74.8 & 75.1 & 67.3 & 71.0 & 38.9 & 37.1 & 82.7 & 51.2 & 57.6 & 56.8 & 77.1 & 65.4 & 69.5 & 69.6 & 67.2 & 68.4 \\
CDAN & 74.8 & 75.2 & 66.7 & 70.7 & 41.2 & 39.7 & 78.9 & 52.9 & 60.4 & 59.8 & 73.6 & 66.0 & 70.6 & 70.9 & 64.6 & 67.6 \\
OSBP & 78.8 & 79.3 & 67.5 & 72.9 & 50.6 & 50.2 & 61.1 & 55.1 & 66.1 & 66.1 & 67.3 & 66.7 & 76.0 & 76.3 & 68.6 & 72.3 \\
STA & 77.9 & 78.6 & 60.4 & 68.3 & 47.0 & 46.0 & 72.3 & 55.8 & 67.5 & 67.5 & 66.7 & 67.1 & 76.3 & 77.1 & 55.4 & 64.5 \\
PGL & 85.9 & 87.7 & 40.9 & 55.8 & 61.6 & 63.3 & 19.1 & 29.3 & 78.6 & 81.5 & 6.1 & 11.4 & 83.0 & 84.8 & 38.0 & 52.5 \\
ROS & 75.9 & 75.8 & 77.2 & 76.5 & 51.5 & 50.6 & 74.1 & 60.1 & 67.1 & 67.0 & 70.8 & 68.8 & 72.3 & 72.0 & 80.0 & 75.7 \\
DANCE & 87.4 & 89.8 & 25.3 & 39.4 & 54.4 & 54.4 & 53.7 & 53.1 & 76.8 & 79.2 & 16.7 & 27.5 & 84.1 & 86.2 & 29.6 & 44.0 \\
DCC & - & - & - & \textbf{80.6} & - & - & - & 52.9 & - & - & - & 56.0 & - & - & - & 62.7 \\
LGU & 85.0 & 86.5 & 47.5 & 61.3 & 57.6 & 58.6 & 32.6 & 41.9 & 76.4 & 77.5 & 48.9 & 60.0 & 81.8 & 83.2 & 46.8 & 59.9 \\
OSLPP & 79.7 & 80.1 & 69.4 & 74.3 & 56.3 & 55.9 & 67.1 & 61.0 & 61.3 & 60.8 & 75.0 & 67.2 & 78.1 & 78.4 & 70.8 & 74.4 \\ \cline{1-17}
\textbf{UADAL} & 81.0 & 81.3 & 73.7 & 77.4 & 55.7 & 54.9 & 74.7 & \underline{63.2} & 67.1 & 66.7 & 78.6 & \underline{72.1} & 77.3 & 77.4 & 76.2 & \textbf{76.8} \\
\textbf{cUADAL} & 81.8 & 82.2 & 73.3 & \underline{77.5} & 55.8 & 55.0 & 75.6 & \textbf{63.6} & 67.3 & 66.8 & 79.6 & \textbf{72.6} & 77.7 & 77.8 & 75.6 & \underline{76.7}\\ \hline \hline
& \multicolumn{4}{|c}{R$\rightarrow$C} & \multicolumn{4}{|c}{C$\rightarrow$R} & \multicolumn{4}{|c}{C$\rightarrow$A} & \multicolumn{4}{|c|}{C$\rightarrow$P} &  \multicolumn{4}{|c}{\textbf{AVG.}} \\  
      & OS    & OS*   & UNK & \textbf{HOS}   &OS    & OS*   & UNK & \textbf{HOS} & OS    & OS*   & UNK & \textbf{HOS}  & OS    & OS*   & UNK & \textbf{HOS}  & OS    & OS*   & UNK & \textbf{HOS}   \\ \hline
DANN & 38.8 & 37.1 & 80.9 & 50.9 & 61.6 & 61.1 & 73.5 & 66.7 & 45.4 & 43.8 & 84.3 & 57.6 & 51.2 & 50.1 & 77.6 & 60.9 & 53.5 & 52.6 & 77.1 & 60.7$\pm$0.2 \\
CDAN & 41.7 & 40.3 & 75.8 & 52.7 & 62.0 & 61.5 & 73.7 & 67.1 & 46.4 & 44.9 & 82.8 & 58.2 & 52.6 & 51.6 & 76.8 & 61.7 & 55.3 & 54.5 & 74.6 & 61.4$\pm$0.3 \\
OSBP & 48.6 & 48.0 & 63.0 & 54.5 & 71.9 & 72.0 & 69.2 & 70.6 & 59.8 & 59.4 & 70.3 & \underline{64.3} & 66.8 & 67.0 & 62.7 & 64.7 & 64.2 & 64.1 & 66.3 & 64.7$\pm$0.2 \\
STA & 50.3 & 49.9 & 61.1 & 54.5 & 67.0 & 67.0 & 66.7 & 66.8 & 51.9 & 51.4 & 65.0 & 57.4 & 61.7 & 61.8 & 59.1 & 60.4 & 61.9 & 61.8 & 63.3 & 61.1$\pm$0.3 \\
PGL & 66.2 & 68.8 & 0.0 & 0.0 & 68.8 & 70.2 & 33.8 & 45.6 & 82.8 & 85.9 & 5.3 & 10.0 & 72.0 & 73.9 & 24.5 & 36.8 & 74.1 & 76.1 & 25.0 & 35.2 \\
ROS & 52.3 & 51.5 & 73.0 & 60.4 & 65.6 & 65.3 & 72.2 & 68.6 & 54.1 & 53.6 & 65.5 & 58.9 & 60.3 & 59.8 & 71.6 & 65.2 & 62.0 & 61.6 & 72.4 & 66.2$\pm$0.3 \\
DANCE & 59.4 & 60.1 & 41.3 & 48.3 & 81.3 & 83.9 & 18.4 & 30.2 & 71.2 & 72.9 & 28.4 & 40.9 & 74.6 & 76.3 & 32.8 & 45.9 & 72.8 & 74.4 & 35.0 & 44.2$\pm$0.6 \\
DCC & - & - & - & \textbf{76.9} & - & - & - & 67.0 & - & - & - & 49.8 & - & - & - & 66.6 & - & - & - & 64.2 \\
LGU & 62.1 & 63.4 & 29.6 & 40.4 & 76.4 & 77.6 & 46.4 & 58.1 & 65.8 & 67.2 & 30.8 & 42.2 & 69.1 & 71.7 & 4.1 & 7.8 & 71.4 & 72.7 & 38.9 & 50.7 \\
OSLPP & 54.8 & 54.4 & 64.3 & 59.0 & 67.5 & 67.2 & 73.9 & 70.4 & 50.7 & 49.6 & 79.0 & 60.9 & 62.1 & 61.6 & 73.3 & 66.9 & 64.1 & 63.8 & 71.7 & 67.0 \\ \hline
\textbf{UADAL} & 52.0 & 51.1 & 74.5 & \underline{60.6} & 69.4 & 69.1 & 78.3 & \textbf{73.4} & 54.5 & 53.5 & 80.5 & 64.2 & 62.8 & 62.1 & 78.8 & \textbf{69.5} & 63.1 & 62.6 & 78.0 & \textbf{68.7$\pm$0.2} \\
\textbf{cUADAL} & 52.5 & 51.8 & 71.1 & 59.9 & 69.5 & 69.3 & 76.3 & \underline{72.6} & 54.9 & 53.8 & 82.0 & \textbf{65.0} & 61.8 & 61.1 & 77.4 & \underline{68.3} & 63.1 & 62.5 & 77.6 & \underline{68.5$\pm$0.1} \\ \hline \hline

\end{tabular}%
}
\caption{{Classification accuracy (\%) on Office-Home dataset using ResNet-50 as the backbone network. (bold: best performer, underline: second-best performer)}} \label{sup-tab:appendix_res:full_officehome_resnet}
\end{table*}

\section{Limitations and Potential Negative Societal Impacts}\label{sup-sec:limitation}

\textbf{Limitations} Our domain adaptation setting assumes that we have an access to a labeled source dataset and an unlabeled target dataset, simultaneously.
Thus, we may encounter the situation where the access for the source dataset and the target dataset is not available at the same time, i.e. streamlined data gathering. 
In addition, our work solves the Open-Set Domain Adaptation problem. It intrinsically assumes the existence of ‘unknown’ information in the target domain. Our open-set recognition is based on this assumption, thus we fit the mixture model where each mode represents for known/unknown information. We think that the common assumption of the high entropy value on target-unknowns could be considered as a limitation, as well.

\textbf{Potential Negative Societal Impacts} 
Because open-set domain adaptation focuses on categories belonging to the class of the source dataset, it is infeasible to distinguish differences between categories that are only within the target dataset. Therefore, if the source dataset’s categories are not sufficient, important categories within the target dataset may not be classified, which would lead to only limited applications when we have social stratifications.

\end{document}